\documentclass[11pt]{article}
\usepackage{acl}

\usepackage{times}
\usepackage{latexsym}
\usepackage[T1]{fontenc}
\usepackage[utf8]{inputenc}
\usepackage{microtype}
\usepackage{inconsolata}
\usepackage{graphicx}
\usepackage{booktabs}
\usepackage{multirow}
\usepackage{amsmath}
\usepackage{amssymb}
\usepackage[table]{xcolor}
\usepackage{enumitem}
\usepackage{algorithm}
\usepackage{algorithmic}
\usepackage{amsthm}
\usepackage{listings}
\usepackage[most]{tcolorbox}
\definecolor{boxBgOurs}{HTML}{EAF3FB}
\definecolor{boxFrOurs}{HTML}{377EB8}
\definecolor{boxBgAlc}{HTML}{F2EFEA}
\definecolor{boxFrAlc}{HTML}{8C7B5A}
\definecolor{boxBgDS}{HTML}{F5EDED}
\definecolor{boxFrDS}{HTML}{B05B5B}
\definecolor{boxBgAna}{HTML}{F0EAF4}
\definecolor{boxFrAna}{HTML}{6A4CA0}
\definecolor{boxBgImp}{HTML}{EAF4EC}
\definecolor{boxFrImp}{HTML}{4B8B5A}
\definecolor{boxBgRef}{HTML}{FBF1E4}
\definecolor{boxFrRef}{HTML}{C97A30}
\definecolor{boxBgGen}{HTML}{EAF3FB}
\definecolor{boxFrGen}{HTML}{377EB8}
\newtcolorbox{methodcard}[2][]{enhanced, breakable,
  colback=#1!8, colframe=#1!85!black, fonttitle=\bfseries\small,
  boxrule=0.6pt, arc=2pt, left=4pt, right=4pt, top=3pt, bottom=3pt,
  title={#2}, fontupper=\small}
\lstdefinestyle{pylf}{language=Python, basicstyle=\ttfamily\scriptsize,
  keywordstyle=\color{blue!60!black}\bfseries,
  stringstyle=\color{red!60!black},
  commentstyle=\color{gray!70!black}\itshape,
  showstringspaces=false, breaklines=true, columns=fullflexible,
  belowskip=-2pt, aboveskip=2pt, escapeinside={(*}{*)}}

\theoremstyle{definition}

\theoremstyle{plain}
\newtheorem{proposition}{Proposition}

\title{EvoPool: Evolutionary Programmatic Annotation \\
       for Label-Efficient Specialized Supervision}

\author{
  Tianyi Xu$^{1,2}$ \quad Yaolun Zhang$^{1}$ \quad Xuan Ouyang$^{2}$ \quad Huazheng Wang$^{1}$\thanks{Corresponding author.} \\[3pt]
  $^{1}$Oregon State University \quad $^{2}$University of Wisconsin--Madison \\[3pt]
  \texttt{\{xut2, zhanyaol, huazheng.wang\}@oregonstate.edu} \quad \texttt{xouyang7@wisc.edu}
}

\begin{document}
\maketitle

\begin{abstract}
  Large language models excel at general tasks but underperform smaller supervised models in specialized, high-stakes domains where training labels are costly. We address this regime with \textbf{EvoPool}, an evolutionary multi-agent framework inspired by Darwinian evolution. Three specialized agents iteratively propose executable annotator code, a small validation set provides a fitness signal, and a deterministic gate keeps only annotators that pass viability, diversity, and marginal-contribution checks across generations. Pool votes are mapped to soft training labels by EvoAgg, a text-aware aggregator combining semantic features with annotator-vote features. The authored pool runs at near-zero per-example cost and is $4500$ to $31000\times$ faster than LLM annotation on 100K examples. Across 7 of 8 LLM-weak specialized and complex tasks spanning biomedical relation extraction, legal-clause classification, complex reasoning, and dense multi-label biomedical classification, EvoPool beats the strongest LLM annotation baseline by an average $+0.141$ macro-F1, peaking at $+0.301$ on ChemProt and $+0.265$ on PubMed. Code is available at: \url{https://github.com/tianyi0216/EvoPool}.
  \end{abstract}

\section{Introduction}

\begin{figure*}[!t]
  \centering
  \includegraphics[width=\linewidth]{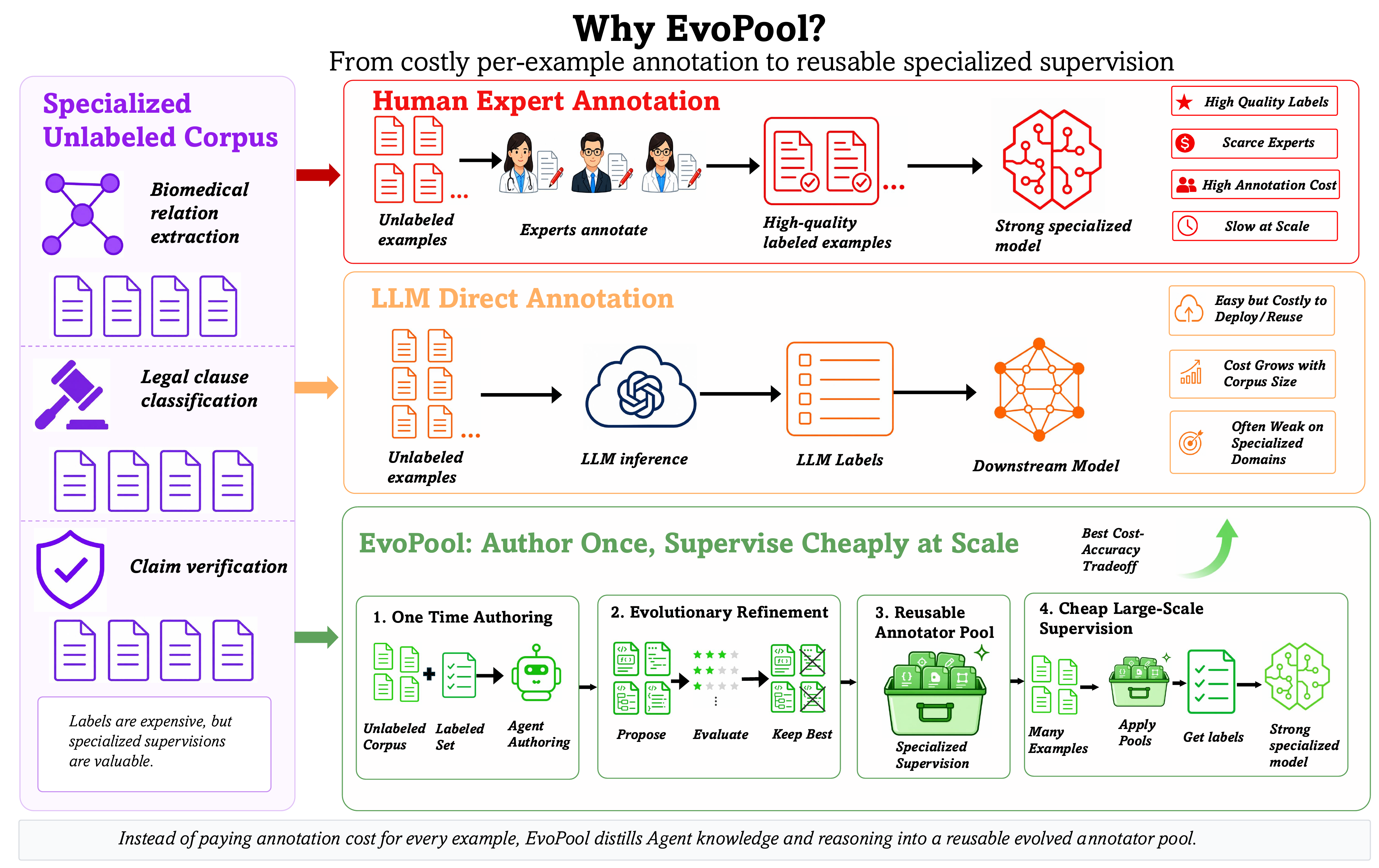}
  \vspace{-15pt}
\caption{\textbf{Why EvoPool?} Three regimes for obtaining specialized supervision when training labels are unavailable. \textbf{(1) Human-expert annotation} provides high-quality labels but depends on scarce experts and scales poorly. \textbf{(2) LLM-direct annotation} is easy to deploy, but requires one LLM call per example, making cost grow with corpus size while often remaining unreliable on specialized domains. \textbf{(3) EvoPool} amortizes LLM reasoning into a reusable pool of executable annotators. Through one-time evolutionary authoring, candidate annotators are proposed, evaluated, and refined into dataset-specific supervision that can be applied cheaply across the full unlabeled corpus to train a strong specialized model.}
  \vspace{-15pt}
  \label{fig:motivation}
\end{figure*}

Specialized, high-stakes domains like biomedical relation extraction \citep{krallinger2017chemprot,herrero2013ddi}, legal-clause classification \citep{zhang2024stronger}, and scientific claim verification \citep{wadden2020scifact} require large amounts of accurately labeled training data. A small model \citep{liu2019roberta} fine-tuned on a few thousand such examples reaches strong macro-F1, but those labels require clinicians, lawyers, or scientific reviewers and routinely cost tens to hundreds of thousands of dollars per dataset. Annotation budgets typically allow only a small validation set, leaving the training corpus unlabeled.

Large language models, with their strong general task knowledge and code-synthesis capability, are a promising route to scalable annotation in such domains. Two natural variants have emerged, but both fall short. \emph{LLM annotation} labels each training example via a per-example LLM inference \citep{zheng2023judging,wang2021gpt3dataaug}. It is slow and expensive at deployment because cost scales linearly with corpus size, and on specialized domains accuracy still trails task-specific models \citep{xu2025specialized}: on ChemProt \citep{krallinger2017chemprot}, gpt-4o-mini annotation reaches only $0.28$ test macro-F1. \emph{LLM annotator synthesis} methods such as Alchemist \citep{alchemist}, DataSculpt \citep{datasculpt}, and EXPONA \citep{guo2024expona} author a pool of executable labeling functions in one LLM shot, building on the data-programming abstraction of Snorkel \citep{ratner2017snorkel,ratner2016dataprogramming}. This is cheap to deploy at inference but the pool inherits the LLM's a priori biases over the label space, ignores observed per-class weakness, and collapses coverage on rare classes.

To address these gaps, we ask: can the annotator pool itself evolve to fit each domain rather than being authored once and frozen? Our answer is \textbf{EvoPool}, an evolutionary multi-agent framework that draws on Darwinian evolution \citep{darwin1859origin} and recent evolutionary methods for data and agents \citep{mi2026datadarwinism,yuan2024evoagent}. Each generation, three specialized agents propose new executable annotators targeting weak classes, the validation set provides a fitness signal, and a deterministic gate keeps annotators that pass viability, diversity, and marginal-contribution checks. Only the surviving pool persists to the next generation, accumulating across $K$ generations as auditable Python rules. This per-class feedback and cross-generation accumulation is what one-shot annotator synthesis and per-example LLM annotation cannot provide, and it lets EvoPool keep improving where they stall. Pool votes are then mapped to soft training labels by EvoAgg, a text-aware aggregator combining a low-dimensional sentence-BERT embedding \citep{reimers2019sbert} with annotator-vote features. Once authored, the pool runs at near-zero per-example cost while LLM annotation scales linearly with the corpus. Figure~\ref{fig:motivation} contrasts these three regimes for specialized supervision.

We evaluate EvoPool on 10 datasets across 4 task categories. On 7 of 8 LLM-weak specialized and complex tasks, EvoPool beats LLM annotation by an average $+0.141$ test macro-F1, with the largest gain exceeding $+0.30$ on biomedical relation extraction. On the remaining 2 tasks, EvoPool matches or beats LLM annotation while still outperforming the stronger of the LLM annotator synthesis methods by $0.36$ to $0.62$. We further evaluate the framework across $4$ LLM backbones and $3$ downstream models that vary in size and architecture, confirming that the cost-accuracy advantage is robust to the choice of agent backbone and downstream architecture. Our contributions are:

\begin{itemize}[leftmargin=13pt,itemsep=2pt]
\item \textbf{EvoPool}, a Darwinian-inspired evolutionary multi-agent framework that authors a pool of executable annotators under deterministic pool-level selection, paired with \textbf{EvoAgg}, a text-aware aggregator that lifts macro-F1 over majority vote by $+0.10$ to $+0.43$ on long-tail and dense multi-label datasets.
\item \textbf{Task-aware programmatic supervision for complex structured tasks}: to our knowledge, the first programmatic-annotation framework evaluated across both complex reasoning with explicit claim/evidence comparison and dense multi-label biomedical classification, two regimes where prior LLM annotator synthesis collapses. Coverage is enabled by task-aware Generators and a text-aware aggregator.
\item \textbf{Best cost-accuracy tradeoff and cross-backbone robustness}: the authored pool is $4500$ to $31000\times$ faster than LLM annotation on 100K examples, and EvoPool outperforms the strongest baseline or stays competitive across 4 LLM backbones and 3 downstream models.
\end{itemize}

\section{Methods}
\label{sec:method}

\begin{figure*}[!t]
  \centering
  \includegraphics[width=\linewidth]{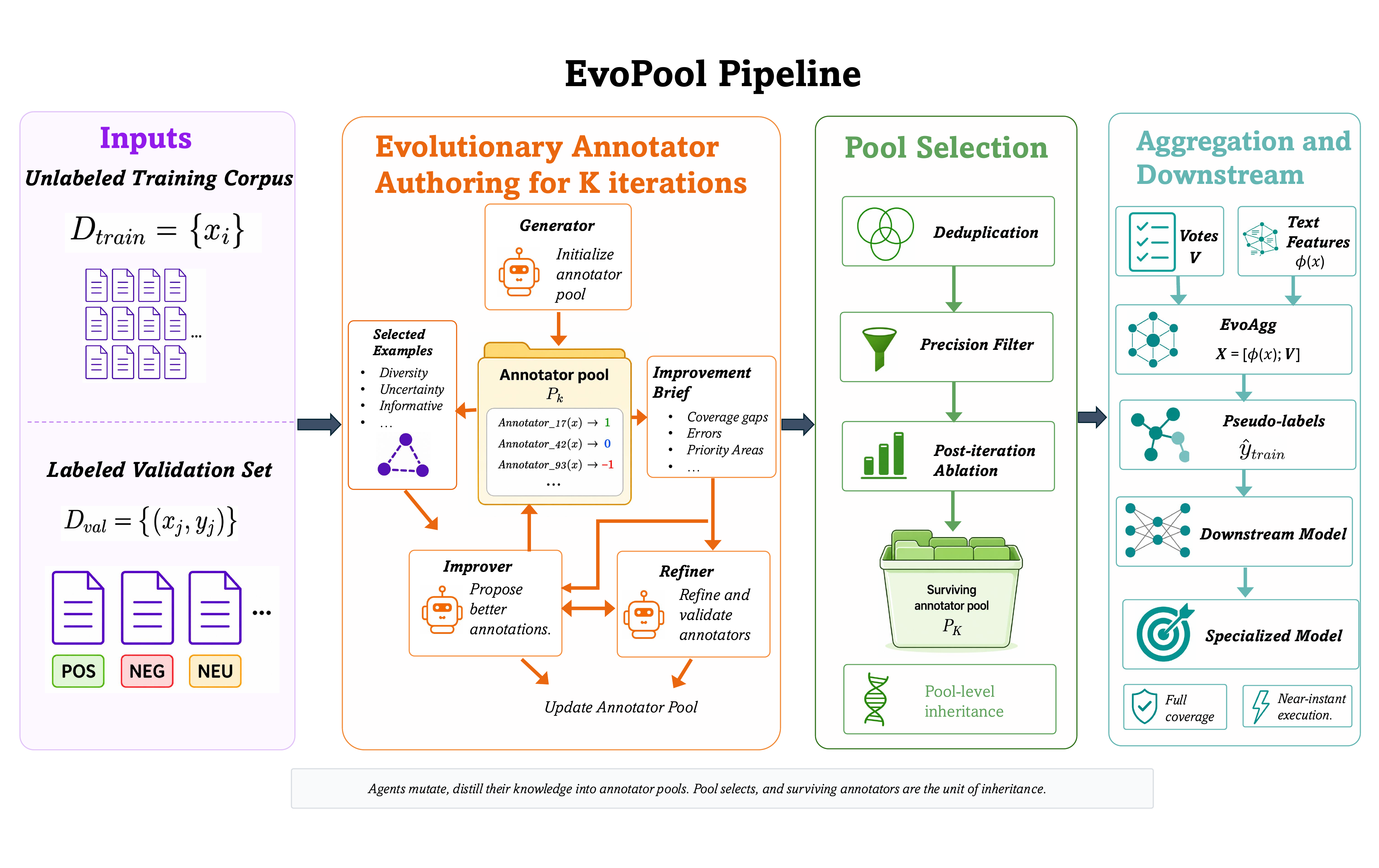}
  \vspace{-30pt}
  \caption{\textbf{Overview of the EvoPool pipeline.} Over $K$ generations, Generator, Improver, and Refiner propose annotators targeting weak classes. A selection gate keeps annotators that pass viability, diversity, and marginal-contribution checks against validation fitness. EvoAgg then aggregates the surviving pool $P_K$ from text features and votes into soft training labels.}
  \vspace{-15pt}
  \label{fig:pipeline}
\end{figure*}

Figure~\ref{fig:pipeline} summarizes our pipeline. EvoPool casts programmatic annotation as evolutionary search over a population of executable annotators: a multi-agent loop introduces variation, a deterministic selection gate enforces fitness, and the surviving pool's votes are aggregated by EvoAgg into soft training labels for a downstream model.

\subsection{Problem Setup}
\label{sec:problem-setup}

Given a corpus of unlabeled training texts $\mathcal{D}_{\text{train}} = \{x_i\}_{i=1}^N$, a small labeled validation set $\mathcal{D}_{\text{val}} = \{(x_j, y_j)\}_{j=1}^M$ with $M \ll N$, and a held-out test set $\mathcal{D}_{\text{test}}$, we aim to author a pool of executable annotators that produces pseudo-labels $\hat{\mathcal{D}}_{\text{train}} = \{(x_i, \hat{y}_i)\}_{i=1}^N$ for the training corpus, on which we then fine-tune a downstream model $f: \mathcal{X} \to \mathcal{Y}$ over $C$ classes. The train-time loop may access $\mathcal{D}_{\text{train}}$ texts and $\mathcal{D}_{\text{val}}$ labels but never $\mathcal{D}_{\text{train}}$ labels.

\subsection{Evolutionary Multi-Agent Loop}

We organize annotator authoring as an evolutionary process with three primitives: \emph{variation} for proposing new annotators, \emph{selection} for retaining the fittest, and \emph{inheritance} for carrying the surviving pool into the next generation. At each generation $k$, three role-specialized LLM agents act on the current pool $P_{k-1}$, and the three roles together partition the exploration-exploitation tradeoff.

\noindent \textbf{Generator (broad variation).} The Generator seeds the initial pool at $k=0$ with diverse candidate annotators that span the label space. Its role is breadth-first exploration of the programmatic-supervision space, providing initial coverage across all classes before any selection is applied.

\noindent \textbf{Improver (targeted variation).} The Improver reads a per-iteration analysis brief that packages the current pool's per-class diagnostics with the query selector's picks. From this brief it identifies the classes where the pool is weakest by validation $F_1$ and concentrates variation there, proposing new annotators for under-performing classes with confusable-class guardrails.

\noindent \textbf{Refiner (exploitation).} The Refiner takes high-precision but low-recall annotators and proposes broader variants that retain the discriminative signal while widening coverage. It exploits already-validated rules to expand coverage around high-fitness candidates rather than introducing fresh diversity.

Each agent is re-invoked from scratch every generation with no persistent state. The pool $P_k$ is the only inheritance channel, making the system strictly Darwinian rather than Lamarckian \citep{lamarck1809philosophie}. Empirically this stateless choice is both cheaper and slightly better than Lamarckian variants with cross-iteration verbal lessons (App.~\ref{app:memory-ablation}), since pool-level selection already supplies the per-class feedback. Algorithm~\ref{alg:evopool} in App.~\ref{app:algorithm} gives the full training-time loop, including the explicit selection gate, the Refiner activation rule, and the per-class targeting logic. App.~\ref{app:design} expands the rationale behind each agent role and selection primitive.

\subsection{Pool-Level Natural Selection}

Selection acts on the augmented pool $P_{k-1} \cup \Delta P_{\text{imp}} \cup \Delta P_{\text{ref}}$ at the end of each generation through three deterministic filters, each motivated by a Darwinian primitive. Formal definitions and thresholds are in App.~\ref{app:selection-impl}.

\noindent \textbf{Niche redundancy.} Annotators whose firing pattern overlaps an existing pool member above a similarity threshold are removed. This keeps the pool diverse without explicit diversity penalties.

\noindent \textbf{Minimum competence.} Annotators that fail a validation-precision floor or fire too rarely are removed. This is the basic viability threshold for entering the pool.

\noindent \textbf{Marginal contribution.} Newly proposed annotators whose removal strictly improves the pool's collective validation score (by at least a small tolerance $\tau_{\text{abl}}$) are dropped through greedy post-iteration ablation. This ensures every newly admitted member contributes positively to population fitness rather than free-riding on other annotators.

The strict-improvement form of this ablation rule gives the post-iteration step a clean monotonicity guarantee: each accepted drop strictly improves the pool's validation macro-F1, so the post-ablation pool always dominates the pre-ablation pool (Prop.~\ref{prop:monotone-quality}, App.~\ref{app:selection-impl}). Empirically the end-of-generation pool is monotone non-decreasing across all our experiments, justifying the Darwinian framing of pool inheritance.

\subsection{Query Selection}
\label{sec:agent-search}

The Improver and Refiner need a signal indicating where the current pool fails, so we design query selection as a principled feedback channel. At each generation we draw a small batch of training examples for the agents to reason over. We use BatchBALD \citep{kirsch2019batchbald} over the annotator ensemble, an information-theoretic acquisition that picks examples maximizing the expected mutual information between unknown labels and annotator predictions, surfacing inputs where the population lacks discriminative power. Unlike uncertainty top-K, BatchBALD measures each example's information given the others in the batch, penalizing internal redundancy. Full derivation is in App.~\ref{app:batchbald-full}.

\subsection{EvoAgg: Text-Aware Aggregation}

The pool $P_K$ casts a vote vector per example, but turning these votes into a label is itself a learning problem. Majority vote assumes annotators are independent and equally reliable and uses no signal beyond the votes themselves. Both assumptions break here: on hard or rare classes the pool may fire only a few times or abstain entirely, and when most annotators fire on a single class, MV collapses to that class regardless of input.

To address these issues, we propose \textbf{EvoAgg}, a learned aggregator that conditions on both votes and the input text. Text is the only signal that does not depend on pool composition: it is present for every example and grounds annotator reliability in actual content. We concatenate a low-dimensional text embedding $\phi(x)$ with the annotator-vote one-hot encoding to form $\mathbf{X} = [\phi(x);\,\mathbf{V}]$, and fit a logistic regression on the validation labels.

The embedding $\phi$ plays three roles. First, it is a fallback signal where the pool is weak or silent, so EvoAgg has coverage $1.0$ by construction. Second, by concatenating rather than predicting separately, the model arbitrates per-example between text and votes. Third, it regularizes the high-dimensional vote space, preventing overfitting on the small validation set.

\begin{table*}[!t]
\caption{\textbf{Annotator-level validation and test macro-F1 across 10 datasets organized into 4 task categories.} All methods use gpt-4o-mini as the LLM backbone. \textbf{Bold} marks the best performance. Full per-dataset Coverage/Acc/Acc-c/F1 breakdown is in App.~\ref{app:full-annotator}.}
\vspace{-10pt}
\label{tab:annotator-main}
\centering
\small
\renewcommand{\arraystretch}{1.06}
\resizebox{0.93\linewidth}{!}{
\begin{tabular}{@{}l|cc|cc|cc|>{\columncolor{cyan!5}}c>{\columncolor{cyan!5}}c@{}}
\toprule
\multirow{2}{*}{Dataset}
 & \multicolumn{2}{c|}{Alchemist}
 & \multicolumn{2}{c|}{DataSculpt}
 & \multicolumn{2}{c|}{LLM annotation (gpt-4o-mini)}
 & \multicolumn{2}{c}{\textbf{EvoPool (ours)}} \\
\cmidrule(lr){2-3}\cmidrule(lr){4-5}\cmidrule(lr){6-7}\cmidrule(lr){8-9}
 & Val & Test & Val & Test & Val & Test & Val & Test \\
\midrule
\multicolumn{9}{c}{\textbf{(1) General classification}} \\
\midrule
AGNews            & 0.4724 & 0.4668 & 0.5151 & 0.4994 & 0.8560 & 0.8628 & \textbf{0.8599} & \textbf{0.8642} \\
Banking77         & 0.0356 & 0.0382 & 0.0871 & 0.0824 & 0.6086 & 0.6261 & \textbf{0.8720} & \textbf{0.7023} \\
\midrule
\multicolumn{9}{c}{\textbf{(2) High-stakes specialized}} \\
\midrule
ChemProt          & 0.2447 & 0.2204 & 0.2035 & 0.1965 & 0.3212 & 0.2817 & \textbf{0.6059} & \textbf{0.5826} \\
DDI               & 0.1239 & 0.1189 & 0.1249 & 0.1134 & 0.2059 & 0.2345 & \textbf{0.5827} & \textbf{0.4539} \\
Claude9          & 0.1607 & 0.1633 & 0.2338 & 0.1458 & \textbf{0.4409} & 0.3320 & 0.3461 & \textbf{0.3645} \\
\midrule
\multicolumn{9}{c}{\textbf{(3) Complex reasoning}} \\
\midrule
FEVER             & 0.1665 & 0.1667 & 0.0464 & 0.0339 & 0.7932 & 0.7599 & \textbf{0.8467} & \textbf{0.8117} \\
VitaminC          & 0.1692 & 0.1681 & 0.0603 & 0.0473 & 0.5993 & 0.6482 & \textbf{0.6999} & \textbf{0.6667} \\
SciFact          & 0.3438 & 0.2855 & 0.2609 & 0.2000 & 0.7040 & \textbf{0.7617} & \textbf{0.7115} & 0.5890 \\
\midrule
\multicolumn{9}{c}{\textbf{(4) Multi-label biomedical}} \\
\midrule
Ohsumed        & 0.1872 & 0.1901 & 0.0722 & 0.0738 & 0.4264 & 0.4433 & \textbf{0.8086} & \textbf{0.5420} \\
PubMed  & 0.1498 & 0.1461 & 0.0064 & 0.0069 & 0.4410 & 0.4375 & \textbf{0.7587} & \textbf{0.7026} \\
\bottomrule
\end{tabular}
}
\vspace{-15pt}
\end{table*}

\section{Experiment Setup}
\label{sec:setup}

\noindent \textbf{Datasets and splits.}
We evaluate EvoPool on 10 datasets organized into four task categories. \textbf{(1) General classification} covers LLM-friendly topic and intent tasks (AGNews \citep{zhang2015agnews}, Banking77 \citep{casanueva2020banking77}). \textbf{(2) High-stakes specialized} targets domains where expert labels are costly and LLM annotation is weak (biomedical relation extraction on ChemProt \citep{krallinger2017chemprot} and DDI \citep{herrero2013ddi}, legal-clause classification on Claude9 \citep{zhang2024stronger}). \textbf{(3) Complex reasoning} covers verification tasks that require explicit claim/evidence comparison (FEVER \citep{thorne2018fever}, VitaminC \citep{schuster2021vitaminc}, SciFact \citep{wadden2020scifact}). For these tasks we precompute a small set of claim-evidence comparison features and attach them to each example's metadata for all methods, as described in App.~\ref{app:enrichment}. \textbf{(4) Multi-label biomedical} probes dense per-class supervision (Ohsumed \citep{hersh1994ohsumed}, PubMed \citep{kaggle_pubmed}). For all datasets, train has no ground-truth labels, validation is used by EvoPool only for selection and aggregator fitting, and test is held out. Full dataset sizes and statistics are in App.~\ref{app:datasets}.

\noindent \textbf{Baselines.}
We compare against five families. \textit{LLM-Eval} performs direct LLM evaluation at test time without annotating training data. \textit{LLM annotation} uses per-example LLM inference to label the training set. \textit{One-shot LLM annotator synthesis} includes Alchemist \citep{alchemist} and DataSculpt \citep{datasculpt}. \textit{Self-Training} \citep{xie2020selftraining} trains a teacher on the validation labels, uses it to pseudo-label the training corpus, and re-trains the downstream model on those pseudo-labels. \textit{Supervised (Golden)} uses available labeled data from the validation set. All methods share the same data, validation labels, and metadata, with pool-based baselines such as Alchemist and DataSculpt using majority vote as their native aggregator. 

\noindent \textbf{Training.}
The EvoPool loop runs for $K$ generations with gpt-4o-mini \citep{openai2023gpt4} as the primary agent backbone. Cross-backbone variants (gpt-5-mini \citep{singh2026openaigpt5card}, gemini-3.1-pro-preview \citep{google2026gemini31pro}, deepseek-v4-flash \citep{deepseekv4}) are evaluated on ChemProt and FEVER. We evaluate 3 downstream models: RoBERTa-large \citep{liu2019roberta}, Qwen3-1.7B \citep{qwen2024qwen3}, and Llama-3.1-8B \citep{touvron2024llama3}. Full training details and hyperparameters in App.~\ref{app:hparams}.

\noindent \textbf{Evaluation.}
We report test macro-F1 on the downstream model and annotator-level macro-F1 on both validation and test. Coverage, accuracy, accuracy on the covered subset, and weighted-F1 are reported alongside in App.~\ref{app:full-annotator}.

\section{Results}
\label{sec:results}

\begin{table*}[tb!]
\caption{\textbf{Downstream model test macro-F1 across 10 datasets $\times$ 3 downstream models, organized into 4 task categories.} Each method's pool produces pseudo-labels for the full train set. Downstream model fine-tuned end-to-end. Models: RoB$=$RoBERTa-large, Qwen$=$Qwen3-1.7B, Llama$=$Llama-3.1-8B. All pools constructed using gpt-4o-mini backbone. Full metrics in App.~\ref{app:full-downstream}.}
\vspace{-8pt}
\label{tab:downstream-main}
\centering

\begin{minipage}[t]{0.48\linewidth}
\centering\footnotesize
\textbf{(1) General classification}\\[2pt]
\setlength{\tabcolsep}{3pt}
\renewcommand{\arraystretch}{0.95}
\begin{tabular}{@{}ll|ccc@{}}
\toprule
Dataset & Method & RoB & Qwen & Llama \\
\midrule
 \multirow{7}{*}{AGNews}
 & Golden    & \textbf{0.9016} & 0.8422 & 0.8658 \\
 & LLM-Eval & 0.8628 & 0.8628 & 0.8628 \\
\cmidrule(lr){2-5}
 & Alchemist           & 0.5732 & 0.5541 & 0.5298 \\
 & DataSculpt            & 0.8222 & 0.7922 & 0.7248 \\
 & LLM-Annotation  & 0.8746 & 0.8701 & 0.8735 \\
 & Self-Train    & 0.8925 & 0.8588 & 0.8803 \\
\cmidrule(lr){2-5}
 & \cellcolor{cyan!5}\textbf{EvoPool} & \cellcolor{cyan!5}0.8860 & \cellcolor{cyan!5}\textbf{0.8951} & \cellcolor{cyan!5}\textbf{0.8859} \\
\cmidrule(lr){1-5}
 \multirow{7}{*}{Banking77}
 & Golden    & 0.4330 & 0.0679 & 0.5117 \\
 & LLM-Eval & 0.6261 & 0.6261 & 0.6261 \\
\cmidrule(lr){2-5}
 & Alchemist           & 0.0491 & 0.0443 & 0.0476 \\
 & DataSculpt            & 0.0929 & 0.0540 & 0.0713 \\
 & LLM-Annotation  & 0.6550 & 0.6361 & 0.6410 \\
 & Self-Train    & \textbf{0.7607} & 0.1220 & 0.6371 \\
\cmidrule(lr){2-5}
 & \cellcolor{cyan!5}\textbf{EvoPool} & \cellcolor{cyan!5}0.7332 & \cellcolor{cyan!5}\textbf{0.7150} & \cellcolor{cyan!5}\textbf{0.7241} \\
\bottomrule
\end{tabular}

\vspace{6pt}

\textbf{(3) Complex reasoning}\\[2pt]
\setlength{\tabcolsep}{3pt}
\renewcommand{\arraystretch}{0.95}
\begin{tabular}{@{}ll|ccc@{}}
\toprule
Dataset & Method & RoB & Qwen & Llama \\
\midrule
 \multirow{7}{*}{FEVER}
 & Golden    & 0.7805 & 0.6477 & 0.6042 \\
 & LLM-Eval & 0.7599 & 0.7599 & 0.7599 \\
\cmidrule(lr){2-5}
 & Alchemist           & 0.1667 & 0.1667 & 0.1667 \\
 & DataSculpt            & 0.3285 & 0.3541 & 0.3434 \\
 & LLM-Annotation  & 0.7428 & 0.7626 & 0.7645 \\
 & Self-Train    & 0.7556 & 0.7099 & 0.6879 \\
\cmidrule(lr){2-5}
 & \cellcolor{cyan!5}\textbf{EvoPool} & \cellcolor{cyan!5}\textbf{0.8132} & \cellcolor{cyan!5}\textbf{0.8262} & \cellcolor{cyan!5}\textbf{0.8435} \\
\cmidrule(lr){1-5}
 \multirow{7}{*}{VitaminC}
 & Golden        & 0.4543 & 0.5065 & 0.4481 \\
 & LLM-Eval & 0.6482 & 0.6482 & 0.6482 \\
\cmidrule(lr){2-5}
 & Alchemist           & 0.1667 & 0.1667 & 0.1667 \\
 & DataSculpt            & 0.2646 & 0.2257 & 0.2600 \\
 & LLM-Annotation  & 0.5919 & 0.6562 & 0.6508 \\
 & Self-Train    & 0.5282 & 0.5174 & 0.4784 \\
\cmidrule(lr){2-5}
 & \cellcolor{cyan!5}\textbf{EvoPool} & \cellcolor{cyan!5}\textbf{0.6476} & \cellcolor{cyan!5}\textbf{0.6659} & \cellcolor{cyan!5}\textbf{0.6797} \\
\cmidrule(lr){1-5}
 \multirow{7}{*}{SciFact}
 & Golden    & 0.2864 & 0.2929 & 0.4096 \\
 & LLM-Eval & 0.7617 & 0.7617 & 0.7617 \\
\cmidrule(lr){2-5}
 & Alchemist           & 0.2051 & 0.2743 & 0.2903 \\
 & DataSculpt            & 0.3159 & 0.3111 & 0.2759 \\
 & LLM-Annotation  & \textbf{0.5919} & \textbf{0.4355} & \textbf{0.4788} \\
 & Self-Train    & 0.3553 & 0.4205 & 0.4060 \\
\cmidrule(lr){2-5}
 & \cellcolor{cyan!5}\textbf{EvoPool} & \cellcolor{cyan!5}0.2890 & \cellcolor{cyan!5}0.3797 & \cellcolor{cyan!5}0.4442 \\
\bottomrule
\end{tabular}
\end{minipage}\hfill
\begin{minipage}[t]{0.48\linewidth}
\centering\footnotesize
\textbf{(2) High-stakes specialized}\\[2pt]
\setlength{\tabcolsep}{3pt}
\renewcommand{\arraystretch}{0.95}
\begin{tabular}{@{}ll|ccc@{}}
\toprule
Dataset & Method & RoB & Qwen & Llama \\
\midrule
 \multirow{7}{*}{ChemProt}
 & Golden    & 0.5194 & 0.2775 & 0.3178 \\
 & LLM-Eval & 0.2817 & 0.2817 & 0.2817 \\
\cmidrule(lr){2-5}
 & Alchemist           & 0.2193 & 0.2194 & 0.2129 \\
 & DataSculpt            & 0.2267 & 0.2084 & 0.1994 \\
 & LLM-Annotation  & 0.2840 & 0.2945 & 0.2809 \\
 & Self-Train    & \textbf{0.5535} & 0.3049 & 0.3253 \\
\cmidrule(lr){2-5}
 & \cellcolor{cyan!5}\textbf{EvoPool} & \cellcolor{cyan!5}0.5396 & \cellcolor{cyan!5}\textbf{0.5383} & \cellcolor{cyan!5}\textbf{0.5909} \\
\cmidrule(lr){1-5}
 \multirow{7}{*}{DDI}
 & Golden    & 0.5774 & 0.4491 & 0.3863 \\
 & LLM-Eval & 0.2345 & 0.2345 & 0.2345 \\
\cmidrule(lr){2-5}
 & Alchemist           & 0.1806 & 0.2155 & 0.2121 \\
 & DataSculpt            & 0.1746 & 0.1756 & 0.1778 \\
 & LLM-Annotation  & 0.2234 & 0.2206 & 0.2305 \\
 & Self-Train    & \textbf{0.5782} & 0.4592 & 0.4581 \\
\cmidrule(lr){2-5}
 & \cellcolor{cyan!5}\textbf{EvoPool} & \cellcolor{cyan!5}0.5340 & \cellcolor{cyan!5}\textbf{0.5293} & \cellcolor{cyan!5}\textbf{0.5379} \\
\cmidrule(lr){1-5}
 \multirow{7}{*}{Claude9}
 & Golden    & 0.1057 & 0.1053 & 0.2111 \\
 & LLM-Eval & 0.3320 & 0.3320 & 0.3320 \\
\cmidrule(lr){2-5}
 & Alchemist           & 0.1867 & 0.2207 & 0.1941 \\
 & DataSculpt            & 0.1057 & 0.1909 & 0.1500 \\
 & LLM-Annotation  & \textbf{0.3469} & 0.3579 & 0.3639 \\
 & Self-Train    & 0.1057 & 0.0765 & 0.1131 \\
\cmidrule(lr){2-5}
 & \cellcolor{cyan!5}\textbf{EvoPool} & \cellcolor{cyan!5}0.3413 & \cellcolor{cyan!5}\textbf{0.3827} & \cellcolor{cyan!5}\textbf{0.3908} \\
\bottomrule
\end{tabular}

\vspace{6pt}

\textbf{(4) Multi-label biomedical}\\[2pt]
\setlength{\tabcolsep}{3pt}
\renewcommand{\arraystretch}{0.95}
\begin{tabular}{@{}ll|ccc@{}}
\toprule
Dataset & Method & RoB & Qwen & Llama \\
\midrule
 \multirow{7}{*}{Ohsumed}
 & Golden    & 0.0671 & 0.0042 & 0.0530 \\
 & LLM-Eval & 0.4433 & 0.4433 & 0.4433 \\
\cmidrule(lr){2-5}
 & Alchemist           & 0.1940 & 0.2012 & 0.2006 \\
 & DataSculpt            & 0.0713 & 0.0685 & 0.0692 \\
 & LLM-Annotation  & 0.4566 & 0.4634 & 0.4726 \\
 & Self-Train    & 0.2526 & 0.0652 & 0.1869 \\
\cmidrule(lr){2-5}
 & \cellcolor{cyan!5}\textbf{EvoPool} & \cellcolor{cyan!5}\textbf{0.5565} & \cellcolor{cyan!5}\textbf{0.5473} & \cellcolor{cyan!5}\textbf{0.5566} \\
\cmidrule(lr){1-5}
 \multirow{7}{*}{PubMed}
 & Golden    & 0.6229 & 0.4477 & 0.5593 \\
 & LLM-Eval & 0.4375 & 0.4375 & 0.4375 \\
\cmidrule(lr){2-5}
 & Alchemist           & 0.1399 & 0.1390 & 0.1397 \\
 & DataSculpt            & 0.0043 & 0.0047 & 0.0053 \\
 & LLM-Annotation  & 0.4396 & 0.4313 & 0.4353 \\
 & Self-Train    & 0.6975 & 0.4566 & 0.5618 \\
\cmidrule(lr){2-5}
 & \cellcolor{cyan!5}\textbf{EvoPool} & \cellcolor{cyan!5}\textbf{0.7126} & \cellcolor{cyan!5}\textbf{0.7095} & \cellcolor{cyan!5}\textbf{0.7110} \\
\bottomrule
\end{tabular}
\end{minipage}

\vspace{-12pt}
\end{table*}

\noindent \textbf{Annotator-level results.}
Table~\ref{tab:annotator-main} reports performance of the annotator pools on $10$ datasets across four task categories. EvoPool wins $8$ of $10$ datasets on both validation and test, with the largest test-set gap on the $10$-class biomedical ChemProt task at $+0.301$ over LLM annotation, followed by the multi-label biomedical PubMed at $+0.265$ and DDI at $+0.219$. Across 7 of 8 LLM-weak specialized and complex datasets, EvoPool beats LLM annotation by an average $+0.141$. SciFact is the one test-set loss, where LLM annotation leads EvoPool by $0.173$, consistent with gpt-4o-mini's strong pretraining on scientific claim language.

\noindent \textbf{Downstream model results.}
Table~\ref{tab:downstream-main} reports test macro-F1 of three downstream models RoBERTa-large, Qwen3-1.7B, and Llama-3.1-8B fine-tuned on each method's pseudo-labels. The downstream ranking mirrors the annotator-level ranking. On the biomedical tasks EvoPool dominates LLM-Annotation by $+0.24$ to $+0.31$ on ChemProt and by approximately $+0.31$ on DDI across all three downstream models, and beats every baseline on DDI under Llama-3.1-8B. EvoPool also leads on FEVER by $+0.06$ to $+0.08$, on VitaminC by $+0.01$ to $+0.06$, and on Banking77 by approximately $+0.08$, while matching LLM-Annotation on Claude9 where both methods maintain non-collapsed multi-class coverage as the other baselines collapse to a dominant class. SciFact remains the boundary case where LLM-Annotation wins downstream as well. On the multi-label biomedical tasks, EvoPool wins every cell, lifting test macro-F1 by $+0.015$ to $+0.25$ over the strongest baseline per model, with Self-Training collapsing on several LoRA-tuned cells. App.~\ref{app:cft} extends the downstream evaluation with continued fine-tuning on validation, and App.~\ref{app:human} compares EvoPool to the hand-crafted human annotator pools.

\begin{table*}[!t]
\centering
\small
\setlength{\tabcolsep}{4pt}
\renewcommand{\arraystretch}{1.04}
\caption{\textbf{Pipeline component ablation.} Leave-one-out using gpt-4o-mini backbone: starting from the final configuration with all components, remove a single component and re-run the full pipeline on ChemProt and FEVER. Four metrics are measured per dataset: MV and EvoAgg annotator-level test macro-F1, and Llama-3.1-8B downstream test macro-F1 trained on MV and EvoAgg pseudo-labels.}
\vspace{-10pt}
\label{tab:ablation-loo}
\begin{tabular}{@{}l|cccc|cccc@{}}
\toprule
& \multicolumn{4}{c|}{\textbf{ChemProt}} & \multicolumn{4}{c}{\textbf{FEVER}} \\
\cmidrule(lr){2-5}\cmidrule(lr){6-9}
Configuration & MV & EvoAgg & Down.\ MV & Down.\ EvoAgg & MV & EvoAgg & Down.\ MV & Down.\ EvoAgg \\
\midrule
\rowcolor{cyan!5}
\textbf{Full pipeline}      & \textbf{0.4787} & \textbf{0.5826} & \textbf{0.4817} & \textbf{0.5909} & \textbf{0.6696} & \textbf{0.8117} & \textbf{0.6735} & \textbf{0.8435} \\
$-$Improver                 & 0.4210 & 0.5207 & 0.4573 & 0.5176 & 0.5407 & 0.7456 & 0.5604 & 0.7790 \\
$-$Refiner                  & 0.4508 & 0.5712 & 0.4592 & 0.5671 & 0.6426 & 0.7881 & 0.6485 & 0.8192 \\
$-$Selection gate           & 0.4389 & 0.5822 & 0.4363 & 0.5881 & 0.5572 & 0.8112 & 0.5519 & 0.8396 \\
$-$Query selection          & 0.4403 & 0.5783 & 0.4527 & 0.5832 & 0.5846 & 0.7984 & 0.5916 & 0.8336 \\
Generator only              & 0.3289 & 0.5375 & 0.3636 & 0.5365 & 0.5084 & 0.7335 & 0.5092 & 0.6064 \\
\bottomrule
\end{tabular}
\vspace{-8pt}
\end{table*}

\begin{table*}[h]
\centering
\small
\setlength{\tabcolsep}{4pt}
\caption{\textbf{Cross-backbone transfer.} We run identical pipeline across backbones, with only the LLM author differing. We report downstream performance using Llama-3.1-8B on ChemProt and FEVER.}
\vspace{-8pt}
\label{tab:cross-backbone}
\resizebox{\linewidth}{!}{%
\begin{tabular}{@{}l|ccccc|ccccc@{}}
\toprule
& \multicolumn{5}{c|}{\textbf{ChemProt}} & \multicolumn{5}{c}{\textbf{FEVER}} \\
\cmidrule(lr){2-6}\cmidrule(lr){7-11}
Backbone & LLM-Eval & LLM-Annotation & Alchemist & DataSculpt & EvoPool & LLM-Eval & LLM-Annotation & Alchemist & DataSculpt & EvoPool \\
\midrule
gpt-4o-mini             & 0.282 & 0.281 & 0.213 & 0.199 & 0.591 & 0.760 & 0.765 & 0.167 & 0.343 & \textbf{0.844} \\
gpt-5-mini              & 0.282 & 0.288 & 0.308 & 0.237 & 0.586 & 0.760 & 0.760 & 0.366 & 0.323 & 0.840 \\
gemini-3.1-pro-preview  & 0.430 & 0.439 & 0.425 & 0.149 & \textbf{0.602} & 0.871 & 0.847 & 0.393 & 0.337 & 0.841 \\
deepseek-v4-flash       & 0.317 & 0.313 & 0.291 & 0.184 & 0.569 & 0.854 & 0.843 & 0.268 & 0.371 & 0.842 \\
\bottomrule
\end{tabular}}
\vspace{-12pt}
\end{table*}

\noindent \textbf{Pipeline component ablation.}
Table~\ref{tab:ablation-loo} reports a leave-one-out ablation on ChemProt and FEVER, removing the Improver, the Refiner, the selection gate, the query selector, or iteration entirely from the default configuration and re-running the full pipeline. Every component contributes positively on both datasets, with the Improver carrying the largest single load: it is the only agent that targets specific weak classes from per-iteration validation diagnostics, whereas the Generator front-loads diversity at $k{=}0$ and the Refiner exploits already-validated rules. Stripping iteration so only the Generator pool remains drops MV macro-F1 by $0.150$ on ChemProt and $0.161$ on FEVER, isolating the evolutionary loop as the main source of the lift. The drops are more pronounced under majority vote than under EvoAgg, since each component directly shapes the pool composition that the vote aggregates over. Switching aggregation from majority vote to EvoAgg adds $+0.099$ to $+0.210$ on ChemProt and $+0.142$ to $+0.254$ on FEVER on top of the per-component lifts, indicating that pool evolution and text-aware aggregation complement rather than substitute for one another. App.~\ref{app:agg-on-baselines} extends the comparison to the Alchemist and DataSculpt pools and a text-only baseline. At fixed MV aggregation EvoPool ChemProt reaches $0.479$ macro-F1, $+0.255$ above Alchemist and $+0.298$ above DataSculpt, and still wins by $+0.067$ at fixed EvoAgg aggregator and by $+0.119$ over text-only. Further ablations cover agent memory in App.~\ref{app:memory-ablation}, query-selector choice in App.~\ref{app:selector-ablation}, pool-size Pareto efficiency of the selection gate in App.~\ref{app:pool-efficiency}, and hyperparameter sensitivity in App.~\ref{app:hparam-ablation}. The full pipeline wins both datasets.

\begin{figure}[!t]
\centering
\includegraphics[width=\linewidth]{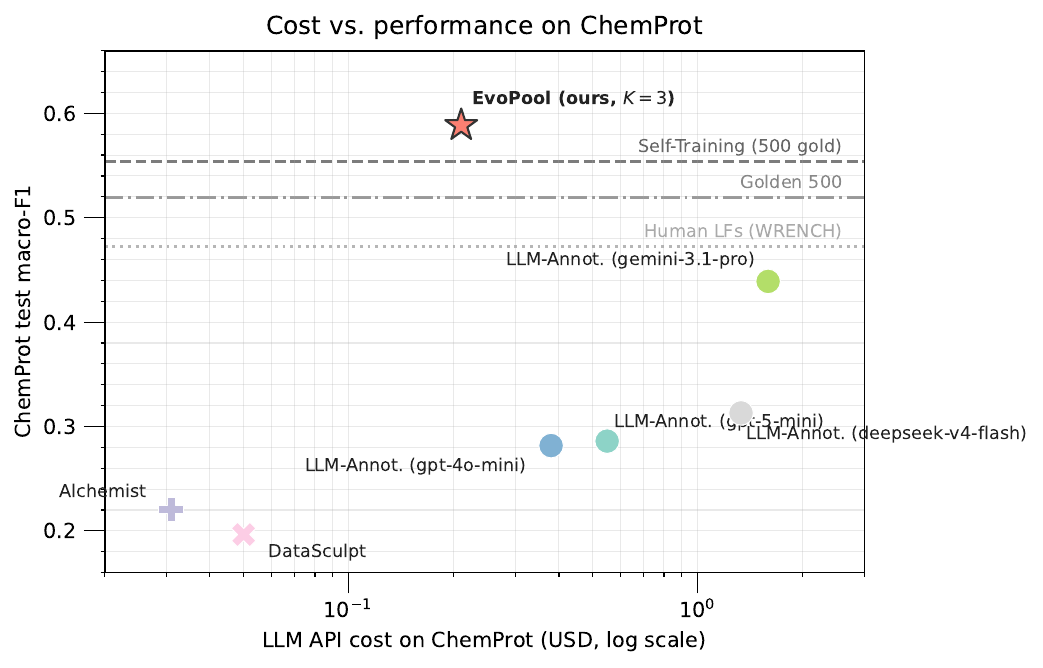}
\vspace{-15pt}
\caption{\textbf{Cost vs.\ performance.} Each point is one annotation method on ChemProt. Cost is total USD spend measured from gpt-4o-mini API calls. Horizontal lines are clean-label references dominated by human annotation cost. EvoPool with $K{=}3$ sits at the top-left, beating Self-Training, Golden, and the expert hand-authored pool at roughly \$0.21.}
\vspace{-20pt}
\label{fig:cost-perf-chemprot}
\end{figure}

\begin{figure*}[!t]
\centering
\includegraphics[width=0.95\linewidth]{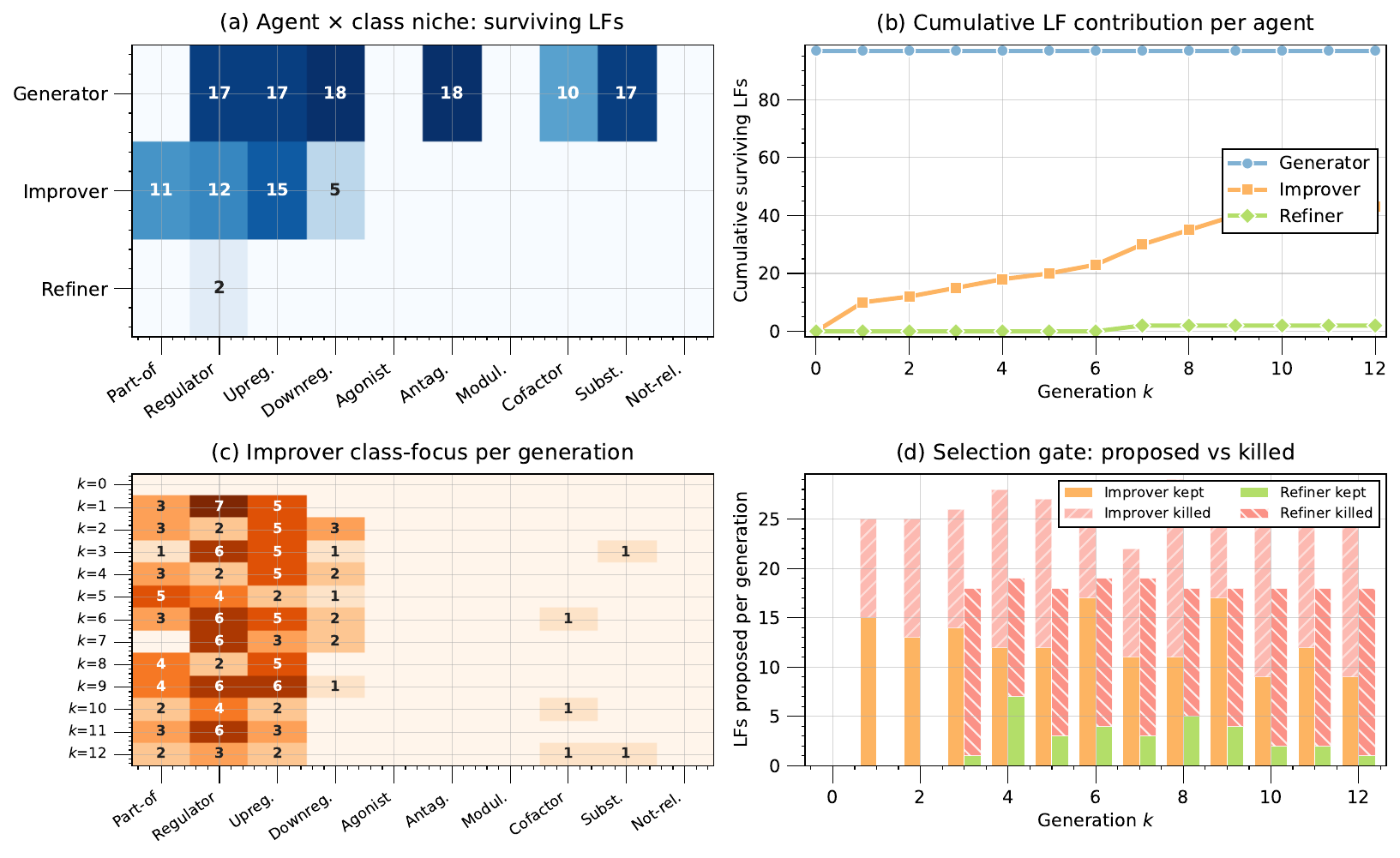}
\vspace{-10pt}
\caption{\textbf{Pool co-evolution signals.} Analysis done on ChemProt with gpt-4o-mini backbone. (a) Agent $\times$ class niche in the final pool. The Generator covers the broad classes, and the Improver fills Part-of, Agonist, Modulator, and Not-relation. (b) Cumulative surviving annotators by agent. The Generator front-loads at $k{=}0$, the Improver ramps from $k{=}1$, and the Refiner activates at $k{=}7$ once $\tau_{\text{ref-min}}{=}3$ allows it. (c) Improver class-focus per generation. (d) Selection gate retention: roughly $40$--$65$\% of Improver and $70$--$95$\% of Refiner candidates are killed each generation by the niche-redundancy and precision filters.}
\vspace{-18pt}
\label{fig:coevolution-chemprot}
\end{figure*}

\noindent \textbf{Cross-backbone transfer.}
Table~\ref{tab:cross-backbone} evaluates all five methods with four LLM backbones on ChemProt and FEVER, reporting Llama-3.1-8B downstream test macro-F1 trained on each method's pseudo-labels. EvoPool downstream performance is highly consistent across backbones, landing in $0.569$--$0.602$ on ChemProt and $0.840$--$0.844$ on FEVER, while one-shot synthesis baselines fluctuate by up to $0.21$ across backbones on ChemProt. On ChemProt EvoPool leads every backbone by $+0.17$ to $+0.31$ over direct LLM prediction. On FEVER the gap closes, but EvoPool's pool stays within $0.4$pp across the four backbones while LLM-Eval itself swings by $11.1$pp on the same axis. The cheapest backbone authors a pool indistinguishable from the most expensive at near-zero per-example deployment cost.

\noindent \textbf{Cost-performance trade-off.}
Figure~\ref{fig:cost-perf-chemprot} plots annotator test macro-F1 against total LLM API cost on ChemProt, measured directly from cached usage logs. EvoPool with $K{=}3$ generations, a cost-efficient variant of the default $K{=}12$, reaches $0.589$ macro-F1 at \$$0.21$, dominating every baseline on both axes. It exceeds direct LLM annotation with gemini-3.1-pro-preview by $+0.15$ at roughly $1/8$ of its cost, and the one-shot Alchemist and DataSculpt pools by more than $+0.35$ at only $5$--$7\times$ their cost. The Self-Training, Golden, and expert-LF clean-label references all sit below EvoPool, indicating that a few iterations of agent-authored programmatic supervision substitute for clean-label supervision in this regime. App.~\ref{app:latency} reports per-example deployment latency over $100$K examples for EvoPool versus four LLM annotation APIs.

\noindent \textbf{Pool co-evolution analysis.}
Figure~\ref{fig:coevolution-chemprot} shows how the pool evolves on ChemProt with gpt-4o-mini. Starting from identical prompt templates, the three agents differentiate into emergent class niches in panel (a): the Generator covers broad relation classes and the Improver fills long-tail biomedical classes the Generator missed. An activation schedule also emerges in panel (b), with the Generator front-loading at $k{=}0$, the Improver ramping from $k{=}1$, and the Refiner activating only at $k{=}7$ once the pool has stabilized. Panels (c) and (d) show the gate killing $40$--$65$\% of Improver and most Refiner candidates each generation, and the Improver giving up on classes the gate has ruled out. None of these patterns are pre-specified. They emerge from validation-driven selection across generations, supporting that the pool, not the agents, is the unit of inheritance. The FEVER plot is in App.~\ref{app:coevolution-fever}, and App.~\ref{app:qualitative} collects agent prompts, the analysis-to-Improver dialogue, and the highest-F1 surviving annotators per method on ChemProt and FEVER.

\section{Conclusion}

We introduced EvoPool, an evolutionary multi-agent framework that authors a pool of executable annotators under pool-level selection, paired with EvoAgg, a text-aware aggregator. Across 10 datasets, EvoPool beats LLM annotation by an average $+0.141$ test macro-F1 on 7 of 8 LLM-weak tasks at near-zero per-example deployment cost, and the advantage holds across 4 LLM backbones and 3 downstream models. EvoPool moves programmatic annotation from one-shot synthesis to an adaptive process where the pool itself evolves under feedback from a small validation set, opening specialized supervision to settings where expert annotation is unaffordable and direct LLM annotation does not scale.

\section*{Limitations}
\label{sec:limitations}
Our evaluation spans 10 English datasets and four LLM backbones. Broader languages and multi-seed variance are deferred. EvoPool targets structured prediction tasks expressible as discrete-label Python rules such as text classification, relation extraction, and claim verification. Open-ended generation, extractive question answering, and multi-step reasoning trajectories fall outside this scope and represent natural extensions to non-programmatic supervision. The pipeline also relies on an LLM author for annotator generation. We validate robustness across four backbones, but the authored pool inherits each author's coverage of the target domain. EvoAgg requires roughly ten validation examples per class for stable cross-validated fitting. On Claude9, several rare classes have fewer than ten validation examples due to severe class imbalance, so we use MV as the final aggregator there. Training-time LLM calls remain, so the cost story is amortization over a large unlabeled corpus rather than zero LLM usage.

\section*{Ethical Considerations}
EvoPool targets specialized, high-stakes domains where expert annotation is expensive, potentially broadening programmatic-supervision access for under-resourced labs and clinics. However, the authored annotators inherit biases of the LLM author and may produce disparate failure modes on rare clinical entities or minority legal-clause types that we do not exhaustively audit. The Python-rule format is auditable by design, and we recommend that deployments in consequential settings such as clinical decision support or legal triage treat EvoPool labels as a starting point requiring human review rather than a substitute for expert judgment.

\bibliography{custom}

\appendix

\section{Related Work}
\label{app:related-work}

\noindent \textbf{Weak supervision and programmatic annotation.}
Programmatic weak supervision authors labeling functions that vote on each example and aggregates the noisy votes into pseudo-labels. Data programming \citep{ratner2016dataprogramming} and Snorkel \citep{ratner2017snorkel} formalized this around hand-authored labeling functions, with Snorkel DryBell \citep{bach2019drybell} extending the paradigm to organizational scale. Subsequent work scaled aggregation \citep{ratner2019metal,fu2020fastandthree}, standardized benchmarking \citep{zhang2021wrench,zhang2024stronger}, reduced human effort through natural-language explanations \citep{hancock2018babble}, and automatically synthesized labeling functions from a small labeled set \citep{varma2018snuba}. Alfred \citep{guan2023alfred} replaced hand-coded labeling functions with prompted LLM sources, and interactive systems combine labeling-function authoring with feedback on individual instances \citep{choi2024interactive} or active learning \citep{wei2026dall}. EvoPool removes the human author entirely. LLM agents author the annotators themselves under selection pressure on the pool, and EvoAgg conditions on the input text rather than the votes alone.

\noindent \textbf{LLM-as-Judge and LLM-synthesized annotators.}
A complementary line uses the LLM itself as the supervision source \citep{tan2024llmdatasurvey}. LLM-as-Judge and LLM-as-Annotator approaches call the model once per training example \citep{zheng2023judging,dubois2023alpacafarm,wang2021gpt3dataaug}, with recent refinements distilling uncertainty over candidate annotations \citep{prompt2025candidates} or automating guideline improvement \citep{bibal2025automating}. Cost scales linearly with the corpus, which is impractical at deployment scale. A cheaper variant synthesizes training data or executable annotators offline. Self-Instruct \citep{wang2023selfinstruct}, Unnatural Instructions \citep{honovich2023unnatural}, and SuperGen \citep{meng2022supergen} generate synthetic examples, while Alchemist \citep{alchemist}, DataSculpt \citep{datasculpt}, and EXPONA \citep{guo2024expona} produce labeling functions in a single shot, drawing on broader LLM program-synthesis capability \citep{austin2021programsynthesis}. WeShap \citep{kang2024weshap} complements these by attributing per-source contributions to the downstream pipeline via Shapley values. Without feedback on the actual class distribution, the resulting pool inherits the LLM's prior biases and abandons the long tail. EvoPool closes this gap by iterating the pool under explicit validation pressure.

\noindent \textbf{Multi-agent systems.}
Multi-agent LLM frameworks coordinate specialized agents for complex tasks. MetaGPT \citep{hong2024metagpt}, AutoGen \citep{wu2023autogen}, ChatDev \citep{qian2024chatdev}, CAMEL \citep{li2023camel}, and DyLAN \citep{liu2023dylan} orchestrate role-specialized LLMs for software engineering, dialogue, and reasoning. ReAct \citep{yao2023react} interleaves reasoning traces with grounded actions inside a single trajectory, and a second strand equips agents with persistent state across episodes, where Reflexion \citep{shinn2023reflexion}, Voyager \citep{wang2023voyager}, Generative Agents \citep{park2023generative}, and MemGPT \citep{packer2023memgpt} store verbal lessons across iterations under the assumption that experience compounds. EvoPool extends this multi-agent paradigm to efficient code-based annotation, where agents collaboratively author and iteratively refine executable annotator code that then labels the entire corpus at near-zero per-example cost. Within this loop, deterministic selection on the surviving annotators substitutes for cross-iteration agent memory.

\noindent \textbf{Agent evolution and data Darwinism.}
As data quality becomes the binding constraint on modern LLM training \citep{wang2026opus}, a recent line treats data engineering itself as an evolutionary, automated process. Data Darwinism \citep{mi2025datadarwinism1,mi2026datadarwinism} evolves pretraining-data curation strategies through a closed generate-evaluate-refine loop, EvoAgent \citep{yuan2024evoagent} applies evolutionary operators to a single-agent template to spawn diverse multi-agent variants, Self-Refine \citep{madaan2023selfrefine} iteratively revises a model's outputs via self-feedback, Self-Discover \citep{zhou2024selfdiscover} composes task-specific reasoning structures from atomic reasoning modules, and EvoChamber \citep{wu2024evochamber} co-evolves a multi-agent system at the individual, team, and population scales at test time. The Darwin G\"odel Machine \citep{zhang2026darwin} pushes this further by letting agents self-modify their own code under archival selection, and Hyperagents \citep{zhang2026hyperagents} extends this with metacognitive self-modification where the modification procedure itself is editable. In each case the evolving unit is the agent, the data, or the reasoning chain. EvoPool sits at a complementary layer by evolving the executable annotators themselves, so the surviving artifacts rather than the surviving agents form the heritable unit and produce a reusable cheap-to-execute pool.

\section{Detailed Algorithm}
\label{app:algorithm}

\begin{algorithm*}[h]
\small
\caption{EvoPool training-time loop}
\label{alg:evopool}
\begin{algorithmic}[1]
\STATE \textbf{Input:} train $\mathcal{D}_{\text{train}}$, val $\mathcal{D}_{\text{val}}$, test $\mathcal{D}_{\text{test}}$; LLM $M$ at temperature $T$; agents $G, I, R$.
\STATE \textbf{Hyperparameters:} generations $K$, Generator batches $n_G$, Improver batches $n_I$, positives per class $n_{\text{pos}}$, query size $k_{\max}$, precision $\tau_{\text{prec}}$, min fires $\tau_{\text{fires}}$, Jaccard cap $\tau_{\text{jac}}$, train-val gap $\tau_{\text{tv}}$, ablation tolerance $\tau_{\text{abl}}$, Refiner activation $\tau_{\text{ref-min}}$, dropout streak $\tau_{\text{drop}}$, subsumption $\tau_{\text{sub}}$.
\STATE \textbf{Initialize:} $P_0 \leftarrow G(\mathcal{D}_{\text{train}}, M, T, n_G)$, filtered by val precision $\geq \tau_{\text{prec}}$ and fires $\geq \tau_{\text{fires}}$. Class fail streak $c_j \leftarrow 0$.
\FOR{$k = 1$ to $K$}
\STATE \textbf{1. Analyze:} evaluate $P_{k-1}$ on $\mathcal{D}_{\text{val}}$, build improvement brief $\mathcal{B}_k$ over low-recall regions of $\mathcal{D}_{\text{train}}$.
\STATE \textbf{2. Query selection:} BatchBALD picks $S_k \subset \mathcal{D}_{\text{train}}$ with $|S_k| = k_{\max}$; cluster by facility location, attach to $\mathcal{B}_k$.
\STATE \textbf{3. Class targeting:} pick bottom-$n_I$ classes with val F1 $< 0.40$ and $c_j < \tau_{\text{drop}}$ as Improver targets.
\STATE \textbf{4. Improver:} $\Delta P_{\text{imp}}^{\text{raw}} \leftarrow I(M, T, P_{k-1}, \mathcal{B}_k, n_{\text{pos}})$.
\STATE \textbf{5. Improver gate:} keep candidates with val precision $\geq \tau_{\text{prec}}$, fires $\geq \tau_{\text{fires}}$, train-val gap $\leq \tau_{\text{tv}}$, Jaccard with $P_{k-1}$ $< \tau_{\text{jac}}$. Output $\Delta P_{\text{imp}}$.
\STATE \textbf{6. Refiner:} \textbf{if} $k \geq \tau_{\text{ref-min}}$ pick members of $P_{k-1} \cup \Delta P_{\text{imp}}$ with val precision $\geq 0.65$ and val cov $\in [0.005, 0.05]$, call $\Delta P_{\text{ref}}^{\text{raw}} \leftarrow R(\cdot)$ for $2$ to $3$ broader variants each. \textbf{else} $\Delta P_{\text{ref}}^{\text{raw}} \leftarrow \emptyset$.
\STATE \textbf{7. Refiner gate:} apply the step-5 filter to $\Delta P_{\text{ref}}^{\text{raw}}$. Output $\Delta P_{\text{ref}}$.
\STATE \textbf{8. Ablation:} $P_k^{\text{merge}} \leftarrow P_{k-1} \cup \Delta P_{\text{imp}} \cup \Delta P_{\text{ref}}$. Greedily drop any \emph{newly-added} annotator $\ell \in \Delta P_{\text{imp}} \cup \Delta P_{\text{ref}}$ whose removal strictly improves val macro F1 by at least $\tau_{\text{abl}}$.
\STATE \textbf{9. Subsumption prune:} drop pool members subsumed by a new one at Jaccard $\geq \tau_{\text{sub}}$. Output $P_k$.
\STATE \textbf{10. Viability:} $c_j \leftarrow c_j + 1$ if class $j$ F1 stagnated, else $c_j \leftarrow 0$.
\ENDFOR
\STATE \textbf{Aggregator:} fit EvoAgg LR-CV on $\mathcal{D}_{\text{val}}$ from $P_K$ votes and sentence-BERT features. Pseudo-label $\hat{y}_{\text{train}} \leftarrow \text{EvoAgg}(\mathcal{D}_{\text{train}}, P_K)$. Fine-tune the downstream model on $(\mathcal{D}_{\text{train}}, \hat{y}_{\text{train}})$.
\end{algorithmic}
\end{algorithm*}

\noindent Algorithm~\ref{alg:evopool} shows the full EvoPool training-time loop, with the query selector in line~2 detailed in App.~\ref{app:batchbald-full}, the selection gate in lines~8 and~9 in App.~\ref{app:selection-impl}, and per-task hyperparameter values in App.~\ref{app:hparams}.

\section{Design Justification}
\label{app:design}

This section expands the rationale behind key design choices in EvoPool that Sec.~\ref{sec:method} states but does not fully argue. Each choice corresponds to a Darwinian primitive applied to the annotator population. For completeness we briefly state the Lamarckian alternative and why we did not adopt it.

\paragraph{Pool as the unit of inheritance, not agents.}
A natural default in multi-agent LLM frameworks is the Lamarckian view, in which each agent carries verbal lessons, per-class personas, or hard-negative caches across iterations, and these compounding traces shape the next attempt. Reflexion, Voyager, and MemGPT \citep{shinn2023reflexion,wang2023voyager,packer2023memgpt} are canonical instances of this design and the approach works well in many settings. EvoPool deliberately diverges for two reasons. First, the agent prompt grows linearly in iteration count under the Lamarckian view, so each later-iteration call is materially more expensive in input tokens than the corresponding stateless call. Second, pool-level selection already provides the per-class feedback signal directly in our setting. An annotator that fails on the long tail is filtered out, whereas a verbal lesson about the long tail mostly amplifies what the agent already wanted to do. Agents in EvoPool are therefore stateless mutation operators re-invoked from scratch every iteration, and selection acts on the pool of surviving annotators rather than on the agents that authored them. App.~\ref{app:memory-ablation} reports the empirical comparison on ChemProt. Stateless is not just cheaper but also slightly better, beating Reflector memory by $+0.014$ and Full memory by $+0.040$.

\paragraph{Three role-specialized agents rather than one.}
The Generator, Improver, and Refiner partition the exploration-exploitation tradeoff of evolutionary search. The Generator provides breadth-first coverage of the label space, the Improver concentrates variation on weak classes using per-class feedback, and the Refiner broadens already-validated rules to expand coverage around high-fitness candidates. A single agent with a unified prompt has to time-share these objectives, and ablating any one of the three strictly degrades pool quality on both datasets in Table~\ref{tab:ablation-loo}.

\paragraph{Deterministic selection gate.}
We compose three orthogonal filters, namely niche-redundancy deduplication, validation-precision floor, and post-iteration marginal-contribution ablation, instead of learning a candidate ranker. Each filter implements a distinct Darwinian primitive of niche differentiation, viability threshold, and free-rider elimination, and the deterministic form gives a stable, auditable inheritance rule that does not itself drift across iterations. A learned ranker would couple the selection gate to the noise model of the validation set and would remove the strict-improvement guarantee on the pool.

\paragraph{Executable code as the unit of inheritance.}
The pool stores Python functions that emit a class label or abstain on each example, rather than continuous embeddings or learned models. Discrete code preserves auditability, runs at near-zero per-example cost at deployment, and admits direct comparison to hand-authored labeling functions in App.~\ref{app:human}. A continuous artifact would partly close the cost gap to a small fine-tuned model and lose the Python-rule auditability that high-stakes deployments require.

\paragraph{A complementary posterior-refinement reading.}
The Darwinian framing makes the mechanism interpretable, and a complementary Bayesian reading makes the loop's objective explicit and links the design choices above to a single principled guarantee. The current pool $\mathcal{A}^{(t)}$ induces a pseudo-posterior $p(\theta \mid \mathcal{A}^{(t)})$ over the latent labeling function $\theta$, where consistency with the pool's votes on the validation set serves as evidence. Under this reading, the three role-specialized agents are amortized samplers from $p(\theta \mid \cdot)$ that propose new annotators, the deterministic selection gate is a posterior conditioning that retains only candidates passing the three filters, and BatchBALD query selection in App.~\ref{app:batchbald-full} is the greedy expected-information-gain step of sequential Bayesian experimental design. The selection gate is the only step where the validation set enters the loop, so it is the operational analog of conditioning the posterior on observed evidence. The strict-improvement ablation step yields a principled per-step monotonicity guarantee on the validation fitness, $F(\mathcal{A}^{(t+1)}) \geq F(\mathcal{A}_+^{(t)})$, given in Prop.~\ref{prop:monotone-quality} and App.~\ref{app:selection-impl}. The two readings agree on what survives across generations: under the Darwinian view, the fittest annotators, and under the Bayesian view, the annotators most consistent with the validation evidence.

\section{Pool-Level Selection: Formal Definitions}
\label{app:selection-impl}

This appendix gives the formal definition of each filter used by the deterministic selection gate of Sec.~\ref{sec:method} (Pool-Level Natural Selection). Let $\text{fires}(\ell) \subseteq \mathcal{D}_{\text{val}}$ denote the set of validation examples on which annotator $\ell$ emits a non-abstain prediction.

\paragraph{Niche redundancy (deduplication).}
A candidate annotator $\ell$ is dropped if its firing pattern overlaps an existing pool member $\ell' \in P_{k-1}$ above a Jaccard threshold $\tau_{\text{jac}}$:
\begin{equation}
\max_{\ell' \in P_{k-1}} \frac{|\text{fires}(\ell) \cap \text{fires}(\ell')|}{|\text{fires}(\ell) \cup \text{fires}(\ell')|} > \tau_{\text{jac}}.
\label{eq:niche-redundancy}
\end{equation}

\paragraph{Minimum competence (precision floor).}
Define the validation precision of annotator $\ell$ as
\begin{equation}
\text{prec}(\ell) = \mathbb{P}_{x \sim \mathcal{D}_{\text{val}}}\bigl[\ell(x){=}y(x) \mid \ell \text{ fires on } x\bigr].
\label{eq:precision}
\end{equation}
A candidate is dropped if $\text{prec}(\ell) < \tau_{\text{prec}}$ or $|\text{fires}(\ell)| < \tau_{\text{fires}}$.

\paragraph{Marginal contribution (post-iteration ablation).}
At the end of each generation we greedily drop a newly-added annotator $\ell \in \Delta P_{\text{imp}} \cup \Delta P_{\text{ref}}$ when its removal \emph{strictly improves} the pool's aggregated validation macro-F1 $F$ by at least the ablation tolerance:
\begin{equation}
F\bigl(P \setminus \{\ell\};\,\mathcal{D}_{\text{val}}\bigr) \;>\; F\bigl(P;\,\mathcal{D}_{\text{val}}\bigr) + \tau_{\text{abl}}.
\label{eq:marginal-contribution}
\end{equation}
This step only considers newly-proposed annotators as drop candidates. Parent-pool annotators that survived earlier generations are never re-evaluated. The aggregator $F$ is majority vote by default, and the aggregator-aware variant substitutes EvoAgg.

\paragraph{Threshold values.}
We use $\tau_{\text{jac}}{=}0.95$, $\tau_{\text{prec}}{=}0.25$, $\tau_{\text{fires}}{=}5$, $\tau_{\text{abl}}{=}0.001$ across all main-paper experiments. Full pipeline hyperparameters are listed in App.~\ref{app:hparams}.

\paragraph{Framework guarantee.}
The strict-improvement form of Eq.~\ref{eq:marginal-contribution} gives the ablation phase a clean monotonicity property: every drop strictly improves the pool's validation macro-F1.

\begin{proposition}[Ablation-phase monotonicity]
\label{prop:monotone-quality}
Let $F(\mathcal{A})$ denote the pool's aggregated validation macro-F1 under the default aggregator. Write a generation as a two-phase update $\mathcal{A}^{(t)} \to \mathcal{A}_+^{(t)} \to \mathcal{A}^{(t+1)}$, where $\mathcal{A}_+^{(t)} = \mathcal{A}^{(t)} \cup \Delta\mathcal{A}$ is the gate-filtered union of the parent pool and the new candidates passing Eqs.~\ref{eq:niche-redundancy}--\ref{eq:precision}, and $\mathcal{A}^{(t+1)} \subseteq \mathcal{A}_+^{(t)}$ is the post-ablation pool under Eq.~\ref{eq:marginal-contribution}. The ablation phase is strictly non-decreasing in $F$:
\[
F(\mathcal{A}^{(t+1)}) \;\geq\; F(\mathcal{A}_+^{(t)}),
\]
with each drop contributing a gain of at least $\tau_{\mathrm{abl}}$.
\end{proposition}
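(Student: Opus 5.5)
The plan is to model the ablation phase as a finite sequence of greedy drops and argue by induction along that sequence. Write $\mathcal{A}_0 := \mathcal{A}_+^{(t)}$. The greedy procedure of Eq.~\ref{eq:marginal-contribution} produces a chain $\mathcal{A}_0 \supsetneq \mathcal{A}_1 \supsetneq \cdots \supsetneq \mathcal{A}_m = \mathcal{A}^{(t+1)}$. Each $\mathcal{A}_{i+1} = \mathcal{A}_i \setminus \{\ell_i\}$ for some newly added $\ell_i \in \Delta\mathcal{A}$ that satisfies the acceptance condition with respect to the \emph{current} pool $\mathcal{A}_i$. Termination is immediate, and it gives $m \le |\Delta\mathcal{A}|$. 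The reason is that only new annotators are drop candidates, and each step removes one of them.

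The key step is to read Eq.~\ref{eq:marginal-contribution} correctly in the greedy setting. A drop of $\ell_i$ is accepted only if $F(\mathcal{A}_i\setminus\{\ell_i\}) > F(\mathcal{A}_i) + \tau_{\mathrm{abl}}$, where $F$ is evaluated on the pool as it stands at that moment. This gives $F(\mathcal{A}_{i+1}) > F(\mathcal{A}_i) + \tau_{\mathrm{abl}}$ for every $i$. Summing the telescoping chain yields
\[
F(\mathcal{A}^{(t+1)}) - F(\mathcal{A}_+^{(t)}) = \sum_{i=0}^{m-1}\bigl(F(\mathcal{A}_{i+1})-F(\mathcal{A}_i)\bigr) > m\,\tau_{\mathrm{abl}} \ge 0 \quad (m\ge 1).
\]
When $m=0$ we have equality trivially. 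Together these cases give $F(\mathcal{A}^{(t+1)}) \ge F(\mathcal{A}_+^{(t)})$, and each drop contributes at least $\tau_{\mathrm{abl}}$. Note that $F$ only needs to be a deterministic function of the pool on the fixed $\mathcal{D}_{\text{val}}$. No structural property of macro-F1 or majority vote, such as submodularity or monotonicity, is used.

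I expect the main obstacle to be definitional rather than technical. The argument needs the acceptance test to be evaluated against the current intermediate pool $\mathcal{A}_i$, not against the original $\mathcal{A}_+^{(t)}$. Suppose instead that several annotators were each tested against $\mathcal{A}_+^{(t)}$ and then removed simultaneously. Individual gains would then not compose, and the bound could fail. So I would first state explicitly that the greedy ablation re-evaluates $F$ after every accepted drop, as Algorithm~\ref{alg:evopool}, step~8 indicates. I would also require $F$ to be deterministic, with fixed tie-breaking in majority vote. The same argument carries over verbatim to the aggregator-aware variant, provided EvoAgg is refit deterministically.

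Finally, I would remark on the scope of the statement. It concerns only the ablation phase; the subsumption prune of step~9 and the gate-filtered union $\mathcal{A}^{(t)}\to\mathcal{A}_+^{(t)}$ are not covered. Consequently, cross-generation monotonicity $F(\mathcal{A}^{(t+1)})\ge F(\mathcal{A}^{(t)})$ does not follow and should not be claimed.
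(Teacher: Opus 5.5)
Your proposal is correct and follows essentially the same route as the paper: both treat the ablation as a sequence of greedy drops, each tested against the current intermediate pool, so every accepted drop raises $F$ by more than $\tau_{\mathrm{abl}}$, and both then telescope over the accepted drops. Your extra remarks on termination ($m \le |\Delta\mathcal{A}|$), on needing a deterministic $F$, and on the scope (the subsumption prune and the addition phase are not covered) are consistent with the paper's statement and the discussion that follows it.
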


\begin{proof}
The ablation phase removes annotators sequentially. At each step, the greedy rule of Eq.~\ref{eq:marginal-contribution} drops $\ell$ from the current intermediate pool $\mathcal{P}$ only when $F(\mathcal{P} \setminus \{\ell\}) > F(\mathcal{P}) + \tau_{\mathrm{abl}}$, so $F$ strictly increases by at least $\tau_{\mathrm{abl}}$ on every drop. The phase terminates when no remaining candidate passes the strict-improvement condition. Telescoping over the $T$ accepted drops gives
\begin{align*}
F(\mathcal{A}^{(t+1)}) \;&=\; F(\mathcal{A}_+^{(t)}) + \sum_{i=1}^{T} \Delta_i \\
                       \;&\geq\; F(\mathcal{A}_+^{(t)}) + T \cdot \tau_{\mathrm{abl}},
\end{align*}
where each $\Delta_i > \tau_{\mathrm{abl}}$. \qedhere
\end{proof}

The addition phase $\mathcal{A}^{(t)} \to \mathcal{A}_+^{(t)}$ has no comparable formal guarantee: under majority vote and macro-F1, an admitted annotator with precision above $\tau_{\mathrm{prec}}$ may still tip individual votes the wrong way or shift per-class precision-recall balance, so $F(\mathcal{A}_+^{(t)})$ can fall below $F(\mathcal{A}^{(t)})$ in principle. The strict-improvement ablation step then claws back any such losses by removing the offending newly-added annotators, and we observe empirically across all main-paper experiments that the end-of-generation macro-F1 $F(\mathcal{A}^{(t+1)})$ is monotone non-decreasing across $t$ (cf.\ Fig.~\ref{fig:gate-pareto} in App.~\ref{app:pool-efficiency}). Prop.~\ref{prop:monotone-quality} formally guarantees the ablation half of this empirical monotonicity.

\section{Query Selection: BatchBALD}
\label{app:batchbald-full}

Sec.~\ref{sec:agent-search} introduces BatchBALD as the small example-selection tool used inside each EvoPool iteration. This appendix gives its definition, the per-example weighting used in our implementation, and three short propositions that justify the choice. The selector is not a paper contribution and we describe it only for reproducibility.

\paragraph{Setup.}
At each iteration $t$ the Improver and Refiner need a small batch $S \subset \mathcal{X}$ of training examples to reason over. We treat the current annotator pool $\mathcal{A}^{(t)} = \{a_1, \dots, a_K\}$ as an ensemble of $K$ hypotheses with smoothed predictives $p_k(y \mid x) \in \Delta^{|\mathcal{C}|-1}$, uniform when $a_k$ abstains and Laplace-smoothed indicator otherwise. The marginal predictive is $\bar{p}(y \mid x) = \tfrac{1}{K}\sum_k p_k(y \mid x)$.

\paragraph{Batch mutual information.}
The BatchBALD \citep{kirsch2019batchbald} acquisition score for a candidate batch $S$ is
\begin{equation}
\mathcal{I}(S) \;=\; H\!\bigl(Y_S \mid X_S\bigr) \;-\; \frac{1}{K}\sum_{k=1}^{K} H\!\bigl(Y_S \mid X_S, a_k\bigr),
\label{eq:batchbald-def}
\end{equation}
where $Y_S = (Y_i)_{i \in S}$ and $H$ is the Shannon entropy of $\bar{p}$ over joint label configurations on $S$. The first term is the ensemble's predictive entropy on $S$. The second is the expected per-hypothesis entropy. Their difference is the information $Y_S$ would give about which hypothesis explains the data, i.e.\ which rule structure the pool is missing. The joint-entropy term enforces batch-level non-redundancy: a second example near-duplicating one already in $S$ adds little marginal MI. Exact evaluation of $H(Y_S)$ scales as $|\mathcal{C}|^{|S|}$ and we approximate it by Monte-Carlo sampling under $\bar{p}$, matching the original implementation.

\paragraph{Per-example importance weights.}
Greedy selection is biased by a per-example weight $w_i = u_i \cdot \alpha_i$ that decomposes into label uncertainty $u_i$ and class priority $\alpha_i$. With vote depth $d_i = |\{j : L_{ij} \neq \bot\}|$ and normalized vote distribution $\mathbf{v}_i \in \Delta^{|\mathcal{C}|-1}$,
\begin{equation}
u_i = \begin{cases}
  1 & d_i = 0, \\
  H(\mathbf{v}_i) / \log |\mathcal{C}| & d_i > 0,\ \text{votes disagree}, \\
  1 / (d_i + 1) & d_i > 0,\ \text{votes agree}.
\end{cases}
\end{equation}
Class priority $\alpha_i$ is normalized inverse per-class validation recall, $\alpha_c \propto \widehat{R}_c^{-\gamma}$, assigned per example via the aggregated pool label.

\paragraph{Algorithm and complexity.}
We use standard greedy maximization. Starting from $S_0 = \emptyset$, we iteratively add the example with largest marginal MI gain
\[
x^{*} \;=\; \arg\max_{x \notin S_t} \,w_x \cdot \bigl[\mathcal{I}(S_t \cup \{x\}) - \mathcal{I}(S_t)\bigr]
\]
until $|S| = k_{\max}$. One round costs $O(nKC)$ for marginal entropies plus $O(k\,n\,M\,K)$ for the joint-entropy MC updates, with $K \leq 300$, $C \leq 10$, $M = 500$, $n$ the candidate pool size. Wall-clock is well under a minute on a single CPU and the LLM calls dominate iteration cost.

\begin{table*}[!t]
\centering
\small
\setlength{\tabcolsep}{4pt}
\renewcommand{\arraystretch}{1.1}
\caption{\textbf{Dataset statistics across the 10 datasets.} \emph{\#C} is the number of label classes. \emph{Type} is single-label (S) or multi-label (M). \emph{Val} is the post-cap validation budget used in all experiments.}
\label{tab:dataset-stats}
\begin{tabular}{@{}l|l|c|c|rrr|p{2.6in}@{}}
\toprule
Dataset    & Category                & \#C & Type & Train & Val & Test & Task description \\
\midrule
AGNews     & General classification  & 4   & S    & 9{,}000   & 500 & 7{,}600 & News topic classification across World, Sports, Business, and Sci/Tech \\
Banking77  & General classification  & 77  & S    & 9{,}003   & 500 & 3{,}080 & Fine-grained intent classification in retail-banking customer service \\
\midrule
ChemProt   & High-stakes specialized & 10  & S    & 12{,}861  & 500 & 1{,}607 & Biomedical chemical-protein relation extraction over PubMed abstracts \\
DDI        & High-stakes specialized & 5   & S    & 32{,}670  & 500 & 8{,}673 & Drug-drug interaction relation extraction over DrugBank and MedLine sentences \\
Claude9    & High-stakes specialized & 9   & S    & 5{,}937   & 225 & 1{,}564 & Unfair-clause classification in consumer Terms-of-Service contracts \\
\midrule
FEVER      & Complex reasoning       & 3   & S    & 10{,}000  & 500 & 7{,}600 & Open-domain claim verification against Wikipedia evidence \\
VitaminC   & Complex reasoning       & 3   & S    & 10{,}000  & 500 & 7{,}600 & Counterfactual claim verification under contrastive evidence pairs \\
SciFact    & Complex reasoning       & 3   & S    & 809       & 210 & 90     & Scientific claim verification against biomedical research abstracts \\
\midrule
Ohsumed    & Multi-label biomedical  & 23  & M    & 24{,}074  & 500 & 6{,}877 & MeSH disease-topic tagging on cardiovascular biomedical abstracts \\
PubMed     & Multi-label biomedical  & 14  & M    & 40{,}000  & 500 & 5{,}000 & Topic tagging on biomedical literature abstracts \\
\bottomrule
\end{tabular}
\end{table*}

\subsection{Theoretical Properties}

\begin{proposition}[Monotone submodularity and greedy guarantee]
\label{prop:batchbald-submod}
For any finite candidate set and finite ensemble, $\mathcal{I}: 2^{\mathcal{X}} \to \mathbb{R}_{\geq 0}$ in Eq.~\eqref{eq:batchbald-def} is monotone non-decreasing and submodular. Greedy maximization achieves, after $k$ steps,
\[
\mathcal{I}(S_{\mathrm{greedy}}) \;\geq\; (1 - 1/e)\,\max_{|S| \leq k} \mathcal{I}(S).
\]
\end{proposition}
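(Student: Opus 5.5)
The plan is to recast $\mathcal{I}(S)$ as a mutual information between the batch labels and a latent hypothesis index, and then use standard information-theoretic inequalities. First I will fix the probabilistic model that Eq.~\eqref{eq:batchbald-def} implicitly uses. Let $\Theta$ be uniform on $\{1,\dots,K\}$. Conditional on $\Theta=k$, the labels $(Y_i)_{i\in S}$ are independent with $Y_i\sim p_k(\cdot\mid x_i)$. Then the joint predictive is the mixture $\bar p(y_S)=\tfrac1K\sum_k\prod_{i\in S}p_k(y_i\mid x_i)$, and $H(Y_S\mid X_S,a_k)$ is the entropy of the $k$-th product distribution. This gives
\[
\mathcal{I}(S)=H(Y_S)-H(Y_S\mid\Theta)=I(Y_S;\Theta),
\]
which is $\ge 0$ and satisfies $\mathcal{I}(\emptyset)=0$. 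I expect this step to be the main obstacle. It is not hard, but the statement's wording (``Shannon entropy of $\bar p$ over joint label configurations'') must be read as the mixture-of-products joint, and not as a product of marginals of $\bar p$, since the latter would make the first term modular and break the argument. I will state this interpretation explicitly.

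\emph{Monotonicity.} By the chain rule, $I(Y_{S\cup\{x\}};\Theta)=I(Y_S;\Theta)+I(Y_x;\Theta\mid Y_S)$. Conditional mutual information is nonnegative, so $\mathcal{I}$ is non-decreasing.

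\emph{Submodularity.} I split $\mathcal{I}$ into two terms. Conditional independence given $\Theta$ gives $H(Y_S\mid\Theta)=\sum_{i\in S}H(Y_i\mid\Theta)$, so the second term is modular. The first term $S\mapsto H(Y_S)$ is the joint entropy of a finite family of discrete random variables, and it is submodular by the standard argument. For $A\subseteq B$ and $x\notin B$,
\[
H(Y_{A\cup\{x\}})-H(Y_A)=H(Y_x\mid Y_A)\ \ge\ H(Y_x\mid Y_B)=H(Y_{B\cup\{x\}})-H(Y_B),
\]
because conditioning reduces entropy. A submodular function minus a modular function is submodular, so $\mathcal{I}$ is submodular.

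\emph{Greedy guarantee.} Since $\mathcal{I}$ is monotone, submodular, and normalized with $\mathcal{I}(\emptyset)=0$ on a finite ground set, the Nemhauser--Wolsey--Fisher theorem gives the $(1-1/e)$ bound for $k$-step greedy under a cardinality constraint. For completeness I would recall the one-line induction. Submodularity gives $\mathrm{OPT}-\mathcal{I}(S_t)\le k\,[\mathcal{I}(S_{t+1})-\mathcal{I}(S_t)]$, so $\mathrm{OPT}-\mathcal{I}(S_k)\le(1-1/k)^k\,\mathrm{OPT}\le e^{-1}\mathrm{OPT}$.

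Finally, I will remark on the scope of the guarantee. It applies to exact, unweighted greedy on $\mathcal{I}$. The weights $w_x$ and the Monte-Carlo estimation of $H(Y_S)$ used in the implementation fall outside the proposition.
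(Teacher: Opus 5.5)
Your proposal is correct and follows the paper's proof essentially step for step. Both rewrite $\mathcal{I}(S)$ as $I(Y_S;\Theta)$, get monotonicity from the chain rule, get submodularity from conditional independence given the hypothesis together with ``conditioning reduces entropy,'' and finish with Nemhauser--Wolsey--Fisher; your ``submodular entropy minus a modular term'' is the same computation the paper does directly on marginal gains. Your explicit mixture-of-products reading of the joint entropy, the check that $\mathcal{I}(\emptyset)=0$, and the caveat that the bound covers exact unweighted greedy rather than the $w_x$-weighted Monte-Carlo implementation are useful clarifications that the paper leaves implicit.
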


\begin{proof}
Let $A \in \{a_1, \dots, a_K\}$ be a uniform-random ensemble index, so that $\mathcal{I}(S) = I(Y_S;\, A \mid X_S)$ rewrites Eq.~\eqref{eq:batchbald-def}. Conditional on a specific hypothesis $A = a_k$, distinct annotators label examples independently, so $\{Y_i\}_{i \in S}$ are conditionally independent given $A$. This gives the key decomposition
\[
H(Y_S \mid X_S, A) \;=\; \sum_{i \in S} H(Y_i \mid X_i, A).
\]

\emph{Monotonicity.} For any $S$ and $x \notin S$, the chain rule yields
\[
\mathcal{I}(S \cup \{x\}) - \mathcal{I}(S) \;=\; I(Y_x;\, A \mid X_x, Y_S, X_S) \;\geq\; 0,
\]
since conditional mutual information is non-negative.

\emph{Submodularity.} For any $T \subseteq S$ and $x \notin S$, write the marginal gain at $T$ as
\begin{align*}
\mathcal{I}(T \cup \{x\}) - \mathcal{I}(T) \;&=\; H(Y_x \mid X_x, Y_T) \\
&\quad - H(Y_x \mid X_x, A),
\end{align*}
where the conditional-independence decomposition drops $Y_T$ from the second term. Similarly,
\begin{align*}
\mathcal{I}(S \cup \{x\}) - \mathcal{I}(S) \;&=\; H(Y_x \mid X_x, Y_S) \\
&\quad - H(Y_x \mid X_x, A).
\end{align*}
Conditioning on a superset cannot increase entropy, so $H(Y_x \mid X_x, Y_T) \geq H(Y_x \mid X_x, Y_S)$ for $T \subseteq S$. Subtracting the two marginal gains,
\[
\mathcal{I}(T \cup \{x\}) - \mathcal{I}(T) \;\geq\; \mathcal{I}(S \cup \{x\}) - \mathcal{I}(S),
\]
which is the diminishing-returns property defining submodularity.

\emph{Greedy bound.} $\mathcal{I}$ is monotone submodular and non-negative, so Nemhauser--Wolsey--Fisher \citep{nemhauser1978analysis} directly gives
\[
\mathcal{I}(S_{\mathrm{greedy}}) \;\geq\; (1 - 1/e)\,\max_{|S| \leq k} \mathcal{I}(S). \qedhere
\]
\end{proof}

\section{Feature Enrichment for Complex-Reasoning Tasks}
\label{app:enrichment}

Text-classification tasks like ChemProt give an annotator everything it needs in the raw input. Claim-verification tasks like FEVER, SciFact, and VitaminC do not. What actually matters there is the relationship between a claim and a piece of evidence, and a regex over either side alone cannot recover that relationship. In an early FEVER pilot the agent loop saw only the raw text and produced a pool that always predicted NotEnoughInfo.

We extend EvoPool to these tasks without changing the agent loop. For every claim and evidence pair we precompute a small set of comparison features once at dataset preparation and attach them to the example metadata. The agents then write annotators that combine these ready-made features instead of trying to reconstruct them from text.

The simplest features need no model and capture what a careful reader would check by eye. They include the share of vocabulary the claim and evidence have in common, whether they mention the same named entities, whether one side carries a negation that the other does not, whether the numbers agree, and the length ratio between the two. Three model-backed signals are added on top at a one-time preprocessing cost. A small MiniLM sentence encoder \citep{reimers2019sbert} gives a semantic similarity score. A DeBERTa-v3-base NLI model \citep{he2021deberta} pretrained on MNLI and FEVER-NLI gives entail, neutral, and contradict probabilities. A small English dependency parser pulls out the subject, verb, and object of the claim and checks each one inside the evidence, and a WordNet lookup extends the antonym table.

The lexical features alone are already enough for the agent loop to author non-trivial Supports, Refutes, and NotEnoughInfo annotators on FEVER as reported in Table~\ref{tab:top-lf-fever}. The model-backed signals are added on top for FEVER and SciFact in the main table. Because the enriched data is a strict superset of the original, annotators written against the base schema stay valid, and the enrichment models are all publicly available pretrained checkpoints with no recurring cost at training or deployment.

\paragraph{Fair access across methods.}
All baselines (Alchemist, DataSculpt, LLM-Annotation, LLM-Eval) and EvoPool receive identical pre-computed features through the example metadata, so differences in performance reflect each method's ability to compose those features into rules rather than differential feature access. On FEVER, Alchemist and DataSculpt reach only $0.167$ and $0.034$ test macro-F1 with the same enriched metadata, while EvoPool reaches $0.812$ and LLM-Annotation $0.760$, isolating compositional multi-agent authoring as the source of EvoPool's lift on complex-reasoning tasks.

\section{Experiment Details}
\label{app:hparams}

This appendix consolidates the practical details needed to reproduce EvoPool: dataset statistics, training configuration for both the EvoPool training-time loop and the downstream models, compute resources, and the evaluation protocol.

\subsection{Dataset Statistics}
\label{app:datasets}

Table~\ref{tab:dataset-stats} reports the per-dataset statistics across the 10 evaluation datasets organized into the four task categories of Sec.~\ref{sec:setup}.

\paragraph{Train and test splits.}
Train uses the full processed train split with labels removed. Test uses the full test split with no subsampling. For multi-label datasets (Ohsumed, PubMed) we evaluate against the multi-hot ground-truth target.

\paragraph{Validation budget.}
We standardize the validation split to a fixed budget of $|\mathcal{D}_{\text{val}}|=500$ examples across all datasets. Two datasets have a validation set smaller than $500$ (SciFact $=210$, Claude9 $=225$). The same fixed $\mathcal{D}_{\text{val}}$ is reused across the EvoPool selection gate, EvoAgg's LR-CV fitting, and CFT clean-label refinement.

\paragraph{Licenses and terms of use.}
All datasets and pretrained models we use are publicly released for research, and we access each LLM API under its standard terms of service. Our use is consistent with each artifact's intended research purpose, and any code and annotator pools we release are for research use only.

\subsection{Training Details}
\label{app:training}

\paragraph{EvoPool training-time loop.}
The loop runs for $K=12$ generations with gpt-4o-mini as the primary agent backbone, decoding temperature $0.5$, and single run with fixed seed $42$. Cross-backbone variants such as gpt-5-mini, gemini-3.1-pro-preview, and deepseek-v4-flash hold every other configuration constant and only swap the LLM author. The Generator, Improver, and Refiner agents are re-invoked from scratch every generation with no persistent state. All LLM calls go through the OpenAI, Google, and DeepSeek APIs, and we cache responses by prompt hash so that resumed runs do not re-pay for already-computed calls. Table~\ref{tab:framework-hparams} lists the full configuration.

\begin{table}[h]
\centering
\small
\setlength{\tabcolsep}{4pt}
\renewcommand{\arraystretch}{1.04}
\caption{Default EvoPool framework hyperparameters used for all experiments.}
\label{tab:framework-hparams}
\begin{tabular}{@{}ll|l@{}}
\toprule
Block & Hyperparameter & Value \\
\midrule
\multirow{3}{*}{LLM}
 & Primary backbone         & gpt-4o-mini \\
 & Decoding temperature     & 0.5 \\
 & Seed                     & 42 \\
\midrule
Loop
 & \# generations $K$       & 12 \\
\midrule
\multirow{3}{*}{Generator}
 & \# calls per generation  & 18 \\
 & Min val precision        & 0.30 \\
 & Min fires                & 5 \\
\midrule
\multirow{5}{*}{Improver}
 & \# calls per generation  & 6 \\
 & Min val precision        & 0.25 \\
 & Min fires                & 5 \\
 & Per-class F1 target      & $<\!0.40$ \\
 & \# pos examples/class    & 8 \\
\midrule
\multirow{3}{*}{Refiner}
 & Min precision to refine  & 0.55 \\
 & Min iter to activate     & 3 \\
 & Max Jaccard overlap      & 0.95 \\
\midrule
\multirow{4}{*}{Selection gate}
 & $\tau_{\text{jac}}$ (Jaccard)    & 0.95 \\
 & $\tau_{\text{prec}}$ (precision) & 0.25 \\
 & $\tau_{\text{fires}}$ (fires)    & 5 \\
 & $\tau_{\text{abl}}$ (ablation)   & 0.001 \\
\midrule
\multirow{3}{*}{BatchBALD}
 & Batch size $k_{\max}$    & 60 \\
 & MC samples $M$           & 500 \\
 & Class-priority exp.\ $\gamma$ & 1.0 \\
\midrule
\multirow{3}{*}{EvoAgg}
 & Text embedder            & all-MiniLM-L6-v2 \\
 & LR-CV $Cs$               & $\{0.01, 0.1, 1, 10\}$ \\
 & CV folds                 & 5 (stratified) \\
\bottomrule
\end{tabular}
\end{table}

\paragraph{Downstream models.}
We fine-tune three downstream models on the EvoAgg pseudo-labels: RoBERTa-large with full fine-tuning, Qwen3-1.7B and Llama-3.1-8B with LoRA \citep{hu2022lora}. Table~\ref{tab:downstream-hparams} summarizes the per-model training configuration. All three use AdamW with a linear warmup followed by linear decay schedule. The classification head is a single $[\mathrm{CLS}]$ representation for RoBERTa and a last-token representation for the decoder LoRA models, mapped through a softmax over $|\mathcal{C}|$ classes. Multi-label datasets replace the softmax with $|\mathcal{C}|$ independent sigmoid heads trained with binary cross-entropy on the multi-hot EvoAgg pseudo-target.

\begin{table}[h]
\centering
\small
\setlength{\tabcolsep}{3pt}
\renewcommand{\arraystretch}{1.04}
\caption{Downstream model training hyperparameters. RoBERTa-large is fully fine-tuned. Qwen3-1.7B and Llama-3.1-8B use LoRA with the listed adapter configuration. Effective batch size for the LoRA models is $\text{batch\_size} \times \text{grad\_accum}$.}
\label{tab:downstream-hparams}
\resizebox{\linewidth}{!}{%
\begin{tabular}{@{}l|ccc@{}}
\toprule
Hyperparameter      & RoBERTa-large & Qwen3-1.7B & Llama-3.1-8B \\
\midrule
Tuning method        & Full FT  & LoRA          & LoRA \\
LoRA rank $r$        & ---      & 16            & 16 \\
LoRA $\alpha$        & ---      & 32            & 32 \\
LoRA dropout         & ---      & 0.05          & 0.05 \\
LoRA targets         & ---      & q/k/v/o\_proj & q/k/v/o\_proj \\
Optimizer            & AdamW    & AdamW         & AdamW \\
Pre-train LR         & 2e-5     & 1e-4          & 1e-4 \\
CFT LR               & 2e-5     & 5e-5          & 5e-5 \\
Weight decay         & 0.01     & 0.0           & 0.0 \\
Batch size           & 32       & 8             & 8 \\
Grad accum.\ steps   & 1        & 2             & 2 \\
Warmup ratio         & 0.10     & 0.06          & 0.06 \\
Pre-train epochs     & 3        & 3             & 3 \\
CFT epochs           & 5        & 5             & 5 \\
Max seq.\ length     & 128      & 256           & 256 \\
Precision            & FP16     & BF16          & BF16 \\
Seed                 & 42       & 42            & 42 \\
\bottomrule
\end{tabular}}
\end{table}

\paragraph{Multi-label setting.}
For Ohsumed and PubMed, we replace the softmax classification head with $|\mathcal{C}|$ independent sigmoid heads trained with binary cross-entropy on the multi-hot EvoAgg pseudo-target. Per-class decision threshold defaults to $0.5$, with no threshold tuning on validation.

\paragraph{Self-Training baseline.}
Following Noisy Student \citep{xie2020selftraining}, Self-Training is a two-stage procedure. We first train the downstream model on the available validation set with clean labels, then use that teacher model to pseudo-label the unlabeled train set, and finally re-train the same downstream architecture on the pseudo-labeled train set. We pair this with a separate \emph{Golden} cell (sometimes shown as \emph{Golden only}) that trains on the validation labels alone and serves as the validation-only supervised reference. Both cells use the same optimizer, learning rate, and epoch schedule as the pseudo-label pre-train configuration in Table~\ref{tab:downstream-hparams}.

\paragraph{Continued fine-tuning.}
After pseudo-label pre-training, we additionally fine-tune the downstream model on the full available validation set following the protocol of \citet{zhang2024stronger}. CFT uses the same optimizer and schedule as the pre-training stage but with the listed cft\_epochs and CFT learning rate.

\subsection{Compute Resources}
\label{app:compute}

All downstream training and Self-Training experiments run on a single NVIDIA A100 80GB GPU. The EvoPool training-time loop itself is CPU-only: agents call the LLM API, and BatchBALD query selection runs in well under a minute per generation on a single CPU (see App.~\ref{app:batchbald-full}).

\subsection{Evaluation Details}
\label{app:eval}

\paragraph{Annotator-level metrics.}
We report five metrics on both validation and test for the authored pool's predictions: coverage, accuracy, accuracy-on-covered, sample-F1, and macro-F1. Coverage is the fraction of rows on which at least one annotator fires. Accuracy is computed over the full split with abstention treated as an incorrect prediction. Accuracy-on-covered is the same accuracy restricted to the covered subset and is reported as a diagnostic only. All F1 metrics are abstain-aware: when an annotator pool abstains on an example, that example still contributes to the denominator of recall for its ground-truth class, so a high-coverage pool with moderate accuracy can outperform a low-coverage pool with high accuracy-on-covered. EvoAgg by design has coverage $=1.0$ and so the abstain-aware and abstain-naive macro-F1 coincide for our headline cells.

\paragraph{Downstream metrics.}
For downstream models we report accuracy, sample-F1, and macro-F1 on test, with macro-F1 as the headline metric used throughout the main tables. For multi-label datasets we additionally report sample-F1, the average per-example F1 across all label positions, since classical accuracy degenerates to exact-match subset accuracy under multi-hot targets.

\paragraph{Macro-F1 reporting convention.}
Throughout the paper we report macro-F1, i.e.\ the unweighted mean of per-class F1, as the primary metric since the high-stakes specialized and multi-label datasets exhibit strong class imbalance. Weighted-F1 columns are reported only in App.~\ref{app:full-annotator} and App.~\ref{app:full-downstream}.

\section{Additional Results}
\label{app:additional}

\begin{table}[h]
\centering
\small
\setlength{\tabcolsep}{6pt}
\caption{\textbf{Darwinian vs.\ two Lamarckian variants.} Experiment done on ChemProt with gpt-4o-mini backbone. We report annotator-level macro-F1 on test set.}
\label{tab:memory-ablation}
\begin{tabular}{@{}l|cc@{}}
\toprule
Variant & Test macro-F1 & $\Delta$ \\
\midrule
\rowcolor{cyan!5}
\textbf{Darwinian (EvoPool)}      & \textbf{0.5826} & --- \\
Lamarckian (Reflector memory)     & 0.5688 & $-0.014$ \\
Lamarckian (Full memory)          & 0.5428 & $-0.040$ \\
\bottomrule
\end{tabular}
\end{table}

\subsection{Darwinian vs.\ Lamarckian Agents}
\label{app:memory-ablation}

The default EvoPool configuration is Darwinian: each agent (Generator, Improver, Refiner) is re-invoked from scratch every generation, with the annotator pool $P_k$ as the only inheritance channel. The multi-agent literature \citep{shinn2023reflexion,wang2023voyager,packer2023memgpt} has converged on cross-iteration agent state (verbal lessons, per-class personas, hard-negative caches) as a Lamarckian alternative. We ablate our Darwinian configuration against two Lamarckian variants on ChemProt:

\begin{itemize}[leftmargin=13pt,nosep]
\item \textbf{Darwinian}, ours: no cross-iteration agent state. The pool itself carries all information across generations.
\item \textbf{Lamarckian, Reflector memory}: after each generation, a separate Reflector agent summarizes the pool's per-class performance into a short verbal ``lessons'' note such as ``class 4 is under-served, add Agonist patterns'', which is appended to the next generation's Generator and Improver prompts.
\item \textbf{Lamarckian, Full memory}: per-class persona for the Improver, a verified-rule cache across iterations, competence streaks counting consecutive generations a class was below threshold, and hard-negative examples surfaced from prior generations.
\end{itemize}

Table~\ref{tab:memory-ablation} reports the empirical comparison. The Darwinian configuration achieves the best macro-F1 of the three, with a modest gap of $+0.014$ over Reflector and $+0.040$ over Full memory. It is also materially cheaper at training time. Because the agent prompt does not grow with iteration count, each later-generation call costs the same number of input tokens as the first, while the Lamarckian variants accumulate verbal state and pay an input-token premium that scales linearly with $K$. Along both axes of slightly higher macro-F1 and lower per-call cost, the Darwinian framing is the default choice. The mechanism that explains both observations is the same: pool-level selection already provides the per-class feedback signal directly, so additional agent state mostly amplifies what selection has already conditioned on, rather than introducing new information.

\subsection{Query Selector Ablation}
\label{app:selector-ablation}

The Improver and Refiner act on a small batch of training examples picked by a query selector each generation. EvoPool uses BatchBALD by default (App.~\ref{app:batchbald-full}). Here we ablate the selector while holding every other component of the pipeline fixed. Table~\ref{tab:selector-ablation} reports the ChemProt majority-vote macro-F1 for BatchBALD against two natural baselines, classical uncertainty sampling and uniform random sampling. BatchBALD leads by roughly $0.04$--$0.05$, comparable in magnitude to the EvoAgg-vs-MV gap reported in the main paper, indicating that selector quality is a load-bearing component rather than a free choice. Random performs comparably to uncertainty, consistent with the observation that margin-based uncertainty alone tends to collapse on already-covered easy regions and provides little signal beyond a uniform draw.

\begin{table}[h]
\centering
\small
\setlength{\tabcolsep}{8pt}
\caption{\textbf{Selector ablation}. All experiments are done on ChemProt using gpt-4o-mini backbone and share the default EvoPool configuration with the Generator, Improver, Refiner, and selection gate held fixed, and differ only in the query selector. We report majority-vote test set macro-F1.}
\label{tab:selector-ablation}
\begin{tabular}{@{}l|cc@{}}
\toprule
Selector & MV macro-F1 & $\Delta$ \\
\midrule
\rowcolor{cyan!5}
\textbf{BatchBALD (EvoPool)}   & \textbf{0.4787} & --- \\
Uncertainty sampling           & 0.4300 & $-0.049$ \\
Random                         & 0.4336 & $-0.045$ \\
\bottomrule
\end{tabular}
\end{table}

\subsection{Pool Co-evolution Signals on FEVER}
\label{app:coevolution-fever}

Figure~\ref{fig:coevolution-fever} reproduces the four-panel co-evolution analysis of Figure~\ref{fig:coevolution-chemprot} on FEVER with the gpt-4o-mini backbone. The story is qualitatively similar but quantitatively softer than on ChemProt. Because FEVER has only three classes and the initial pool already covers all three, the agent niches in panel (a) are less differentiated: the Generator authors $19/34/32$ surviving annotators across Supports/Refutes/NEI rather than leaving any class empty, and the Improver is a balanced rebalancer ($25/14/16$) rather than a niche-filler. Panel (b) shows the same staircase activation timing as on ChemProt, but the Refiner is more productive on FEVER, contributing $11$ surviving annotators versus $2$ on ChemProt because the broadened verification-rule variants pass the Jaccard floor more easily on a $3$-class label space. Panel (c)'s stacked bars show the Improver's per-generation target distribution shifting from a Supports-heavy mix at $k{=}1$--$2$ toward more balanced Refutes/NEI contributions in later generations as the gate consumes the easy Supports patterns first. Panel (d) shows broadly similar gate behavior, with $40$--$60$\% of Improver candidates killed and a higher Refiner kill rate.

\begin{figure*}[!t]
\centering
\includegraphics[width=0.95\linewidth]{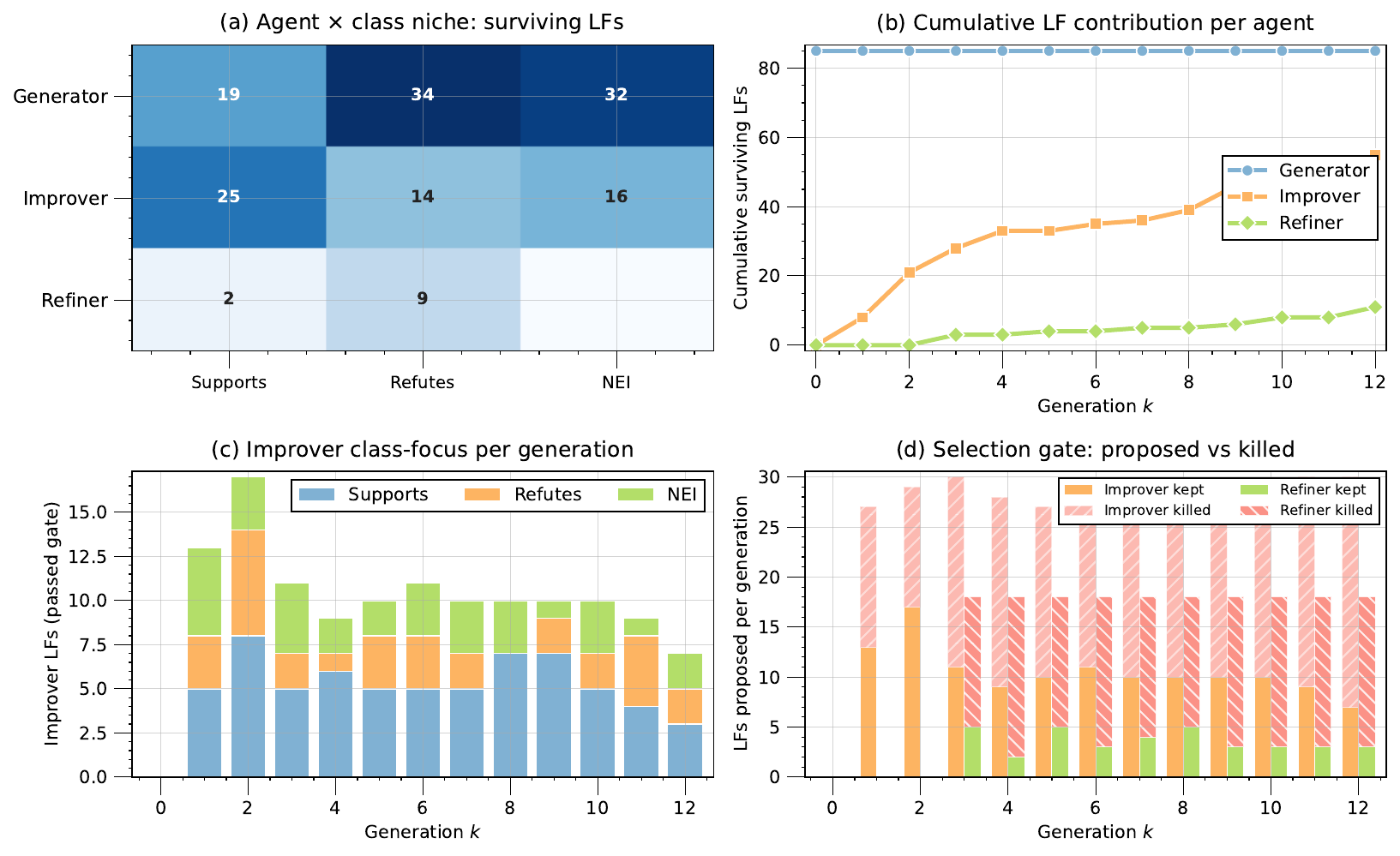}
\vspace{-6pt}
\caption{\textbf{Pool co-evolution signals on FEVER.} Experiment done with gpt-4o-mini backbone. Same panel layout as Figure~\ref{fig:coevolution-chemprot}: (a) agent $\times$ class niche of surviving annotators, (b) cumulative annotator contribution per agent, (c) Improver class-focus per generation (stacked by class since FEVER has only three classes), (d) selection gate proposed vs.\ killed per generation. With only three classes the initial pool already covers all of them, so the agent niches in (a) are less differentiated than on ChemProt. The Refiner is more productive ($11$ surviving annotators vs.\ $2$ on ChemProt).}
\label{fig:coevolution-fever}
\end{figure*}

\begin{table*}[tb!]
\caption{\textbf{Downstream test macro-F1 after continued fine-tuning on the full validation set, across 10 datasets $\times$ 3 downstream models, organized into 4 task categories.} Each method's pool first produces pseudo-labels for the train set. The downstream model is then further fine-tuned on the clean validation labels. Models: RoB$=$RoBERTa-large, Qwen$=$Qwen3-1.7B, Llama$=$Llama-3.1-8B. All pools constructed using gpt-4o-mini backbone.}
\label{tab:cft}
\centering

\begin{minipage}[t]{0.48\linewidth}
\centering\footnotesize
\textbf{(1) General classification}\\[2pt]
\setlength{\tabcolsep}{3pt}
\renewcommand{\arraystretch}{0.95}
\begin{tabular}{@{}ll|ccc@{}}
\toprule
Dataset & Method & RoB & Qwen & Llama \\
\midrule
\multirow{4}{*}{AGNews}
 & Alchemist            & 0.8579 & 0.8307 & 0.8705 \\
 & DataSculpt           & 0.8927 & 0.8765 & 0.8711 \\
 & LLM-Annotation       & \textbf{0.8948} & 0.8926 & \textbf{0.8956} \\
 & \cellcolor{cyan!5}\textbf{EvoPool} & \cellcolor{cyan!5}0.8842 & \cellcolor{cyan!5}\textbf{0.8955} & \cellcolor{cyan!5}0.8899 \\
\cmidrule(lr){1-5}
\multirow{4}{*}{Banking77}
 & Alchemist            & 0.2759 & 0.0682 & 0.2678 \\
 & DataSculpt           & 0.3515 & 0.0787 & 0.3715 \\
 & LLM-Annotation       & 0.7376 & 0.7398 & 0.7915 \\
 & \cellcolor{cyan!5}\textbf{EvoPool} & \cellcolor{cyan!5}\textbf{0.7520} & \cellcolor{cyan!5}\textbf{0.7661} & \cellcolor{cyan!5}\textbf{0.7962} \\
\bottomrule
\end{tabular}

\vspace{6pt}

\textbf{(3) Complex reasoning}\\[2pt]
\setlength{\tabcolsep}{3pt}
\renewcommand{\arraystretch}{0.95}
\begin{tabular}{@{}ll|ccc@{}}
\toprule
Dataset & Method & RoB & Qwen & Llama \\
\midrule
\multirow{4}{*}{FEVER}
 & Alchemist            & 0.2969 & 0.7109 & 0.6390 \\
 & DataSculpt           & 0.3672 & 0.6283 & 0.5083 \\
 & LLM-Annotation       & 0.7767 & 0.7988 & 0.8236 \\
 & \cellcolor{cyan!5}\textbf{EvoPool} & \cellcolor{cyan!5}\textbf{0.8227} & \cellcolor{cyan!5}\textbf{0.8360} & \cellcolor{cyan!5}\textbf{0.8518} \\
\cmidrule(lr){1-5}
\multirow{4}{*}{VitaminC}
 & Alchemist            & 0.1667 & 0.7267 & 0.4642 \\
 & DataSculpt           & 0.3997 & 0.5366 & 0.4319 \\
 & LLM-Annotation       & \textbf{0.7097} & \textbf{0.7411} & 0.7805 \\
 & \cellcolor{cyan!5}\textbf{EvoPool} & \cellcolor{cyan!5}0.7038 & \cellcolor{cyan!5}0.7383 & \cellcolor{cyan!5}\textbf{0.7840} \\
\cmidrule(lr){1-5}
\multirow{4}{*}{SciFact}
 & Alchemist            & 0.2051 & 0.2613 & 0.3248 \\
 & DataSculpt           & 0.2112 & 0.2901 & 0.3609 \\
 & LLM-Annotation       & \textbf{0.5134} & \textbf{0.4697} & 0.4647 \\
 & \cellcolor{cyan!5}\textbf{EvoPool} & \cellcolor{cyan!5}0.2981 & \cellcolor{cyan!5}0.3871 & \cellcolor{cyan!5}\textbf{0.5088} \\
\bottomrule
\end{tabular}
\end{minipage}\hfill
\begin{minipage}[t]{0.48\linewidth}
\centering\footnotesize
\textbf{(2) High-stakes specialized}\\[2pt]
\setlength{\tabcolsep}{3pt}
\renewcommand{\arraystretch}{0.95}
\begin{tabular}{@{}ll|ccc@{}}
\toprule
Dataset & Method & RoB & Qwen & Llama \\
\midrule
\multirow{4}{*}{ChemProt}
 & Alchemist            & 0.2576 & 0.3262 & 0.3803 \\
 & DataSculpt           & 0.3877 & 0.3177 & 0.4082 \\
 & LLM-Annotation       & 0.4366 & 0.3725 & 0.3993 \\
 & \cellcolor{cyan!5}\textbf{EvoPool} & \cellcolor{cyan!5}\textbf{0.5557} & \cellcolor{cyan!5}\textbf{0.5394} & \cellcolor{cyan!5}\textbf{0.5432} \\
\cmidrule(lr){1-5}
\multirow{4}{*}{DDI}
 & Alchemist            & 0.2687 & 0.2671 & 0.2716 \\
 & DataSculpt           & 0.1746 & 0.2955 & 0.3463 \\
 & LLM-Annotation       & 0.2707 & 0.3946 & 0.4549 \\
 & \cellcolor{cyan!5}\textbf{EvoPool} & \cellcolor{cyan!5}\textbf{0.5974} & \cellcolor{cyan!5}\textbf{0.5745} & \cellcolor{cyan!5}\textbf{0.6112} \\
\cmidrule(lr){1-5}
\multirow{4}{*}{Claude9}
 & Alchemist            & 0.2160 & 0.3240 & 0.2526 \\
 & DataSculpt           & 0.1057 & 0.1203 & 0.1610 \\
 & LLM-Annotation       & \textbf{0.5031} & \textbf{0.4890} & \textbf{0.4660} \\
 & \cellcolor{cyan!5}\textbf{EvoPool} & \cellcolor{cyan!5}0.3528 & \cellcolor{cyan!5}0.4500 & \cellcolor{cyan!5}0.4259 \\
\bottomrule
\end{tabular}

\vspace{6pt}

\textbf{(4) Multi-label biomedical}\\[2pt]
\setlength{\tabcolsep}{3pt}
\renewcommand{\arraystretch}{0.95}
\begin{tabular}{@{}ll|ccc@{}}
\toprule
Dataset & Method & RoB & Qwen & Llama \\
\midrule
\multirow{4}{*}{Ohsumed}
 & Alchemist            & 0.2012 & 0.2071 & 0.2335 \\
 & DataSculpt           & 0.2002 & 0.2051 & 0.2661 \\
 & LLM-Annotation       & 0.5442 & 0.5316 & 0.5636 \\
 & \cellcolor{cyan!5}\textbf{EvoPool} & \cellcolor{cyan!5}\textbf{0.6131} & \cellcolor{cyan!5}\textbf{0.6111} & \cellcolor{cyan!5}\textbf{0.6375} \\
\cmidrule(lr){1-5}
\multirow{4}{*}{PubMed}
 & Alchemist            & 0.4967 & 0.4706 & 0.5684 \\
 & DataSculpt           & 0.3774 & 0.4824 & 0.5431 \\
 & LLM-Annotation       & 0.6561 & 0.6622 & 0.6727 \\
 & \cellcolor{cyan!5}\textbf{EvoPool} & \cellcolor{cyan!5}\textbf{0.7346} & \cellcolor{cyan!5}\textbf{0.7265} & \cellcolor{cyan!5}\textbf{0.7354} \\
\bottomrule
\end{tabular}
\end{minipage}

\vspace{-6pt}
\end{table*}

\subsection{Continued Fine-Tuning (CFT)}
\label{app:cft}

After fitting each method's downstream model on its train-set pseudo-labels, we additionally continue fine-tuning the downstream model on the validation set, following the protocol of \citet{zhang2024stronger}. Table~\ref{tab:cft} reports test macro-F1 with CFT applied across the three downstream models. EvoPool+CFT achieves the best or tied-best macro-F1 across the majority of model-dataset cells. SciFact and Claude9 are the boundary cases where LLM annotation+CFT retains the edge, consistent with the pre-CFT ordering. CFT closes a larger fraction of the gap for the weaker one-shot baselines such as Alchemist and DataSculpt than for EvoPool, since EvoPool's pseudo-labels are already close to oracle quality and leave less headroom for clean-label refinement.

\begin{table*}[tb!]
\centering
\small
\renewcommand{\arraystretch}{1.06}
\caption{Downstream test macro-F1 of EvoPool with gpt-4o-mini backbone vs.\ human-LF pseudo-labels across three downstream models on AGNews, ChemProt, Claude9, and Banking77. \textbf{Bold} marks the higher of Human and EvoPool at each no CFT vs.\ CFT pairing per model per task.}
\label{tab:human-comparison-downstream}
\resizebox{0.99\linewidth}{!}{
\begin{tabular}{@{}l|ccc|ccc|ccc|ccc@{}}
\toprule
\multirow{2}{*}{Method}
 & \multicolumn{3}{c|}{\textbf{AGNews}}
 & \multicolumn{3}{c|}{\textbf{ChemProt}}
 & \multicolumn{3}{c|}{\textbf{Claude9}}
 & \multicolumn{3}{c}{\textbf{Banking77}} \\
\cmidrule(lr){2-4}\cmidrule(lr){5-7}\cmidrule(lr){8-10}\cmidrule(lr){11-13}
 & RoB & Qwen & Llama & RoB & Qwen & Llama & RoB & Qwen & Llama & RoB & Qwen & Llama \\
\midrule
Human                                & 0.8280 & 0.8383 & 0.7952 & 0.4679 & 0.4718 & 0.4679 & 0.1945 & \textbf{0.5203} & \textbf{0.5190} & 0.3986 & 0.3520 & 0.3941 \\
Human + CFT                          & 0.8852 & 0.8612 & 0.8640 & 0.5170 & 0.5192 & 0.5170 & 0.3773 & 0.4358 & \textbf{0.5750} & 0.5696 & 0.4831 & 0.6471 \\
\rowcolor{cyan!5} EvoPool (ours)     & \textbf{0.8860} & \textbf{0.8951} & \textbf{0.8859} & \textbf{0.5396} & \textbf{0.5383} & \textbf{0.5909} & \textbf{0.3413} & 0.3827 & 0.3908 & \textbf{0.7332} & \textbf{0.7150} & \textbf{0.7241} \\
\rowcolor{cyan!5} EvoPool + CFT      & \textbf{0.8842} & \textbf{0.8955} & \textbf{0.8899} & \textbf{0.5557} & \textbf{0.5394} & \textbf{0.5432} & 0.3528 & \textbf{0.4500} & 0.4259 & \textbf{0.7520} & \textbf{0.7661} & \textbf{0.7962} \\
\bottomrule
\end{tabular}
}
\vspace{-10pt}
\end{table*}

\subsection{Comparison to Hand-Crafted Human LFs}
\label{app:human}

WRENCH \citep{zhang2021wrench} provides a hand-authored labeling-function pool for AGNews with 9 LFs. \citet{zhang2024stronger} provide hand-authored pools for ChemProt with 25 LFs, Claude9 with 41 LFs, and Banking77 with 335 LFs covering all 77 intents. These pools represent expert manual effort to author labeling rules by inspection.

\paragraph{Annotator-level.}
Table~\ref{tab:human-comparison-annotator} compares EvoPool to the hand-crafted pools on test macro-F1. EvoPool beats the human pool on AGNews by $+0.203$, on ChemProt by $+0.110$, and on Banking77 by $+0.398$. On Claude9 the human pool leads EvoPool by $0.191$. The Claude9 boundary case reflects expert legal-text reading that our machine-authored pool does not match.

\begin{table}[h]
\centering
\small
\setlength{\tabcolsep}{4pt}
\caption{Annotator-level test macro-F1 of hand-crafted human annotators vs.\ EvoPool with gpt-4o-mini backbone. \textbf{Bold} marks the higher of the two per dataset.}
\label{tab:human-comparison-annotator}
\resizebox{\linewidth}{!}{%
\begin{tabular}{@{}l|cccc@{}}
\toprule
Method & AGNews & ChemProt & Banking77 & Claude9 \\
\midrule
Human Annotators                   & 0.6610 & 0.4722 & 0.3039 & \textbf{0.5551} \\
\rowcolor{cyan!5} EvoPool (ours) & \textbf{0.8642} & \textbf{0.5826} & \textbf{0.7023} & 0.3645 \\
\bottomrule
\end{tabular}}
\end{table}

\paragraph{Downstream model.}
Table~\ref{tab:human-comparison-downstream} compares the same pools on downstream-model test macro-F1. EvoPool beats the human pool on AGNews by $+0.057$ to $+0.091$ before CFT and by up to $+0.034$ on the LoRA models after CFT, while essentially tying on RoBERTa-large, since clean-label fine-tuning compensates for differences in pseudo-label quality. EvoPool beats the human pool on ChemProt by $+0.067$ to $+0.123$ before CFT and $+0.020$ to $+0.039$ after CFT, and on Banking77 by $+0.33$ to $+0.36$ before CFT and $+0.15$ to $+0.28$ after CFT. On Claude9, EvoPool beats the human pool on RoBERTa-large by $+0.15$ before CFT but loses on the LoRA-tuned models by $-0.13$ to $-0.14$ before CFT. After CFT, EvoPool overtakes Human on Qwen by $+0.014$ while Llama remains $-0.149$ behind. Across the LLM-weak datasets EvoPool reliably beats the human pool on RoBERTa-large and on the long-tail multi-class tasks. Specialist human LFs hold their own on extreme long-tail legal classification under LoRA-tuned models.

\subsection{Deployment Latency and Cost at Scale}
\label{app:latency}

A trained EvoPool decision rule is a self-contained Python program that runs on CPU at sub-millisecond per-example latency, so the marginal cost of labeling additional examples is effectively zero. Direct LLM annotation, in contrast, must issue a fresh API call per example, paying both wall time and dollar cost at every inference. We quantify this gap on ChemProt by measuring per-example latency and dollar cost over a target deployment workload of 100K examples.

\paragraph{Setup.}
We compare the ChemProt EvoPool annotator pool against direct annotation by four LLM APIs: gpt-4o-mini, gpt-5-mini, gemini-3.1-pro-preview, and deepseek-v4-flash. EvoPool wall time is measured by running the final pool.py over the ChemProt test split (1607 examples) on a single CPU thread and extrapolating to 100K. For the LLM APIs we issue 50 sequential single-thread calls per provider on real ChemProt test prompts, measuring per-call wall time directly from the HTTPS round trip with no concurrency and no warm-cache reuse, and extrapolate to 100K. Token cost per call is computed from the median input and output token counts reported by each provider, multiplied by published per-million-token rates.

\paragraph{Results.}
Figure~\ref{fig:latency-chemprot} reports wall time and dollar cost on a log scale. EvoPool annotates 100K examples in $17.2$ seconds at zero dollar cost. The cheapest LLM option, gpt-4o-mini, takes $21.9$ hours and \$$2.95$, $\approx 4500\times$ slower at non-zero recurring cost. The reasoning-class APIs are slower still: gemini-3.1-pro-preview costs \$$22.38$ over $147.6$ hours and deepseek-v4-flash costs \$$31.51$ over $137.2$ hours, because both APIs return long reasoning chains that inflate the per-call wall time and output-token spend. Across all four providers EvoPool's per-example latency advantage is between $4{,}500\times$ and $31{,}000\times$, and its marginal dollar cost advantage is unbounded since CPU inference of a static Python program is free at the margin. The one-time LLM cost of synthesizing the pool was \$$0.21$ (see Figure~\ref{fig:cost-perf-chemprot}), which amortizes against direct gpt-4o-mini annotation in under $7{,}500$ examples. For deployment scales beyond a few thousand examples programmatic supervision dominates both axes by orders of magnitude.

\begin{figure}[h]
\centering
\includegraphics[width=\linewidth]{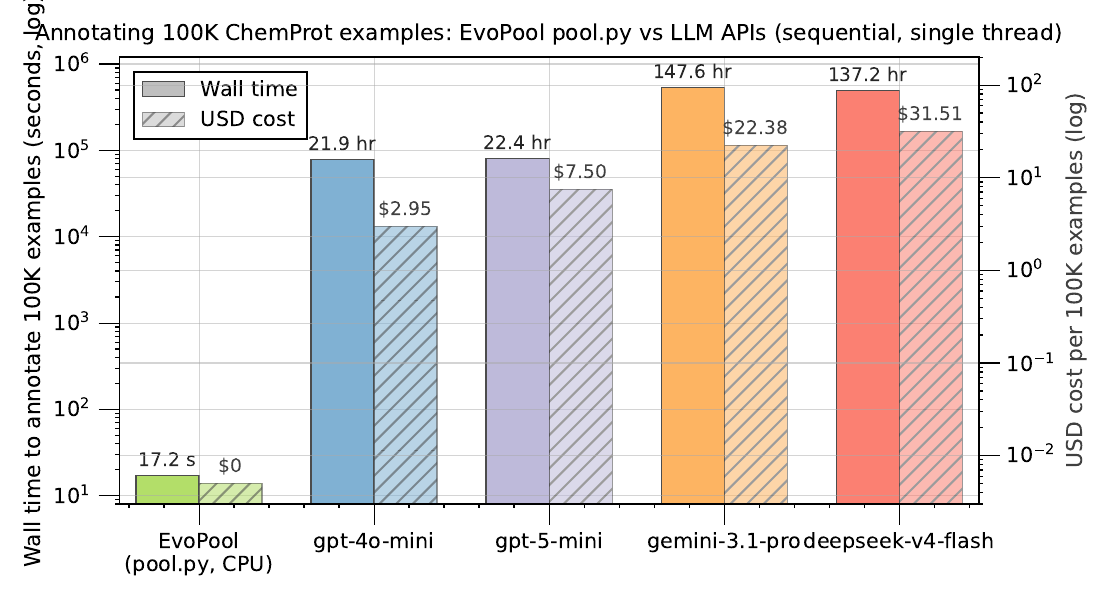}
\vspace{-12pt}
\caption{\textbf{Deployment latency and cost.} Experiment runs over 100K ChemProt examples. Bars show wall time (solid, left axis, log scale) and dollar cost (hatched, right axis, log scale) for EvoPool versus four LLM annotation APIs. EvoPool finishes in $17.2$ seconds at zero recurring cost. The cheapest LLM option, gpt-4o-mini, costs $21.9$ hours and \$$2.95$, and the reasoning-class APIs, gemini-3.1-pro-preview and deepseek-v4-flash, approach $150$ hours and \$$22$--$32$.}
\label{fig:latency-chemprot}
\end{figure}

\subsection{Pool-Size Efficiency of the Selection Gate}
\label{app:pool-efficiency}

The leave-one-out ablation in Table~\ref{tab:ablation-loo} reports approximately a $0.04$ drop in MV macro-F1 when the selection gate is disabled, but it does not say where that drop comes from. We unpack the gate's contribution by plotting test macro-F1 against pool size across all $12$ generations for the full pipeline with selection gate on and for the $-$Selection ablation with selection gate off, with precision floor, deduplication, post-iteration ablation, and pool prune all disabled. Figure~\ref{fig:gate-pareto} shows the resulting trajectories.

\begin{figure}[h]
\centering
\includegraphics[width=\linewidth]{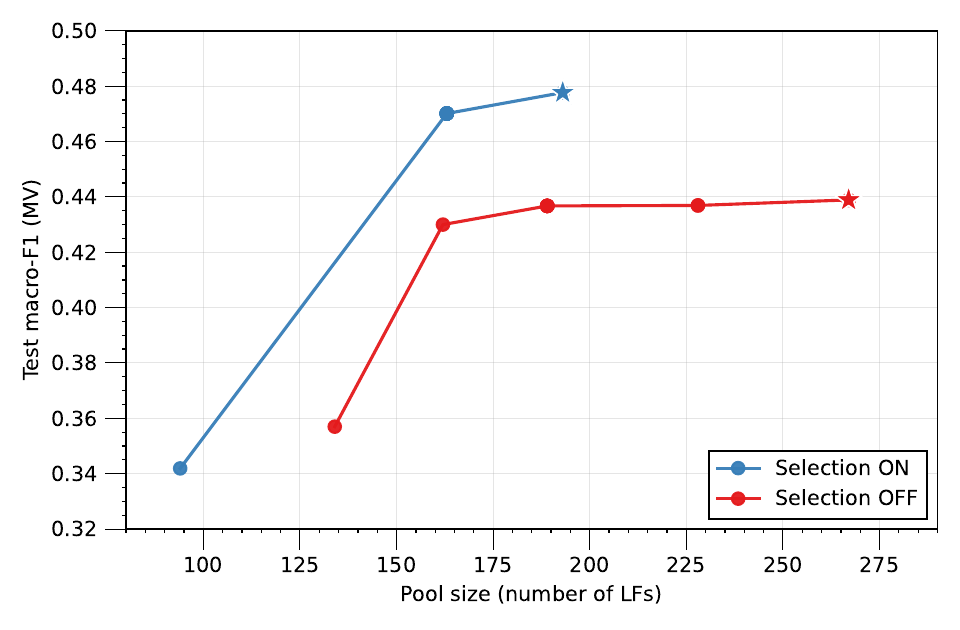}
\vspace{-12pt}
\caption{\textbf{Selection gate Pareto-dominates on pool size and F1.} Experiment done on ChemProt using gpt-4o-mini backbone. Each marker is one generation, and the star marks the final ($k{=}12$) iteration for each configuration. EvoPool with selection on reaches $193$ surviving annotators at test MV macro-F1 $0.479$, with the bulk of the F1 captured by $k{=}1$ at $163$ annotators. The $-$selection ablation grows to $267$ annotators but never exceeds $0.439$. Every point on the red trajectory is strictly to the lower-right of a corresponding point on the blue trajectory.}
\label{fig:gate-pareto}
\end{figure}

The gated configuration reaches $0.470$ macro-F1 at $163$ annotators by the first generation and gains a further $0.009$ at $k{=}12$ when the Refiner is admitted, ending at $193$ annotators. The ungated configuration adds candidates every generation, growing to $267$ annotators by $k{=}12$, yet its macro-F1 saturates at $0.439$ from $k{=}2$ onward and never reaches the gated trajectory's level. Every iteration of the gate-off trajectory is Pareto-dominated by the corresponding iteration of the gated trajectory: more annotators, lower F1. The gate is therefore doing two jobs simultaneously, pruning low-value candidates so that the pool grows leaner ($-28\%$ in size, $193$ vs $267$ annotators), and concentrating selection pressure on high-precision candidates so that the surviving pool also achieves $+0.039$ higher F1. The leaner pool also amortizes into the per-example deployment cost reported in Appendix~\ref{app:latency}, since each query at inference time executes every annotator in the pool.

\subsection{Hyperparameter Ablation}
\label{app:hparam-ablation}

We sweep five hyperparameter axes around the default ChemProt configuration, perturbing one axis at a time while holding all others at their default values, and re-running the full pipeline. Table~\ref{tab:hparam-ablation} reports annotator-level EvoAgg test macro-F1 and Llama-3.1-8B downstream test macro-F1 for each cell. The top row is the default configuration. The default value within each axis is omitted from the rows since it matches the top row by construction.

\begin{table}[h]
\centering
\small
\setlength{\tabcolsep}{8pt}
\caption{\textbf{Hyperparameter ablation.} Experiment done on ChemProt using gpt-4o-mini backbone. Annotator column is EvoAgg test macro-F1. Downstream column is Llama-3.1-8B test macro-F1 fine-tuned on the cell's EvoAgg pseudo-labels.}
\label{tab:hparam-ablation}
\begin{tabular}{@{}l|cc@{}}
\toprule
Configuration & Annotator & Downstream \\
\midrule
\rowcolor{cyan!5}
Default & 0.5826 & 0.5909 \\
\midrule
\multicolumn{3}{@{}l}{\textit{Number of generations $K$, default $K{=}12$}} \\
$K{=}3$         & 0.5885 & 0.5948 \\
$K{=}6$         & 0.5574 & 0.5647 \\
$K{=}20$        & 0.5602 & 0.5698 \\
\midrule
\multicolumn{3}{@{}l}{\textit{Generator minimum precision, default $0.30$}} \\
$0.15$         & 0.5391 & 0.5404 \\
$0.25$         & 0.5407 & 0.5494 \\
$0.35$         & 0.5837 & 0.5768 \\
$0.55$         & 0.5728 & 0.5733 \\
\midrule
\multicolumn{3}{@{}l}{\textit{Improver $n_{\text{calls}}$ per generation, default $6$}} \\
$1$            & 0.5420 & 0.5437 \\
$2$            & 0.5952 & 0.6029 \\
$4$            & 0.5668 & 0.5627 \\
\midrule
\multicolumn{3}{@{}l}{\textit{Sampling temperature, default $0.5$}} \\
$0.0$          & 0.5879 & 0.5893 \\
$1.0$          & 0.5563 & 0.5629 \\
\midrule
\multicolumn{3}{@{}l}{\textit{Improver positive examples per class, default $8$}} \\
$0$            & 0.5771 & 0.5767 \\
$2$            & 0.5398 & 0.5426 \\
$4$            & 0.5642 & 0.5674 \\
\bottomrule
\end{tabular}
\end{table}

Across all five axes the test macro-F1 stays within a narrow band of single-seed variance around the default, indicating that EvoPool is insensitive to these hyperparameter choices. We keep the default values for all main-paper results to preserve cross-experiment comparability.

\subsection{Aggregator on Baseline Pools}
\label{app:agg-on-baselines}

The $+0.104$ EvoAgg-vs-MV lift on EvoPool from Table~\ref{tab:ablation-loo} raises a concern: would baseline pools benefit equally from EvoAgg, in which case our gain would be largely the aggregator? We apply the same EvoAgg LR-CV model to the Alchemist and DataSculpt pools on ChemProt while holding the pool fixed, and add a text-only logistic regression on sentence-BERT features with no annotator votes as a reference. Results are in Table~\ref{tab:agg-on-baselines}.

\begin{table}[h]
\centering
\small
\setlength{\tabcolsep}{4pt}
\caption{\textbf{EvoAgg applied to each method's ChemProt annotator pool}. Experiment done with gpt-4o-mini backbone. MV is majority-vote test macro-F1, EvoAgg is the text-aware aggregator's test macro-F1, $\Delta$ is the within-pool lift over MV. Llama column is the downstream test macro-F1 of Llama-3.1-8B fine-tuned on each pool's EvoAgg pseudo-labels. The first row reports a text-only logistic regression on sentence-BERT features with no annotator votes. Its MV cell shows EvoPool with MV as the no-text reference and $\Delta$ is the text-only EvoAgg lift over that reference.}
\label{tab:agg-on-baselines}
\begin{tabular}{@{}l|ccc|c@{}}
\toprule
Pool       & MV     & EvoAgg & $\Delta$ & Llama \\
\midrule
Text-only (no pool) & 0.479 & 0.464 & $-0.015$ & 0.466 \\
Alchemist  & 0.224 & 0.515 & $+0.291$ & 0.522 \\
DataSculpt & 0.181 & 0.516 & $+0.335$ & 0.518 \\
\rowcolor{cyan!5}
\textbf{EvoPool (ours)} & \textbf{0.479} & \textbf{0.583} & $+0.104$ & \textbf{0.591} \\
\bottomrule
\end{tabular}
\end{table}

EvoAgg is genuinely a strong aggregator. It lifts Alchemist from $0.224$ to $0.515$, a $+0.291$ gain, and DataSculpt from $0.181$ to $0.516$, a $+0.335$ gain, both substantially larger than its $+0.104$ lift on EvoPool. A weak pool whose votes carry mostly noise benefits more from the semantic features EvoAgg adds on top of the votes, so the aggregator does compensate for low-quality input. EvoAgg is not narrowly co-designed with EvoPool's pool.

Pool composition still matters. At fixed EvoAgg, EvoPool reaches $0.583$ versus Alchemist's $0.515$ and DataSculpt's $0.516$, a $+0.067$ gap attributable entirely to pool. The text-only baseline at $0.464$ is below EvoPool with MV at $0.479$, which uses only annotator votes with no text features, ruling out the alternative explanation that semantic features alone drive the result. The validation-test gap sharpens the picture: Alchemist reaches $0.861$ on validation but only $0.515$ on test, DataSculpt $0.822$ versus $0.516$, while EvoPool's validation and test agree within $0.01$. This $0.30$ to $0.34$ gap on the baselines reflects EvoAgg's LR-CV overfitting a small validation set when only a handful of useful annotators fire. EvoPool's larger pool of high-precision annotators carries enough independent signal that the cross-validated regressor generalizes. EvoAgg and the pool are complementary: a stronger pool extends EvoAgg's ceiling, and a stronger aggregator extends the floor for any pool.

\subsection{Full Annotator-Level Metrics (by Category)}
\label{app:full-annotator}

The full 5-metric breakdown (Coverage, Accuracy, Accuracy-on-covered, F1, macro-F1 on both validation and test) expanding Table~\ref{tab:annotator-main} is split into four per-category tables for readability: general classification (Table~\ref{tab:annotator-full-general}), high-stakes specialized (Table~\ref{tab:annotator-full-specialized}), complex reasoning / verification (Table~\ref{tab:annotator-full-complex}), and multi-label biomedical (Table~\ref{tab:annotator-full-multilabel}). Macro-F1 columns reproduce the main table, and the additional columns make Coverage gaps and Accuracy on the covered subset visible.

\begin{table*}[tb!]
\caption{\textbf{Annotator-level metrics: General classification.} Topic classification on AGNews and intent classification on Banking77, gpt-4o-mini backbone.}
\label{tab:annotator-full-general}
\centering
\small
\setlength{\tabcolsep}{3pt}
\renewcommand{\arraystretch}{1.06}
\resizebox{\linewidth}{!}{
\begin{tabular}{@{}l|ccccc|ccccc@{}}
\toprule
\multirow{2}{*}{Method}
 & \multicolumn{5}{c|}{\textbf{Validation}}
 & \multicolumn{5}{c}{\textbf{Test}} \\
\cmidrule(lr){2-6}\cmidrule(lr){7-11}
 & Cov & Acc & Acc-c & F1 & macF1 & Cov & Acc & Acc-c & F1 & macF1 \\
\midrule
\multicolumn{11}{c}{\textit{AGNews, 4 classes, news topic classification}} \\
\midrule
Alchemist                       & 0.6900 & 0.4150 & 0.6014 & 0.4724 & 0.4724 & 0.6708 & 0.4072 & 0.6071 & 0.4668 & 0.4668 \\
DataSculpt                      & 0.4530 & 0.3820 & 0.8433 & 0.5151 & 0.5151 & 0.4597 & 0.3754 & 0.8165 & 0.4994 & 0.4994 \\
LLM annotation (gpt-4o-mini)        & \textbf{1.0000} & 0.8570 & 0.8570 & 0.8560 & 0.8560 & \textbf{1.0000} & 0.8633 & 0.8633 & 0.8628 & 0.8628 \\
EvoPool $+$ MV (ours)           & 0.8410 & 0.6100 & 0.7253 & 0.6600 & 0.6600 & 0.8550 & 0.6322 & 0.7395 & 0.6792 & 0.6792 \\
\rowcolor{cyan!5} \textbf{EvoPool (ours)} & \textbf{1.0000} & \textbf{0.8600} & \textbf{0.8600} & \textbf{0.8599} & \textbf{0.8599} & \textbf{1.0000} & \textbf{0.8645} & \textbf{0.8645} & \textbf{0.8642} & \textbf{0.8642} \\
\midrule
\multicolumn{11}{c}{\textit{Banking77, 77 classes, customer-service intent classification}} \\
\midrule
Alchemist                       & 0.2700 & 0.0450 & 0.1667 & 0.0389 & 0.0356 & 0.2656 & 0.0468 & 0.1760 & 0.0382 & 0.0382 \\
DataSculpt                      & 0.0870 & 0.0720 & 0.8276 & 0.0873 & 0.0871 & 0.1153 & 0.0714 & 0.6197 & 0.0824 & 0.0824 \\
LLM annotation (gpt-4o-mini)        & 0.9880 & 0.5960 & 0.6032 & 0.5889 & 0.6086 & 0.9890 & 0.6409 & 0.6481 & 0.6261 & 0.6261 \\
EvoPool $+$ MV (ours)           & 0.4370 & 0.2660 & 0.6087 & 0.3409 & 0.3466 & 0.4013 & 0.2321 & 0.5785 & 0.2966 & 0.2966 \\
\rowcolor{cyan!5} \textbf{EvoPool (ours)} & \textbf{1.0000} & \textbf{0.9000} & \textbf{0.9000} & \textbf{0.8982} & \textbf{0.8720} & \textbf{1.0000} & \textbf{0.7130} & \textbf{0.7130} & \textbf{0.7023} & \textbf{0.7023} \\
\bottomrule
\end{tabular}
}
\end{table*}

\begin{table*}[tb!]
\caption{\textbf{Annotator-level metrics: High-stakes specialized.} Biomedical relation extraction on ChemProt and DDI, and legal clause classification on Claude9, gpt-4o-mini backbone. On Claude9, EvoPool reports MV due to the small validation budget (see \hyperref[sec:limitations]{Limitations}).}
\label{tab:annotator-full-specialized}
\centering
\small
\setlength{\tabcolsep}{3pt}
\renewcommand{\arraystretch}{1.06}
\resizebox{\linewidth}{!}{
\begin{tabular}{@{}l|ccccc|ccccc@{}}
\toprule
\multirow{2}{*}{Method}
 & \multicolumn{5}{c|}{\textbf{Validation}}
 & \multicolumn{5}{c}{\textbf{Test}} \\
\cmidrule(lr){2-6}\cmidrule(lr){7-11}
 & Cov & Acc & Acc-c & F1 & macF1 & Cov & Acc & Acc-c & F1 & macF1 \\
\midrule
\multicolumn{11}{c}{\textit{ChemProt, 10 classes, biomedical chemical-protein relation}} \\
\midrule
Alchemist                       & 0.8258 & 0.3447 & 0.4175 & 0.3565 & 0.2447 & 0.8289 & 0.3578 & 0.4317 & 0.3691 & 0.2204 \\
DataSculpt                      & 0.4138 & 0.2931 & \textbf{0.7083} & 0.3093 & 0.2035 & 0.4281 & 0.2819 & \textbf{0.6584} & 0.2875 & 0.1965 \\
LLM annotation (gpt-4o-mini)        & \textbf{1.0000} & 0.3559 & 0.3559 & 0.3697 & 0.3212 & \textbf{1.0000} & 0.3211 & 0.3211 & 0.3377 & 0.2817 \\
EvoPool $+$ MV (ours)           & 0.9515 & 0.5818 & 0.6115 & 0.5804 & 0.4684 & 0.9558 & 0.5744 & 0.6009 & 0.5697 & 0.4787 \\
\rowcolor{cyan!5} \textbf{EvoPool (ours)} & \textbf{1.0000} & \textbf{0.6385} & 0.6385 & \textbf{0.6416} & \textbf{0.6059} & \textbf{1.0000} & \textbf{0.6360} & 0.6360 & \textbf{0.6395} & \textbf{0.5826} \\
\midrule
\multicolumn{11}{c}{\textit{DDI, 5 classes, biomedical drug-drug interaction}} \\
\midrule
Alchemist                       & 0.5186 & 0.0670 & 0.1291 & 0.0656 & 0.1239 & 0.5302 & 0.0618 & 0.1166 & 0.0547 & 0.1189 \\
DataSculpt                      & 0.4775 & 0.3894 & \textbf{0.8153} & 0.4844 & 0.1249 & 0.4067 & 0.3347 & \textbf{0.8231} & 0.4389 & 0.1134 \\
LLM annotation (gpt-4o-mini)        & \textbf{1.0000} & 0.2607 & 0.2607 & 0.2692 & 0.2059 & \textbf{1.0000} & 0.2960 & 0.2960 & 0.3148 & 0.2345 \\
EvoPool $+$ MV (ours)           & 0.9956 & 0.3987 & 0.4005 & 0.4579 & 0.2810 & 0.9949 & 0.4323 & 0.4345 & 0.4866 & 0.3069 \\
\rowcolor{cyan!5} \textbf{EvoPool (ours)} & \textbf{1.0000} & \textbf{0.6699} & 0.6699 & \textbf{0.6989} & \textbf{0.5827} & \textbf{1.0000} & \textbf{0.6310} & 0.6310 & \textbf{0.6658} & \textbf{0.4539} \\
\midrule
\multicolumn{11}{c}{\textit{Claude9, 9 classes, legal Terms-of-Service clauses}} \\
\midrule
Alchemist                       & \textbf{1.0000} & 0.4350 & 0.4350 & 0.5514 & 0.1607 & \textbf{1.0000} & 0.4171 & 0.4171 & 0.5376 & 0.1633 \\
DataSculpt                      & 0.8600 & 0.7700 & \textbf{0.8953} & 0.7896 & 0.2338 & 0.8731 & 0.7871 & \textbf{0.9014} & 0.8053 & 0.1458 \\
LLM annotation (gpt-4o-mini)        & \textbf{1.0000} & 0.4250 & 0.4250 & 0.5200 & \textbf{0.4409} & \textbf{1.0000} & 0.4016 & 0.4016 & 0.5072 & 0.3320 \\
\rowcolor{cyan!5} \textbf{EvoPool (ours)} & 0.9950 & \textbf{0.8250} & 0.8291 & \textbf{0.8368} & 0.3461 & 0.9844 & \textbf{0.7944} & 0.8069 & \textbf{0.8326} & \textbf{0.3645} \\
\bottomrule
\end{tabular}
}
\end{table*}

\begin{table*}[tb!]
\caption{\textbf{Annotator-level metrics: Complex reasoning.} Claim verification on FEVER, SciFact, and VitaminC, gpt-4o-mini backbone.}
\label{tab:annotator-full-complex}
\centering
\small
\setlength{\tabcolsep}{3pt}
\renewcommand{\arraystretch}{1.06}
\resizebox{\linewidth}{!}{
\begin{tabular}{@{}l|ccccc|ccccc@{}}
\toprule
\multirow{2}{*}{Method}
 & \multicolumn{5}{c|}{\textbf{Validation}}
 & \multicolumn{5}{c}{\textbf{Test}} \\
\cmidrule(lr){2-6}\cmidrule(lr){7-11}
 & Cov & Acc & Acc-c & F1 & macF1 & Cov & Acc & Acc-c & F1 & macF1 \\
\midrule
\multicolumn{11}{c}{\textit{FEVER, 3 classes, claim verification}} \\
\midrule
Alchemist                       & \textbf{1.0000} & 0.3330 & 0.3330 & 0.1664 & 0.1665 & \textbf{1.0000} & 0.3334 & 0.3334 & 0.1667 & 0.1667 \\
DataSculpt                      & 0.0310 & 0.0240 & 0.7742 & 0.0464 & 0.0464 & 0.0524 & 0.0179 & 0.3417 & 0.0339 & 0.0339 \\
LLM annotation (gpt-4o-mini)        & \textbf{1.0000} & 0.7980 & 0.7980 & 0.7932 & 0.7932 & \textbf{1.0000} & 0.7653 & 0.7653 & 0.7599 & 0.7599 \\
EvoPool $+$ MV (ours)           & 0.9930 & 0.6550 & 0.6596 & 0.6527 & 0.6528 & 0.9912 & 0.6717 & 0.6777 & 0.6696 & 0.6696 \\
\rowcolor{cyan!5} \textbf{EvoPool (ours)} & \textbf{1.0000} & \textbf{0.8460} & \textbf{0.8460} & \textbf{0.8467} & \textbf{0.8467} & \textbf{1.0000} & \textbf{0.8109} & \textbf{0.8109} & \textbf{0.8117} & \textbf{0.8117} \\
\midrule
\multicolumn{11}{c}{\textit{SciFact, 3 classes, scientific claim verification}} \\
\midrule
Alchemist                       & \textbf{1.0000} & 0.4429 & 0.4429 & 0.3985 & 0.3438 & \textbf{1.0000} & 0.4333 & 0.4333 & 0.3356 & 0.2855 \\
DataSculpt                      & 0.3048 & 0.2190 & 0.7188 & 0.3185 & 0.2609 & 0.4222 & 0.1667 & 0.3947 & 0.2187 & 0.2000 \\
LLM annotation (gpt-4o-mini)        & \textbf{1.0000} & 0.7238 & 0.7238 & \textbf{0.7383} & 0.7040 & \textbf{1.0000} & \textbf{0.7667} & \textbf{0.7667} & \textbf{0.7697} & \textbf{0.7617} \\
EvoPool $+$ MV (ours)           & 0.9619 & 0.5857 & 0.6089 & 0.5342 & 0.4371 & 0.9667 & 0.4778 & 0.4943 & 0.4109 & 0.3780 \\
\rowcolor{cyan!5} \textbf{EvoPool (ours)} & \textbf{1.0000} & \textbf{0.7286} & \textbf{0.7286} & 0.7258 & \textbf{0.7115} & \textbf{1.0000} & 0.6000 & 0.6000 & 0.5956 & 0.5890 \\
\midrule
\multicolumn{11}{c}{\textit{VitaminC, 3 classes, counterfactual claim verification}} \\
\midrule
Alchemist                       & \textbf{1.0000} & 0.3350 & 0.3350 & 0.1695 & 0.1692 & 0.9970 & 0.3341 & 0.3342 & 0.1682 & 0.1681 \\
DataSculpt                      & 0.0410 & 0.0320 & \textbf{0.7805} & 0.0603 & 0.0603 & 0.0675 & 0.0262 & 0.3879 & 0.0473 & 0.0473 \\
LLM annotation (gpt-4o-mini)        & \textbf{1.0000} & 0.6190 & 0.6190 & 0.5993 & 0.5993 & \textbf{1.0000} & 0.6611 & 0.6611 & 0.6482 & 0.6482 \\
EvoPool $+$ MV (ours)           & 0.9890 & 0.4850 & 0.4904 & 0.4597 & 0.4596 & 0.9871 & 0.5009 & 0.5075 & 0.4788 & 0.4788 \\
\rowcolor{cyan!5} \textbf{EvoPool (ours)} & \textbf{1.0000} & \textbf{0.6990} & 0.6990 & \textbf{0.7000} & \textbf{0.6999} & \textbf{1.0000} & \textbf{0.6649} & \textbf{0.6649} & \textbf{0.6667} & \textbf{0.6667} \\
\bottomrule
\end{tabular}
}
\end{table*}

\begin{table*}[tb!]
\caption{\textbf{Annotator-level metrics: Multi-label biomedical.} MeSH disease tagging on Ohsumed and topic tagging on PubMed, gpt-4o-mini backbone.}
\label{tab:annotator-full-multilabel}
\centering
\small
\setlength{\tabcolsep}{3pt}
\renewcommand{\arraystretch}{1.06}
\resizebox{\linewidth}{!}{
\begin{tabular}{@{}l|ccccc|ccccc@{}}
\toprule
\multirow{2}{*}{Method}
 & \multicolumn{5}{c|}{\textbf{Validation}}
 & \multicolumn{5}{c}{\textbf{Test}} \\
\cmidrule(lr){2-6}\cmidrule(lr){7-11}
 & Cov & Acc & Acc-c & F1 & macF1 & Cov & Acc & Acc-c & F1 & macF1 \\
\midrule
\multicolumn{11}{c}{\textit{Ohsumed, 23 classes, multi-label, biomedical MeSH disease}} \\
\midrule
Alchemist                       & 0.6774 & 0.0826 & 0.1219 & 0.1950 & 0.1872 & 0.6672 & 0.0894 & 0.1340 & 0.1932 & 0.1901 \\
DataSculpt                      & 0.2132 & 0.0742 & 0.3479 & 0.1401 & 0.0722 & 0.2197 & 0.0826 & \textbf{0.3759} & 0.1430 & 0.0738 \\
LLM annotation (gpt-4o-mini)        & \textbf{0.9346} & 0.0858 & 0.0918 & 0.4567 & 0.4264 & \textbf{0.9292} & 0.0929 & 0.1000 & 0.4620 & 0.4433 \\
EvoPool $+$ MV (ours)           & 0.8685 & 0.1038 & 0.1196 & 0.3943 & 0.3555 & 0.8709 & 0.1006 & 0.1155 & 0.3813 & 0.3469 \\
\rowcolor{cyan!5} \textbf{EvoPool (ours)} & 0.8883 & \textbf{0.5017} & \textbf{0.5648} & \textbf{0.7726} & \textbf{0.8086} & 0.8377 & \textbf{0.2946} & 0.3517 & \textbf{0.5810} & \textbf{0.5420} \\
\midrule
\multicolumn{11}{c}{\textit{PubMed, 14 classes, multi-label, PubMed biomedical literature topic}} \\
\midrule
Alchemist                       & 0.8450 & 0.0014 & 0.0005 & 0.2199 & 0.1498 & 0.8314 & 0.0024 & 0.0005 & 0.2137 & 0.1461 \\
DataSculpt                      & 0.0452 & 0.0054 & 0.0000 & 0.0146 & 0.0064 & 0.0458 & 0.0072 & 0.0000 & 0.0146 & 0.0069 \\
LLM annotation (gpt-4o-mini)        & 0.9938 & 0.0128 & 0.0129 & 0.6306 & 0.4410 & 0.9916 & 0.0128 & 0.0129 & 0.6273 & 0.4375 \\
EvoPool $+$ MV (ours)           & 0.9820 & 0.0038 & 0.0033 & 0.4312 & 0.2716 & 0.9784 & 0.0026 & 0.0027 & 0.4278 & 0.2687 \\
\rowcolor{cyan!5} \textbf{EvoPool (ours)} & \textbf{1.0000} & \textbf{0.1714} & \textbf{0.1714} & \textbf{0.8486} & \textbf{0.7587} & \textbf{1.0000} & \textbf{0.1190} & \textbf{0.1190} & \textbf{0.8153} & \textbf{0.7026} \\
\bottomrule
\end{tabular}
}
\end{table*}

\subsection{Full Downstream Metrics (by Category)}
\label{app:full-downstream}

The full $3$-metric breakdown (Accuracy, F1, macro-F1) on test for each method on each (model, dataset) pair, expanding Table~\ref{tab:downstream-main}, is split into four per-category tables: general classification (Table~\ref{tab:downstream-full-general}), high-stakes specialized (Table~\ref{tab:downstream-full-specialized}), complex reasoning / verification (Table~\ref{tab:downstream-full-complex}), and multi-label biomedical (Table~\ref{tab:downstream-full-multilabel}). Macro-F1 columns reproduce the main table. Accuracy and F1 columns expose dominant-class collapse, with signature high Accuracy and low macro-F1.

\begin{table*}[tb!]
\caption{\textbf{Downstream metrics: General classification.} Topic classification on AGNews and intent classification on Banking77. Pools constructed using gpt-4o-mini backbone.}
\label{tab:downstream-full-general}
\centering
\small
\setlength{\tabcolsep}{3pt}
\renewcommand{\arraystretch}{1.06}
\resizebox{0.99\linewidth}{!}{
\begin{tabular}{@{}l|ccc|ccc|ccc@{}}
\toprule
\multirow{2}{*}{Method}
 & \multicolumn{3}{c|}{\textbf{RoBERTa-large}}
 & \multicolumn{3}{c|}{\textbf{Qwen3-1.7B}}
 & \multicolumn{3}{c}{\textbf{Llama-3.1-8B}} \\
\cmidrule(lr){2-4}\cmidrule(lr){5-7}\cmidrule(lr){8-10}
 & Acc & F1 & mac-F1 & Acc & F1 & mac-F1 & Acc & F1 & mac-F1 \\
\midrule
\multicolumn{10}{c}{\textit{AGNews, 4 classes, news topic classification}} \\
\midrule
Alchemist                & 0.5891 & 0.5732 & 0.5732 & 0.5730 & 0.5541 & 0.5541 & 0.5488 & 0.5298 & 0.5298 \\
DataSculpt               & 0.8253 & 0.8222 & 0.8222 & 0.7980 & 0.7922 & 0.7922 & 0.7336 & 0.7248 & 0.7248 \\
LLM Eval (zero-shot) & 0.8633 & 0.8628 & 0.8628 & 0.8633 & 0.8628 & 0.8628 & 0.8633 & 0.8628 & 0.8628 \\
LLM Annotate      & 0.8743 & 0.8746 & 0.8746 & 0.8704 & 0.8701 & 0.8701 & 0.8738 & 0.8735 & 0.8735 \\
Self-Training            & 0.8928 & 0.8925 & 0.8925 & 0.8591 & 0.8588 & 0.8588 & 0.8809 & 0.8803 & 0.8803 \\
Golden only          & \textbf{0.9020} & \textbf{0.9016} & \textbf{0.9016} & 0.8424 & 0.8422 & 0.8422 & 0.8662 & 0.8658 & 0.8658 \\
\rowcolor{cyan!5} \textbf{EvoPool (ours)} & 0.8864 & 0.8860 & 0.8860 & \textbf{0.8954} & \textbf{0.8951} & \textbf{0.8951} & \textbf{0.8862} & \textbf{0.8859} & \textbf{0.8859} \\
\midrule
\multicolumn{10}{c}{\textit{Banking77, 77 classes, customer-service intent classification}} \\
\midrule
Alchemist                & 0.0864 & 0.0491 & 0.0491 & 0.0795 & 0.0443 & 0.0443 & 0.0870 & 0.0476 & 0.0476 \\
DataSculpt               & 0.1545 & 0.0929 & 0.0929 & 0.1425 & 0.0540 & 0.0540 & 0.1623 & 0.0713 & 0.0713 \\
LLM Eval (zero-shot) & 0.6409 & 0.6261 & 0.6261 & 0.6409 & 0.6261 & 0.6261 & 0.6409 & 0.6261 & 0.6261 \\
LLM Annotate      & 0.6672 & 0.6550 & 0.6550 & 0.6519 & 0.6361 & 0.6361 & 0.6591 & 0.6410 & 0.6410 \\
Self-Training            & \textbf{0.7695} & \textbf{0.7607} & \textbf{0.7607} & 0.1338 & 0.1220 & 0.1220 & 0.6453 & 0.6371 & 0.6371 \\
Golden only          & 0.4873 & 0.4330 & 0.4330 & 0.0799 & 0.0679 & 0.0679 & 0.5166 & 0.5117 & 0.5117 \\
\rowcolor{cyan!5} \textbf{EvoPool (ours)} & 0.7425 & 0.7332 & 0.7332 & \textbf{0.7253} & \textbf{0.7150} & \textbf{0.7150} & \textbf{0.7347} & \textbf{0.7241} & \textbf{0.7241} \\
\bottomrule
\end{tabular}
}
\end{table*}

\begin{table*}[tb!]
\caption{\textbf{Downstream metrics: High-stakes specialized.} Biomedical relation extraction on ChemProt and DDI, and legal clause classification on Claude9. Pools constructed using gpt-4o-mini backbone.}
\label{tab:downstream-full-specialized}
\centering
\small
\setlength{\tabcolsep}{3pt}
\renewcommand{\arraystretch}{1.06}
\resizebox{0.99\linewidth}{!}{
\begin{tabular}{@{}l|ccc|ccc|ccc@{}}
\toprule
\multirow{2}{*}{Method}
 & \multicolumn{3}{c|}{\textbf{RoBERTa-large}}
 & \multicolumn{3}{c|}{\textbf{Qwen3-1.7B}}
 & \multicolumn{3}{c}{\textbf{Llama-3.1-8B}} \\
\cmidrule(lr){2-4}\cmidrule(lr){5-7}\cmidrule(lr){8-10}
 & Acc & F1 & mac-F1 & Acc & F1 & mac-F1 & Acc & F1 & mac-F1 \\
\midrule
\multicolumn{10}{c}{\textit{ChemProt, 10 classes, biomedical chemical-protein relation}} \\
\midrule
Alchemist                & 0.3634 & 0.3700 & 0.2193 & 0.3622 & 0.3693 & 0.2194 & 0.3634 & 0.3692 & 0.2129 \\
DataSculpt               & 0.5264 & 0.4575 & 0.2267 & 0.4984 & 0.4224 & 0.2084 & 0.4848 & 0.4039 & 0.1994 \\
LLM Eval (zero-shot) & 0.3211 & 0.3377 & 0.2817 & 0.3211 & 0.3377 & 0.2817 & 0.3211 & 0.3377 & 0.2817 \\
LLM Annotate      & 0.3255 & 0.3381 & 0.2840 & 0.3242 & 0.3394 & 0.2945 & 0.3217 & 0.3371 & 0.2809 \\
Self-Training            & 0.6117 & 0.6110 & \textbf{0.5535} & 0.3603 & 0.3591 & 0.3049 & 0.3715 & 0.3741 & 0.3253 \\
Golden only          & \textbf{0.6123} & \textbf{0.6111} & 0.5194 & 0.3497 & 0.3481 & 0.2775 & 0.3628 & 0.3698 & 0.3178 \\
\rowcolor{cyan!5} \textbf{EvoPool (ours)} & 0.6049 & 0.6051 & 0.5396 & \textbf{0.6067} & \textbf{0.6070} & \textbf{0.5383} & \textbf{0.6459} & \textbf{0.6498} & \textbf{0.5909} \\
\midrule
\multicolumn{10}{c}{\textit{DDI, 5 classes, biomedical drug-drug interaction}} \\
\midrule
Alchemist                & 0.1522 & 0.1607 & 0.1806 & 0.2223 & 0.2601 & 0.2155 & 0.2118 & 0.2494 & 0.2121 \\
DataSculpt               & \textbf{0.7742} & 0.6757 & 0.1746 & \textbf{0.7738} & 0.6759 & 0.1756 & \textbf{0.7744} & 0.6772 & 0.1778 \\
LLM Eval (zero-shot) & 0.2960 & 0.3148 & 0.2345 & 0.2960 & 0.3148 & 0.2345 & 0.2960 & 0.3148 & 0.2345 \\
LLM Annotate      & 0.2796 & 0.2975 & 0.2234 & 0.2669 & 0.2763 & 0.2206 & 0.2768 & 0.2908 & 0.2305 \\
Self-Training            & 0.7221 & 0.7504 & \textbf{0.5782} & 0.5869 & 0.6314 & 0.4592 & 0.5374 & 0.5826 & 0.4581 \\
Golden only          & 0.7259 & \textbf{0.7532} & 0.5774 & 0.5434 & 0.5856 & 0.4491 & 0.4687 & 0.5140 & 0.3863 \\
\rowcolor{cyan!5} \textbf{EvoPool (ours)} & 0.6955 & 0.7238 & 0.5340 & 0.6630 & \textbf{0.6968} & \textbf{0.5293} & 0.6587 & \textbf{0.6941} & \textbf{0.5379} \\
\midrule
\multicolumn{10}{c}{\textit{Claude9, 9 classes, legal Terms-of-Service clauses}} \\
\midrule
Alchemist                & 0.4516 & 0.5709 & 0.1867 & 0.4609 & 0.5765 & 0.2207 & 0.4497 & 0.5692 & 0.1941 \\
DataSculpt               & \textbf{0.9076} & \textbf{0.8637} & 0.1057 & \textbf{0.9110} & 0.8707 & 0.1909 & \textbf{0.9052} & 0.8654 & 0.1500 \\
LLM Eval (zero-shot) & 0.4016 & 0.5072 & 0.3320 & 0.4016 & 0.5072 & 0.3320 & 0.4016 & 0.5072 & 0.3320 \\
LLM Annotate      & 0.4171 & 0.5252 & \textbf{0.3469} & 0.4356 & 0.5451 & 0.3579 & 0.4288 & 0.5365 & 0.3639 \\
Self-Training            & \textbf{0.9076} & \textbf{0.8637} & 0.1057 & 0.3578 & 0.5041 & 0.0765 & 0.4964 & 0.6116 & 0.1131 \\
Golden only          & \textbf{0.9076} & \textbf{0.8637} & 0.1057 & 0.8994 & 0.8598 & 0.1053 & 0.9047 & \textbf{0.8761} & 0.2111 \\
\rowcolor{cyan!5} \textbf{EvoPool (ours)} & 0.8007 & 0.8331 & 0.3413 & 0.8600 & \textbf{0.9005} & \textbf{0.3827} & 0.8128 & 0.8438 & \textbf{0.3908} \\
\bottomrule
\end{tabular}
}
\end{table*}

\begin{table*}[tb!]
\caption{\textbf{Downstream metrics: Complex reasoning.} Claim verification on FEVER, SciFact, and VitaminC. Pools constructed using gpt-4o-mini backbone.}
\label{tab:downstream-full-complex}
\centering
\small
\setlength{\tabcolsep}{3pt}
\renewcommand{\arraystretch}{1.06}
\resizebox{0.99\linewidth}{!}{
\begin{tabular}{@{}l|ccc|ccc|ccc@{}}
\toprule
\multirow{2}{*}{Method}
 & \multicolumn{3}{c|}{\textbf{RoBERTa-large}}
 & \multicolumn{3}{c|}{\textbf{Qwen3-1.7B}}
 & \multicolumn{3}{c}{\textbf{Llama-3.1-8B}} \\
\cmidrule(lr){2-4}\cmidrule(lr){5-7}\cmidrule(lr){8-10}
 & Acc & F1 & mac-F1 & Acc & F1 & mac-F1 & Acc & F1 & mac-F1 \\
\midrule
\multicolumn{10}{c}{\textit{FEVER, 3 classes, claim verification}} \\
\midrule
Alchemist                & 0.3334 & 0.1667 & 0.1667 & 0.3334 & 0.1667 & 0.1667 & 0.3334 & 0.1667 & 0.1667 \\
DataSculpt               & 0.3489 & 0.3286 & 0.3285 & 0.3645 & 0.3541 & 0.3541 & 0.3579 & 0.3434 & 0.3434 \\
LLM Eval (zero-shot) & 0.7653 & 0.7599 & 0.7599 & 0.7653 & 0.7599 & 0.7599 & 0.7653 & 0.7599 & 0.7599 \\
LLM Annotate      & 0.7499 & 0.7428 & 0.7428 & 0.7670 & 0.7625 & 0.7626 & 0.7705 & 0.7645 & 0.7645 \\
Self-Training            & 0.7596 & 0.7555 & 0.7556 & 0.7113 & 0.7099 & 0.7099 & 0.6889 & 0.6879 & 0.6879 \\
Golden only          & 0.7809 & 0.7805 & 0.7805 & 0.6492 & 0.6477 & 0.6477 & 0.6062 & 0.6042 & 0.6042 \\
\rowcolor{cyan!5} \textbf{EvoPool (ours)} & \textbf{0.8133} & \textbf{0.8132} & \textbf{0.8132} & \textbf{0.8254} & \textbf{0.8262} & \textbf{0.8262} & \textbf{0.8433} & \textbf{0.8435} & \textbf{0.8435} \\
\midrule
\multicolumn{10}{c}{\textit{SciFact, 3 classes, scientific claim verification}} \\
\midrule
Alchemist                & 0.4444 & 0.2735 & 0.2051 & 0.4556 & 0.3299 & 0.2743 & 0.4444 & 0.3408 & 0.2903 \\
DataSculpt               & 0.3667 & 0.3399 & 0.3159 & 0.3556 & 0.3370 & 0.3111 & 0.3667 & 0.3123 & 0.2759 \\
LLM Eval (zero-shot) & 0.7667 & 0.7697 & 0.7617 & 0.7667 & 0.7697 & 0.7617 & 0.7667 & 0.7697 & 0.7617 \\
LLM Annotate      & \textbf{0.6222} & \textbf{0.6019} & \textbf{0.5919} & \textbf{0.4444} & \textbf{0.4443} & \textbf{0.4355} & \textbf{0.4778} & \textbf{0.4803} & \textbf{0.4788} \\
Self-Training            & 0.3556 & 0.3596 & 0.3553 & 0.4222 & 0.4012 & 0.4205 & 0.4111 & 0.4129 & 0.4060 \\
Golden only          & 0.3778 & 0.3184 & 0.2864 & 0.3000 & 0.2943 & 0.2929 & 0.4222 & 0.4225 & 0.4096 \\
\rowcolor{cyan!5} \textbf{EvoPool (ours)} & 0.3778 & 0.3252 & 0.2890 & 0.4111 & 0.3905 & 0.3797 & 0.4667 & 0.4512 & 0.4442 \\
\midrule
\multicolumn{10}{c}{\textit{VitaminC, 3 classes, counterfactual claim verification}} \\
\midrule
Alchemist                & 0.3334 & 0.1667 & 0.1667 & 0.3334 & 0.1667 & 0.1667 & 0.3334 & 0.1667 & 0.1667 \\
DataSculpt               & 0.3693 & 0.2645 & 0.2646 & 0.3533 & 0.2257 & 0.2257 & 0.3626 & 0.2600 & 0.2600 \\
LLM Eval (zero-shot) & 0.6611 & 0.6482 & 0.6482 & 0.6611 & 0.6482 & 0.6482 & 0.6611 & 0.6482 & 0.6482 \\
LLM Annotate      & 0.6167 & 0.5920 & 0.5919 & 0.6686 & 0.6562 & 0.6562 & 0.6637 & 0.6508 & 0.6508 \\
Self-Training            & 0.5284 & 0.5281 & 0.5282 & 0.5178 & 0.5174 & 0.5174 & 0.4772 & 0.4784 & 0.4784 \\
Golden (clean only)      & 0.4570 & 0.4543 & 0.4543 & 0.5072 & 0.5065 & 0.5065 & 0.4471 & 0.4481 & 0.4481 \\
\rowcolor{cyan!5} \textbf{EvoPool (ours)} & \textbf{0.6461} & \textbf{0.6476} & \textbf{0.6476} & \textbf{0.6639} & \textbf{0.6659} & \textbf{0.6659} & \textbf{0.6778} & \textbf{0.6797} & \textbf{0.6797} \\
\bottomrule
\end{tabular}
}
\end{table*}

\begin{table*}[tb!]
\caption{\textbf{Downstream metrics: Multi-label biomedical.} MeSH disease tagging on Ohsumed and topic tagging on PubMed. Pools constructed using gpt-4o-mini backbone.}
\label{tab:downstream-full-multilabel}
\centering
\small
\setlength{\tabcolsep}{4pt}
\renewcommand{\arraystretch}{1.06}
\resizebox{0.99\linewidth}{!}{
\begin{tabular}{@{}l|cc|cc|cc@{}}
\toprule
\multirow{2}{*}{Method}
 & \multicolumn{2}{c|}{\textbf{RoBERTa-large}}
 & \multicolumn{2}{c|}{\textbf{Qwen3-1.7B}}
 & \multicolumn{2}{c}{\textbf{Llama-3.1-8B}} \\
\cmidrule(lr){2-3}\cmidrule(lr){4-5}\cmidrule(lr){6-7}
 & F1 & mac-F1 & F1 & mac-F1 & F1 & mac-F1 \\
\midrule
\multicolumn{7}{c}{\textit{Ohsumed, 23 classes, multi-label, biomedical MeSH disease}} \\
\midrule
Alchemist                              & 0.3025 & 0.1940 & 0.2527 & 0.2012 & 0.2519 & 0.2006 \\
DataSculpt                             & 0.1796 & 0.0713 & 0.1294 & 0.0685 & 0.1313 & 0.0692 \\
LLM Annotate                           & 0.4869 & 0.4566 & 0.4921 & 0.4634 & 0.4935 & 0.4726 \\
Self-Training                          & 0.4747 & 0.2526 & 0.1400 & 0.0652 & 0.3031 & 0.1869 \\
Golden only                            & 0.2657 & 0.0671 & 0.0073 & 0.0042 & 0.1189 & 0.0530 \\
\rowcolor{cyan!5} \textbf{EvoPool (ours)} & \textbf{0.6100} & \textbf{0.5565} & \textbf{0.5679} & \textbf{0.5473} & \textbf{0.5684} & \textbf{0.5566} \\
\midrule
\multicolumn{7}{c}{\textit{PubMed, 14 classes, multi-label, biomedical literature topic}} \\
\midrule
Alchemist                              & 0.2659 & 0.1399 & 0.2473 & 0.1390 & 0.2490 & 0.1397 \\
DataSculpt                             & 0.0101 & 0.0043 & 0.0165 & 0.0047 & 0.0177 & 0.0053 \\
LLM Annotate                           & 0.6331 & 0.4396 & 0.6331 & 0.4313 & 0.6328 & 0.4353 \\
Self-Training                          & 0.8254 & 0.6975 & 0.7276 & 0.4566 & 0.7402 & 0.5618 \\
Golden only                            & 0.8159 & 0.6229 & 0.7081 & 0.4477 & 0.7327 & 0.5593 \\
\rowcolor{cyan!5} \textbf{EvoPool (ours)} & \textbf{0.8349} & \textbf{0.7126} & \textbf{0.8279} & \textbf{0.7095} & \textbf{0.8263} & \textbf{0.7110} \\
\bottomrule
\end{tabular}
}
\end{table*}

\clearpage

\section{Qualitative Examples}
\label{app:qualitative}

This section gathers prompt structures, intermediate agent outputs, and the highest-F1 surviving annotators per method on ChemProt and FEVER. The goal is to make concrete what the prompt templates, the per-iteration agent communication, and the final annotator code actually look like across EvoPool, Alchemist, and DataSculpt. All experiments done with gpt-4o-mini backbone.

\subsection{Prompt Template Comparison}
\label{app:prompt-compare}

EvoPool, Alchemist, and DataSculpt all prompt the LLM to author annotators, but they differ markedly in what context they supply, how they decompose the task, and how the output is parsed. We summarize each prompt's structure on ChemProt below. 

\begin{methodcard}[boxFrAlc]{Alchemist}
\textbf{Stage:} Single LLM call per annotator, no feedback.

\textbf{System:} \textit{``You are a helpful assistant to write annotator for biomedical chemical-protein relation extraction. Be creative and original. Annotator should consider conditions globally and wisely.''}

\textbf{User:} \texttt{mission\_statement} $\rightarrow$ \texttt{labeling\_instruction} (10 lines listing return value for each class) $\rightarrow$ \texttt{function\_signature}.

\textbf{Output:} a single Python function \texttt{label\_function(sentence)} returning a class id or $-1$.

\textbf{Key limitations:} no error signal, no class-balancing, no awareness of which examples are uncovered, no helpers, no pool. The LLM tends to emit one monolithic function attempting to classify all classes simultaneously (see App.~\ref{app:top-lfs} for the resulting style).
\end{methodcard}

\begin{methodcard}[boxFrDS]{DataSculpt}
\textbf{Stage:} Two-stage: per-example LLM annotation, then keyword aggregation across examples to form annotators.

\textbf{System:} \textit{``TASK DESCRIPTION: You are a helpful assistant in a biomedical relation extraction task. \dots INTERACTION FORMAT: After the user provides input, explain your reasoning step by step. Then identify a list of keywords that helps making prediction. Finally, provide the class label.''}

\textbf{User (per training example):} the sentence text with the entity pair, then the LLM is asked to emit \texttt{EXPLANATION:}\,\dots, \texttt{KEYWORDS:}\,\dots, \texttt{LABEL:}\,\dots.

\textbf{Output:} the keywords are aggregated across many examples and turned into single-substring annotators of the form \texttt{if keyword in text: return label}.

\textbf{Key limitations:} the resulting annotators are atomic keyword checks with no compositional structure or negative guards. Many fail the minimum-fires gate, with only $15$ of $67$ passing on ChemProt validation and only $2$ of $88$ on FEVER validation.
\end{methodcard}

\begin{methodcard}[boxFrOurs]{EvoPool Generator}
\textbf{Stage:} Authors the initial pool. One-shot per call, but the prompt mandates a compositional grammar.

\textbf{Instruction:} \textit{``Each annotator should be a compositional Python program rather than a flat if-else. Define helper boolean predicates as module-level functions such as \texttt{has\_inhibition\_keywords(text)} or \texttt{has\_chemical\_target(meta)}. Each \texttt{annotator\_*} must call at least two helper predicates and combine with at least one direct check. Use only metadata fields enumerated above.''}

\textbf{Includes:} a worked ChemProt example showing a helper-stack pattern ($\texttt{INHIBITION\_KW}$ bank $\rightarrow$ \texttt{has\_inhibition\_keywords} $\rightarrow$ \texttt{lf\_downregulator\_inhibition\_compositional}), stratified seed examples from train, and the task's full label-name $\leftrightarrow$ id mapping.

\textbf{Output:} a Python block of helpers + several \texttt{lf\_*}. The instruction's worked example is what later iterations build on, and the helper-stack convention propagates into Improver and Refiner output.
\end{methodcard}

\begin{methodcard}[boxFrOurs]{EvoPool Improver}
\textbf{Stage:} Per-iteration feedback loop. Reads the analysis brief \texttt{improvement\_brief.json} (App.~\ref{app:dialogue}) and authors new compositional annotators targeting the underserved clusters.

\textbf{Each Improver call receives:}
\begin{itemize}[leftmargin=*, nosep, topsep=2pt]
\item \textbf{Failure-cluster context:} top-6 underserved clusters with target class, distinctive keywords, and 3--6 representative example texts.
\item \textbf{Per-class budget:} dedicated batches for low-F1 target classes, each with 8 positive train examples and explicit confusable-class guardrails.
\item \textbf{Anti-pattern memory:} the last 10 annotators the gate dropped, with reason.
\item \textbf{Quality clause:} ``an absent annotator is far better than a low-precision one'', which mandates ABSTAIN on uncertainty rather than guessing.
\end{itemize}

\textbf{Output:} a Python block with reusable helpers $+$ multiple \texttt{lf\_*}, each targeting a specific cluster's target class. Names that collide are auto-renamed under a per-iteration prefix, and helpers are namespaced to prevent cross-iteration shadowing.

\textbf{Key delta vs.\ Alchemist:} (i) feedback loop conditioned on the actual validation errors of the pool at $k{-}1$, (ii) class-balanced supervision pressure, (iii) explicit confusable-class guardrails, and (iv) anti-pattern memory of past gate rejections.
\end{methodcard}

\begin{methodcard}[boxFrOurs]{EvoPool Refiner}
\textbf{Stage:} Activates from iteration $3$ once the gated pool has stabilized. Picks annotators with validation precision $\geq 0.65$ AND coverage $\in [0.005, 0.05]$ ($=$ \emph{narrow-but-correct} annotators) and asks the LLM for $2$--$3$ broader variants of each.

\textbf{Instruction (per-annotator prompt):} \textit{``Original annotator (precision $P$, coverage $C$): \texttt{<source>}. Target class: $c$. Write 2--3 broader variants using a different strategy per variant.''} Strategies presented:
\begin{itemize}[leftmargin=*, nosep, topsep=2pt]
\item \textbf{Keyword-bank expansion:} replace a single \texttt{if 'inhibit' in text} with a $5$--$10$ word bank \texttt{INHIB\_KW = [\dots]} routed through a helper.
\item \textbf{Relaxed conditions:} drop one AND-condition in the original.
\item \textbf{Structural variant:} rewrite using a different metadata-comparison primitive.
\end{itemize}

\textbf{Output:} broader variants that retain the original's precision target while typically $3$--$5\times$-ing coverage. Each variant goes through the same selection gate as the Improver's output.

\textbf{Why a separate agent:} the Improver works from cluster failure signals (forward search), the Refiner works from the existing pool's high-quality but narrow annotators (lateral broadening). The two together cover the two failure modes of a programmatic pool: missing classes (Improver) and over-narrow precision-coverage trade-offs within a class (Refiner).
\end{methodcard}

\subsection{Agent Output Examples}
\label{app:dialogue}

We trace one generation on the canonical ChemProt run. The analysis step is deterministic: it evaluates the current pool on validation, runs the query selector, and packages per-class weakness and the selected examples into an \texttt{improvement\_brief.json}. The three agents (Generator, Improver, Refiner) then read this brief and produce executable annotators following the role they were prompted for.

\begin{methodcard}[boxFrAna]{Analysis brief at $k{=}3$}
\textbf{Class weights} (priority for next iteration, sums to $1.0$):
\begin{verbatim}
Modulator           0.172
Part-of             0.111
Regulator           0.097
Substrate/Product   0.096
Upregulator         0.092
Downregulator       0.088
Agonist, Antagonist,
Cofactor, NOT       0.086 each
\end{verbatim}

\textbf{Top-3 improvement targets} (segment $=$ \texttt{target\_type\_class\_length}, value-scored by $n_{\text{examples}} \cdot \text{class\_weight}$):
\begin{enumerate}[leftmargin=*, nosep]
\item \texttt{uncovered\_Regulator\_medium} $\rightarrow$ $238$ validation examples uncovered, value $23.1$. Keywords: \texttt{[documented, esbl, pneumoniae, kpcs, carbapenemases, ampc, klebsiella]}.
\item \texttt{uncovered\_Regulator\_long} $\rightarrow$ $134$ examples, value $13.0$. Keywords: \texttt{[weakened, isobutyl, methylxanthine, ibmx, thromboembolic, transplant, methylenetetrahydrofolate]}.
\item \texttt{uncovered\_Substrate/Product\_medium} $\rightarrow$ $117$ examples, value $11.3$. Keywords: \texttt{[decarboxylase, catalyses, putrescine, acetylated, ssat, spermidine, pao]}.
\end{enumerate}

\textbf{BatchBALD-selected cluster center} (top cluster by acquisition score $1.10$, target class $2$ Upregulator, cluster size $1638$):
\textit{``The 5-HT(1/2/5/7)-receptor antagonist methysergide and the 5-HT(2A/2B/2C)-receptor antagonist LY 53857 enhanced clomipramine-induced hyperglycemia, while the 5-HT(1A/1B)-receptor antagonist (-)-propranolol and the 5-HT(3/4)-receptor antagonist tropisetron did not affect it.''}
This center is the example the agents see verbatim through the cluster block. It illustrates a Modulator/Upregulator/Antagonist confusion where the compositional gate against Up/Downregulator signals is necessary.
\end{methodcard}

\begin{methodcard}[boxFrGen]{Generator output at $k{=}0$}
The Generator seeds the pool with broad lexical templates spanning the label space. Each annotator composes a small keyword bank with the entity-pair structural check that all later agents reuse. One representative is the Regulator annotator below.
\begin{lstlisting}[style=pylf]
_b1_REGULATOR_KW = ['bind', 'interact', 'affinity',
  'target', 'receptor', 'ligand', 'associate',
  'complex', 'recognize']

def _b1_has_keywords(text, keywords):
    return any(k in text for k in keywords)

def _b1_has_named_entities(meta):
    e1, e2 = meta.get('entity1',''), meta.get('entity2','')
    return (bool(e1) and bool(e2)
            and len(e1) > 1 and len(e2) > 1)

def annotator_regulator(ex):
    meta = ex.get('metadata', {}) or {}
    text = (ex.get('text') or '').lower()
    if (_b1_has_keywords(text, _b1_REGULATOR_KW)
        and _b1_has_named_entities(meta)):
        return 1
    return ABSTAIN
\end{lstlisting}
The Generator authors one such two-helper annotator per class to establish the initial pool. It has no per-class targeting and no confusable-class guard. Both gaps are addressed by the Improver and Refiner that come next.
\end{methodcard}

\begin{methodcard}[boxFrImp]{Improver output at $k{=}4$}
The Improver receives the analysis brief above plus an anti-pattern list of the last $10$ dropped annotators and positive train examples drawn for each target class. It emits compositional annotators with explicit guards against the most confusable classes. One representative is the Modulator annotator below, generated because the analysis flagged Modulator as the highest-weight class.
\begin{lstlisting}[style=pylf]
_i04c6_MODULATE_KW = ['modulate', 'modulator',
  'allosteric', 'potentiate',
  'positive allosteric modulator',
  'negative modulator', 'non-competitive']
_i04c6_INHIBIT_KW  = ['inhibit', 'reduce', 'decrease', 'suppress']
_i04c6_ACTIVATE_KW = ['activate', 'increase', 'enhance', 'induce']

def _i04c6_has_modulation_signal(text):
    return any(k in text for k in _i04c6_MODULATE_KW)
def _i04c6_has_inhibition_signal(text):
    return any(k in text for k in _i04c6_INHIBIT_KW)
def _i04c6_has_activation_signal(text):
    return any(k in text for k in _i04c6_ACTIVATE_KW)
def _i04c6_has_named_pair(meta):
    e1 = (meta.get('entity1') or '').strip()
    e2 = (meta.get('entity2') or '').strip()
    return bool(e1) and bool(e2)

def annotator_modulator_compositional(ex):
    text = (ex.get('text') or '').lower()
    meta = ex.get('metadata') or dict()
    if (_i04c6_has_named_pair(meta)
        and _i04c6_has_modulation_signal(text)):
        if (not _i04c6_has_inhibition_signal(text)
            and not _i04c6_has_activation_signal(text)):
            return 6
    return ABSTAIN
\end{lstlisting}
Three keyword banks plus two negative guards: the annotator fires on Modulator only when neither the Downregulator nor the Upregulator keyword bank matches. The negative guards explicitly disambiguate the three confusable regulation classes, which is what the Generator's single-bank Regulator annotator above could not do.
\end{methodcard}

\begin{methodcard}[boxFrRef]{Refiner output at $k{=}5$}
The Refiner picks high-precision low-coverage annotators from the current pool and asks the agent for $2$ to $3$ broader variants using a keyword-bank-expansion strategy. The expanded annotator inherits the entity-pair check from the original and replaces the narrow keyword set with a richer synonym bank.
\begin{lstlisting}[style=pylf]
_r05t6_INHIB_KW = ['inhibit', 'suppress', 'block',
  'reduce', 'attenuate', 'downregulate']

def _r05t6_has_inhibition(text):
    return any(k in text for k in _r05t6_INHIB_KW)

def _r05t6_has_named_pair(meta):
    e1 = (meta.get('entity1') or '').strip()
    e2 = (meta.get('entity2') or '').strip()
    return bool(e1) and bool(e2)

def annotator_inhibition_expanded(ex):
    text = (ex.get('text') or '').lower()
    meta = ex.get('metadata') or dict()
    if (_r05t6_has_inhibition(text)
        and _r05t6_has_named_pair(meta)):
        return 3
    return ABSTAIN
\end{lstlisting}
The original annotator the Refiner broadened used a single keyword \texttt{inhibit}. The variant adds \texttt{suppress, block, reduce, attenuate, downregulate} and triples coverage without losing precision. The named-pair helper is reused from an earlier Improver iteration through the Refiner's namespace, illustrating that helpers are first-class units of inheritance alongside the annotators themselves.
\end{methodcard}

\subsection{Top Annotator Function Comparison}
\label{app:top-lfs}

We pick the highest-F1 surviving annotator per class from each method's final pool, evaluated on validation. Table~\ref{tab:top-lf-chemprot} reports ChemProt, Table~\ref{tab:top-lf-fever} reports FEVER. EvoPool covers all classes, and the baselines leave large gaps.

\begin{table}[h]
\centering
\small
\setlength{\tabcolsep}{3pt}
\caption{ChemProt per-class best-annotator F1 on validation. ``--'' indicates no annotator in the pool passes the minimum-fires gate ($\geq 5$ validation fires) for that class. Class coverage shown at bottom.}
\label{tab:top-lf-chemprot}
\begin{tabular}{@{}l|ccc@{}}
\toprule
Class & \textbf{EvoPool} & Alchemist & DataSculpt \\
\midrule
Part-of           & \textbf{0.358} & --    & --    \\
Regulator         & \textbf{0.516} & --    & 0.049 \\
Upregulator       & \textbf{0.471} & 0.224 & --    \\
Downregulator     & \textbf{0.767} & 0.694 & 0.681 \\
Agonist           & \textbf{0.405} & --    & --    \\
Antagonist        & \textbf{0.701} & 0.481 & 0.475 \\
Modulator         & \textbf{0.316} & --    & --    \\
Cofactor          & 0.556 & --    & \textbf{0.588} \\
Substrate/Product & \textbf{0.498} & --    & 0.084 \\
NOT               & \textbf{0.376} & 0.353 & --    \\
\midrule
\textbf{Classes covered} & \textbf{10/10} & 4/10 & 5/10 \\
\textbf{Pool size}       & 193  & 49   & 67 \\
\textbf{Pool passing gate} & 181 & 49 & 15 \\
\bottomrule
\end{tabular}
\end{table}

\begin{table}[h]
\centering
\small
\setlength{\tabcolsep}{3pt}
\caption{FEVER per-class best-annotator F1 on validation.}
\label{tab:top-lf-fever}
\begin{tabular}{@{}l|ccc@{}}
\toprule
Class & \textbf{EvoPool} & Alchemist & DataSculpt \\
\midrule
Supports & \textbf{0.508} & 0.018 & 0.018 \\
Refutes  & \textbf{0.414} & --    & 0.029 \\
NEI      & 0.455 & \textbf{0.500} & -- \\
\midrule
\textbf{Classes covered} & \textbf{3/3} & 2/3 & 2/3 \\
\textbf{Pool size}       & 177 & 50 & 88 \\
\textbf{Pool passing gate} & 100 & 6 & 2 \\
\bottomrule
\end{tabular}
\end{table}

\paragraph{ChemProt code-style comparison.}
We show the top surviving annotators per method, picked as the highest-F1 surviving annotator per target class. Alchemist covers only $4$ of $10$ ChemProt classes, so its card lists $4$ annotators. The three cards below illustrate how the same task surface gets compiled into very different code architectures.

\begin{methodcard}[boxFrOurs]{EvoPool: top-5 surviving annotators by validation F1}
All five top-F1 annotators target the Downregulator class (the highest-signal class on ChemProt). They share the same compositional template (per-iteration keyword bank $+$ reusable entity-pair check) and differ in the keyword bank composition and in which confusable class they explicitly guard against. The five are Refiner-generated variants of the same Improver-authored seed annotator, each exploring a different boundary heuristic.
\begin{lstlisting}[style=pylf]
# shared helpers (namespaced per Refiner iter; b12/b13/b14/b16/b17 each
# have their own copy of these two helpers with identical bodies)
_b12_DOWNREGULATOR_KW = ['inhibit','reduce','decrease','suppress',
  'downregulate','down-regulate','block','attenuate','abolish',
  'prevent','impair','diminish']
_b13_DOWNREGULATOR_KW = ['inhibit','reduce','decrease','suppress',
  'downregulate','block','attenuate','abolish','prevent','impair',
  'diminish']                       # (b13/b14 share same 11-word bank)
_b14_DOWNREGULATOR_KW = _b13_DOWNREGULATOR_KW
_b16_DOWNREGULATOR_KW = _b12_DOWNREGULATOR_KW
_b17_DOWNREGULATOR_KW = _b12_DOWNREGULATOR_KW   # (b12/b16/b17 share same 12-word bank with 'down-regulate' hyphen variant)
def _bN_has_keywords(text, kws):
    return any(k in text for k in kws)
def _bN_has_named_entities(meta):
    e1, e2 = meta.get('entity1',''), meta.get('entity2','')
    return (bool(e1) and bool(e2)
            and len(e1) > 1 and len(e2) > 1)

# (1) Downregulator, F1 = 0.767  (12-word bank + antagonist guard)
def annotator_downregulator_inhibition_b12(ex):
    meta = ex.get('metadata', {}) or {}
    text = (ex.get('text') or '').lower()
    if (_b12_has_keywords(text, _b12_DOWNREGULATOR_KW)
        and _b12_has_named_entities(meta)):
        if 'antagonist' not in text:
            return 3
    return ABSTAIN

# (2) Downregulator, F1 = 0.756  (12-word bank, no guard)
def annotator_downregulator_inhibition_b16(ex):
    meta = ex.get('metadata', {}) or {}
    text = (ex.get('text') or '').lower()
    if (_b16_has_keywords(text, _b16_DOWNREGULATOR_KW)
        and _b16_has_named_entities(meta)):
        return 3
    return ABSTAIN

# (3) Downregulator, F1 = 0.755  (11-word bank, original Improver seed)
def annotator_downregulator(ex):
    meta = ex.get('metadata', {}) or {}
    text = (ex.get('text') or '').lower()
    if (_b13_has_keywords(text, _b13_DOWNREGULATOR_KW)
        and _b13_has_named_entities(meta)):
        return 3
    return ABSTAIN

# (4) Downregulator, F1 = 0.755  (11-word bank, identical to (3) up to namespace)
def annotator_downregulator_inhibition_b14(ex):
    meta = ex.get('metadata', {}) or {}
    text = (ex.get('text') or '').lower()
    if (_b14_has_keywords(text, _b14_DOWNREGULATOR_KW)
        and _b14_has_named_entities(meta)):
        return 3
    return ABSTAIN

# (5) Downregulator, F1 = 0.752  (12-word bank + metabolism guard against Substrate/Product)
def annotator_downregulator_inhibition_b17(ex):
    meta = ex.get('metadata', {}) or {}
    text = (ex.get('text') or '').lower()
    if (_b17_has_keywords(text, _b17_DOWNREGULATOR_KW)
        and _b17_has_named_entities(meta)):
        if 'metabolism' not in text and 'metabolize' not in text:
            return 3
    return ABSTAIN
\end{lstlisting}
The five annotators are Refiner-generated keyword-bank-expansion variants of one Improver-authored seed. They explore an $11$ vs.\ $12$ keyword bank and three guard variants (no guard / antagonist guard / metabolism guard). The five guard-and-bank choices produce a precision-recall trade-off cluster around F1 $\approx 0.76$, and the selection gate retains all five because their firing patterns differ enough to land below Jaccard threshold $0.95$. Compare to Alchemist's near-identical monolithic copies and DataSculpt's atomic single-keyword annotators below. Top-1 annotators from the other nine ChemProt classes are reported in Table~\ref{tab:top-lf-chemprot}.
\end{methodcard}

\begin{methodcard}[boxFrAlc]{Alchemist: top-5 surviving annotators by validation F1}
All five top-F1 annotators target the Downregulator class. Each is a stochastically sampled re-write of the same monolithic 10-class classifier prompt, so the five differ only in control-flow trivia (which Python construct is used to iterate the keyword dict). Full code shown verbatim below to make the redundancy explicit.
\begin{lstlisting}[style=pylf]
# (1) Downregulator, F1 = 0.694  (lf_alchemist_05)
def lf_alchemist_05(ex):
    sentence = ex.get("text", "")
    sentence = sentence.lower()
    keywords = {
        0: ["part of","component of","included in","essential for"],
        1: ["regulate","influence","affect","modulate"],
        2: ["increase","enhance","boost","stimulate","promote"],
        3: ["decrease","reduce","inhibit","suppress","diminish"],
        4: ["activate","mimic","stimulate action"],
        5: ["block","inhibit","prevent","antagonize"],
        6: ["modulate","alter","change activity"],
        7: ["cofactor","coenzyme","necessary for"],
        8: ["substrate","product","converted into","reacts with"],
        9: ["not related","no effect","no interaction","independent of"]}
    for non_relation in keywords[9]:
        if non_relation in sentence:
            return 9
    for label, phrases in keywords.items():
        if label != 9:
            for phrase in phrases:
                if phrase in sentence:
                    if label in (0, 1, 4, 5):
                        return label
                    elif label in (2, 3, 6):
                        if label == 2 and "inhibition" in sentence or "suppress" in sentence:
                            return 3
                        if label == 3 and "stimulate" in sentence or "promote" in sentence:
                            return 2
                        return label
                    elif label in (7, 8):
                        return label
    return ABSTAIN

# (2) Downregulator, F1 = 0.690  (lf_alchemist_09)
def lf_alchemist_09(ex):
    sentence = ex.get("text", "")
    sentence = sentence.lower()
    if any(p in sentence for p in ["part of","component of","included in","essential for"]):
        return 0
    if any(p in sentence for p in ["regulates","influence","affect","modulates","affecting"]):
        return 1
    if any(p in sentence for p in ["increase","enhance","boost","stimulate","promote","upregulates","activates"]):
        return 2
    if any(p in sentence for p in ["decrease","reduce","inhibit","suppress","diminish","downregulates","blocks"]):
        return 3
    if any(p in sentence for p in ["activate","mimic","stimulate action","acts as an agonist"]):
        return 4
    if any(p in sentence for p in ["block","prevent","inhibit","acts as an antagonist"]):
        return 5
    if any(p in sentence for p in ["modulate","alter","change activity","depends on concentration"]):
        return 6
    if (any(p in sentence for p in ["cofactor","coenzyme","necessary for activity"])
        or any(p in sentence for p in ["requires","essential for proper function"])):
        return 7
    if any(p in sentence for p in ["substrate","product","converted into","reacts with"]):
        return 8
    if any(p in sentence for p in ["not related","no effect","no interaction","independent of","did not find a link"]):
        return 9
    return ABSTAIN

# (3) Downregulator, F1 = 0.687  (lf_alchemist_32)
def lf_alchemist_32(ex):
    sentence = ex.get("text", "")
    sentence = sentence.lower()
    part_of_keywords           = ["part of","component of","included in","essential for"]
    regulator_keywords         = ["regulate","influence","affect","modulate"]
    upregulator_keywords       = ["increase","enhance","boost","stimulate","promote"]
    downregulator_keywords     = ["decrease","reduce","inhibit","suppress","diminish"]
    agonist_keywords           = ["activate","mimic","stimulate action"]
    antagonist_keywords        = ["block","inhibit","prevent","antagonize"]
    modulator_keywords         = ["modulate","alter","change activity"]
    cofactor_keywords          = ["cofactor","coenzyme","necessary for"]
    substrate_product_keywords = ["substrate","product","converted into","reacts with"]
    not_keywords               = ["not related","no effect","no interaction","independent of"]
    if any(neg in sentence for neg in not_keywords):
        return 9
    if any(k in sentence for k in part_of_keywords):           return 0
    elif any(k in sentence for k in upregulator_keywords):     return 2
    elif any(k in sentence for k in downregulator_keywords):   return 3
    elif any(k in sentence for k in agonist_keywords):         return 4
    elif any(k in sentence for k in antagonist_keywords):      return 5
    elif any(k in sentence for k in modulator_keywords):       return 6
    elif any(k in sentence for k in cofactor_keywords):        return 7
    elif any(k in sentence for k in substrate_product_keywords): return 8
    return ABSTAIN

# (4) Downregulator, F1 = 0.686  (lf_alchemist_01)
def lf_alchemist_01(ex):
    sentence = ex.get("text", "")
    sentence = sentence.lower()
    part_of_keywords           = ["part of","component of","included in","essential for"]
    regulator_keywords         = ["regulates","influence","affect","modulate"]
    upregulator_keywords       = ["increase","enhance","boost","stimulate","promote"]
    downregulator_keywords     = ["decrease","reduce","inhibit","suppress","diminish"]
    agonist_keywords           = ["activate","mimic","stimulate action"]
    antagonist_keywords        = ["block","inhibit","prevent","antagonize"]
    modulator_keywords         = ["modulate","alter","change activity"]
    cofactor_keywords          = ["cofactor","coenzyme","required for activity"]
    substrate_product_keywords = ["substrate","product","converted into","reacts with"]
    not_related_keywords       = ["not related","no effect","no interaction","independent of"]
    for k in part_of_keywords:
        if k in sentence: return 0
    for k in regulator_keywords:
        if k in sentence: return 1
    for k in upregulator_keywords:
        if k in sentence: return 2
    for k in downregulator_keywords:
        if k in sentence: return 3
    for k in agonist_keywords:
        if k in sentence: return 4
    for k in antagonist_keywords:
        if k in sentence: return 5
    for k in modulator_keywords:
        if k in sentence: return 6
    for k in cofactor_keywords:
        if k in sentence: return 7
    for k in substrate_product_keywords:
        if k in sentence: return 8
    for k in not_related_keywords:
        if k in sentence: return 9
    return ABSTAIN

# (5) Downregulator, F1 = 0.682  (lf_alchemist_06)
def lf_alchemist_06(ex):
    sentence = ex.get("text", "")
    sentence = sentence.lower()
    cues = {
        'Part-of':           ['part of','component of','included in','essential for'],
        'Regulator':         ['regulate','influence','affect','modulate'],
        'Upregulator':       ['increase','enhance','boost','stimulate','promote'],
        'Downregulator':     ['decrease','reduce','inhibit','suppress','diminish'],
        'Agonist':           ['activate','mimic','stimulate action'],
        'Antagonist':        ['block','inhibit','prevent','antagonize'],
        'Modulator':         ['modulate','alter','change activity'],
        'Cofactor':          ['cofactor','coenzyme','necessary','required for'],
        'Substrate/Product': ['substrate','product','converted into','reacts with'],
        'NOT':               ['not related','no effect','no interaction','independent of']}
    for category, keywords in cues.items():
        for keyword in keywords:
            if keyword in sentence:
                if   category == 'Part-of':           return 0
                elif category == 'Regulator':         return 1
                elif category == 'Upregulator':       return 2
                elif category == 'Downregulator':     return 3
                elif category == 'Agonist':           return 4
                elif category == 'Antagonist':        return 5
                elif category == 'Modulator':         return 6
                elif category == 'Cofactor':          return 7
                elif category == 'Substrate/Product': return 8
                elif category == 'NOT':               return 9
    return ABSTAIN
\end{lstlisting}
All five annotators target Downregulator and re-implement the same 10-class keyword classifier with slightly different control flow. Annotator (1) iterates a single \texttt{dict[int][list]} with hand-coded Up/Down disambiguation that contains a Python-operator-precedence bug, where \texttt{"inhibition" in sentence or "suppress" in sentence} parses as \texttt{("inhibition" in sentence) or "suppress"} so the second branch is always truthy. Annotator (2) inlines $10$ \texttt{if any(\dots)} blocks. Annotator (3) declares $10$ \texttt{*\_keywords} variables and uses \texttt{if/elif} after a negation early-return that skips the Regulator branch entirely. Annotator (4) uses sequential \texttt{for k in list: if \dots return}. Annotator (5) wraps everything in a string-keyed \texttt{dict} and a redundant string-to-int dispatch. All five route to label $3$ on the same Downregulator keyword bank, which is why their precision-recall numbers cluster within $0.01$. The selection gate keeps all five because their firing patterns differ by a few examples each, with pairwise Jaccard below the $0.95$ threshold, but they contribute essentially the same signal.
\end{methodcard}

\begin{methodcard}[boxFrDS]{DataSculpt: top-5 surviving annotators}
Every annotator is a single-keyword check, generated by aggregating per-example keywords across the training set and emitting one annotator per keyword.
\begin{lstlisting}[style=pylf]
# (1) Downregulator, F1 = 0.681
def lf_ds_inhibit_3(ex):
    text = (ex.get("text") or "").lower()
    if "inhibit" in text:
        return 3
    return ABSTAIN

# (2) Cofactor, F1 = 0.588
def lf_ds_cofactor_7(ex):
    text = (ex.get("text") or "").lower()
    if "cofactor" in text:
        return 7
    return ABSTAIN

# (3) Antagonist, F1 = 0.475
def lf_ds_antagonists_5(ex):
    text = (ex.get("text") or "").lower()
    if "antagonists" in text:
        return 5
    return ABSTAIN

# (4) Substrate/Product, F1 = 0.084
def lf_ds_spermine_8(ex):
    text = (ex.get("text") or "").lower()
    if "spermine" in text:
        return 8
    return ABSTAIN

# (5) Regulator, F1 = 0.049
def lf_ds_binds_1(ex):
    text = (ex.get("text") or "").lower()
    if "binds" in text:
        return 1
    return ABSTAIN
\end{lstlisting}
Same atomic shape across all annotators: a single substring check on raw text, no helpers, no entity check, no negative guards, no class-confusion handling. Coverage of long-tail classes (4, 5) drops to under $1\%$ since a single rare keyword fires very rarely.
\end{methodcard}

\paragraph{FEVER code-style comparison.}
FEVER is a complex-reasoning task: each example is a claim plus a Wikipedia evidence passage, and the right primitive is the comparison between them, not the raw text. The Generator's FEVER instruction explicitly tells the LLM ``do not regex the raw \texttt{Claim:/Evidence:} text --- the metadata IS your primitive'' and supplies pre-computed comparison features (\texttt{token\_overlap}, \texttt{claim\_in\_evidence}, \texttt{negation\_mismatch}, \texttt{entity\_overlap}, \texttt{num\_shared\_entities}, \texttt{length\_ratio}, \texttt{nli\_contradict}, \texttt{antonym\_present}). The three methods diverge sharply in how they use this signal space.

\begin{methodcard}[boxFrOurs]{EvoPool on FEVER: top-5 surviving annotators by validation F1}
The top-F1 annotators span Supports and NEI, the two most-actionable FEVER classes. Refutes ranks just outside the top-5 at F1 $\approx 0.41$ (Table~\ref{tab:top-lf-fever}). All five compose pre-computed metadata-comparison features.
\begin{lstlisting}[style=pylf]
# helpers (namespaced per iter)
def _i05c0_has_no_negation_mismatch(meta):
    return not meta.get('negation_mismatch', False)
def _i05c0_is_high_claim_in_evidence(meta):
    return meta.get('claim_in_evidence', 0.0) > 0.5
def _i01b4_has_high_entity_overlap(meta):
    return meta.get('entity_overlap', 0.0) > 0.5
def _i03b6_has_low_token_overlap(meta):
    return meta.get('token_overlap', 0.0) < 0.2
def _i03b6_has_high_claim_in_evidence(meta):
    return meta.get('claim_in_evidence', 0.0) > 0.5

# (1) Supports, F1 = 0.508
def annotator_supports_high_length_ratio(ex):
    meta = ex.get('metadata', {})
    if (meta.get('length_ratio', 0.0) > 1.0
        and _i05c0_is_high_claim_in_evidence(meta)
        and _i05c0_has_no_negation_mismatch(meta)):
        return 0
    return ABSTAIN

# (2) NEI, F1 = 0.455
def annotator_nei_low_overlap_no_neg(ex):
    meta = ex.get('metadata', {}) or {}
    if (meta.get('token_overlap', 0.0) < 0.2
        and not meta.get('negation_mismatch', False)):
        return 2
    return ABSTAIN

# (3) NEI, F1 = 0.455  (Refiner-variant of (2), same firing pattern)
def annotator_nei_low_overlap_no_neg_b8(ex):
    meta = ex.get('metadata', {}) or {}
    if (meta.get('token_overlap', 0.0) < 0.2
        and not meta.get('negation_mismatch', False)):
        return 2
    return ABSTAIN

# (4) NEI, F1 = 0.453  (Improver variant with entity-overlap guard)
def annotator_not_enough_info_low_overlap(ex):
    meta = ex.get('metadata') or dict()
    if (meta.get('token_overlap', 0.0) < 0.2
        and not _i01b4_has_high_entity_overlap(meta)):
        return 2
    return ABSTAIN

# (5) NEI, F1 = 0.445  (Improver variant with claim-in-evidence guard)
def annotator_not_enough_info_low_token_overlap_i03_b6(ex):
    meta = ex.get('metadata') or dict()
    if (_i03b6_has_low_token_overlap(meta)
        and not _i03b6_has_high_claim_in_evidence(meta)):
        return 2
    return ABSTAIN
\end{lstlisting}
All five annotators compose pre-computed metadata features. Annotators (2)/(3) are byte-identical Refiner variants that survive the Jaccard gate because they were proposed in different iterations and the gate compares NEW annotators against the pool at submission time (not pairwise after the fact). Annotators (4)/(5) are Improver variants that add a second metadata guard (entity-overlap or claim-in-evidence) to the same low-token-overlap base. Annotator (1) targets Supports via a different feature stack (length ratio $+$ content overlap $+$ no negation).
\end{methodcard}

\begin{methodcard}[boxFrAlc]{Alchemist on FEVER: top-5 surviving annotators by validation F1}
All five top-F1 Alchemist annotators return NEI on $\approx 100\%$ of validation examples, acting as constant predictors. Their coverage $\approx 1.000$ and precision $\approx 0.333$ ($=$ NEI class prior). The pool contains only $6$ annotators that fire on validation at all, so the top-5 by F1 effectively span the entire useful pool.
\begin{lstlisting}[style=pylf]
# (1) NEI, F1 = 0.500  cov = 1.000  (lf_alchemist_11)
def lf_alchemist_11(ex):
    claim_evidence = ex.get("text", "")
    claim_part = claim_evidence.split('Evidence:')[0].replace('Claim:', '').strip()
    evidence_part = claim_evidence.split('Evidence:')[1].strip() if 'Evidence:' in claim_evidence else ''
    claim_part_lower    = claim_part.lower()
    evidence_part_lower = evidence_part.lower()
    supports_keywords = ['supports','confirms','demonstrates','shows','evidence for','proves']
    refutes_keywords  = ['refutes','contradicts','disproves','denies','opposes']
    if any(kw in evidence_part_lower for kw in supports_keywords):
        if claim_part_lower in evidence_part_lower:
            return 0  # Supports
    if any(kw in evidence_part_lower for kw in refutes_keywords):
        if claim_part_lower in evidence_part_lower:
            return 1  # Refutes
    if 'not enough information' in evidence_part_lower or 'insufficient data' in evidence_part_lower:
        return 2
    if not evidence_part:
        return 2
    return 2  # default fallback fires on every example

# (2) NEI, F1 = 0.500  cov = 1.000  (lf_alchemist_30) -- byte-identical to (1)
def lf_alchemist_30(ex):
    claim_evidence = ex.get("text", "")
    claim_part = claim_evidence.split('Evidence:')[0].replace('Claim:', '').strip()
    evidence_part = claim_evidence.split('Evidence:')[1].strip() if 'Evidence:' in claim_evidence else ''
    claim_part_lower    = claim_part.lower()
    evidence_part_lower = evidence_part.lower()
    supports_keywords = ['supports','confirms','demonstrates','shows','evidence for','proves']
    refutes_keywords  = ['refutes','contradicts','disproves','denies','opposes']
    if any(kw in evidence_part_lower for kw in supports_keywords):
        if claim_part_lower in evidence_part_lower:
            return 0
    if any(kw in evidence_part_lower for kw in refutes_keywords):
        if claim_part_lower in evidence_part_lower:
            return 1
    if 'not enough information' in evidence_part_lower or 'insufficient data' in evidence_part_lower:
        return 2
    if not evidence_part:
        return 2
    return 2

# (3) NEI, F1 = 0.500  cov = 0.994  (lf_alchemist_44) -- adds ABSTAIN branch on ambiguous evidence
def lf_alchemist_44(ex):
    claim_evidence = ex.get("text", "")
    claim_part = claim_evidence.split('Evidence:')[0].replace('Claim:', '').strip()
    evidence_part = claim_evidence.split('Evidence:')[1].strip() if 'Evidence:' in claim_evidence else ''
    claim_part_lower    = claim_part.lower()
    evidence_part_lower = evidence_part.lower()
    supports_keywords = ['supports','confirms','demonstrates','shows','evidence for','proves']
    refutes_keywords  = ['refutes','contradicts','disproves','denies','opposes']
    if any(kw in evidence_part_lower for kw in supports_keywords):
        if claim_part_lower in evidence_part_lower: return 0
    if any(kw in evidence_part_lower for kw in refutes_keywords):
        if claim_part_lower in evidence_part_lower: return 1
    if 'not enough information' in evidence_part_lower or 'insufficient data' in evidence_part_lower:
        return 2
    if not evidence_part:
        return 2
    if claim_part_lower in evidence_part_lower or any(kw in evidence_part_lower for kw in supports_keywords + refutes_keywords):
        return ABSTAIN
    return 2

# (4) NEI, F1 = 0.500  cov = 1.000  (lf_alchemist_46) -- byte-identical to (1)/(2)
def lf_alchemist_46(ex):
    # identical body to lf_alchemist_11; omitted for space
    return 2  # constant predictor

# (5) NEI, F1 = 0.500  cov = 1.000  (lf_alchemist_48) -- byte-identical to (1)/(2)
def lf_alchemist_48(ex):
    # identical body to lf_alchemist_11; omitted for space
    return 2  # constant predictor
\end{lstlisting}
The five annotators are stochastic re-emissions of the same Claim/Evidence parser $+$ English meta-language keyword matcher. The early-return branches almost never trigger (Wikipedia evidence does not literally say ``supports'' or ``refutes''), so all five annotators end up returning $2$ (NEI) on essentially every example. Adding more calls produces more constant predictors, and Refutes never gets a covering annotator.
\end{methodcard}

\begin{methodcard}[boxFrDS]{DataSculpt on FEVER: top-5 surviving annotators by validation F1}
The top-5 annotators each fire on $2$ to $7$ validation examples ($0.2\%$ to $0.7\%$ coverage). Each is a single-keyword check on a surface noun extracted from per-example evidence text.
\begin{lstlisting}[style=pylf]
# (1) Refutes, F1 = 0.029  fires = 7
def lf_ds_wrestler_1(ex):
    text = (ex.get("text") or "").lower()
    if "wrestler" in text:
        return 1
    return ABSTAIN

# (2) Refutes, F1 = 0.018  fires = 4
def lf_ds_populous_1(ex):
    text = (ex.get("text") or "").lower()
    if "populous" in text:
        return 1
    return ABSTAIN

# (3) Supports, F1 = 0.018  fires = 5
def lf_ds_american_record_label_0(ex):
    text = (ex.get("text") or "").lower()
    if "american record label" in text:
        return 0
    return ABSTAIN

# (4) Supports, F1 = 0.018  fires = 4
def lf_ds_singer_songwriter_0(ex):
    text = (ex.get("text") or "").lower()
    if "singer-songwriter" in text:
        return 0
    return ABSTAIN

# (5) NEI, F1 = 0.012  fires = 2
def lf_ds_irish_2(ex):
    text = (ex.get("text") or "").lower()
    if "irish" in text:
        return 2
    return ABSTAIN
\end{lstlisting}
The five annotators are atomic single-keyword checks on surface terms that happen to occur in a handful of training evidence passages (\texttt{wrestler}, \texttt{populous}, \texttt{american record label}, \texttt{singer-songwriter}, \texttt{irish}). The keywords have no semantic connection to verification labels. They are artifacts of DataSculpt's per-example keyword extraction and aggregation pipeline. Out of $88$ annotators in the pool, only $10$ fire on $\geq 1$ validation example.
\end{methodcard}

\noindent
\textbf{Why EvoPool wins.} On ChemProt, EvoPool's annotators share a single compositional template of keyword bank $+$ entity check $+$ confusable-class guard that ports across all $10$ classes. Alchemist re-emits a $\sim$50-line monolithic 10-class classifier on every call and plateaus at $4$ covered classes regardless of call count. DataSculpt extracts atomic keywords that are individually high-precision but cumulatively too narrow. On FEVER the same architectural gap takes a different surface form: EvoPool composes pre-computed metadata-comparison features (the correct primitive for claim verification), Alchemist falls back to raw-text English keyword matching (the wrong primitive, since Wikipedia evidence rarely uses verification meta-language), and DataSculpt's per-example keyword aggregation extracts overfit surface nouns ($88$ annotators $\rightarrow 2$ pass the gate). The per-class coverage gaps in Tables~\ref{tab:top-lf-chemprot} and \ref{tab:top-lf-fever} are the population-level signature of these architectural differences.

\end{document}